\documentclass[10pt,twocolumn,letterpaper]{article}

\usepackage[cvprfinal]{cvpr}      %

\definecolor{cvprblue}{rgb}{0.21,0.49,0.74}
\usepackage[pagebackref,breaklinks,colorlinks,allcolors=cvprblue]{hyperref}
\usepackage{booktabs}       %
\usepackage{amsfonts}       %
\usepackage{nicefrac}       %
\usepackage{microtype}      %
\usepackage{xcolor}         %
\usepackage{amsmath}        %
\usepackage{makecell}
\usepackage{amsthm}
\usepackage{algpseudocode}
\allowdisplaybreaks

\def\paperID{43374} %
\def\confName{CVPR}
\def\confYear{2026}

\title{Efficient Equivariant Transformer for Self-Driving Agent Modeling}

\author{
\textbf{Scott Xu$^{1,2}$, Dian Chen$^{\dagger}$, Kelvin Wong$^{1,2}$, Chris Zhang$^{1,2}$, Kion Fallah$^{\dagger}$, Raquel Urtasun$^{1,2}$} \\
Waabi$^1$, University of Toronto$^2$ \\
\texttt{\{sxu, kwong, czhang, urtasun\}@waabi.ai}
\vspace{-0.5em}
}

\newcommand{\refsec}[1]{Section \ref{#1}}
\newcommand{\reffig}[1]{Figure \ref{#1}}
\newcommand{\reftbl}[1]{Table \ref{#1}}
\newcommand{\refeq}[1]{Equation \ref{#1}}

\newtheorem{proposition}{Proposition}
\newtheorem{lemma}{Lemma}
\newtheorem*{remark}{Remark}
\algnewcommand{\LComment}[1]{\State \textit{// #1}}

\makeatletter
\DeclareRobustCommand\onedot{\futurelet\@let@token\@onedot}
\def\@onedot{\ifx\@let@token.\else.\null\fi\xspace}
\def\eg{\emph{e.g}\onedot} 
\def\ie{\emph{i.e}\onedot} 
 
\def\etc{\emph{etc}\onedot} \def\vs{\emph{vs}\onedot}

\makeatother

\begin{document}

\def\GA{{\mathbb{R}^*_{2,0,1}}}
\def\SE2{{\text{SE}(2)}}
\def\ourmodel{{\text{DriveGATr}}\xspace}

\maketitle

\begingroup
\renewcommand{\thefootnote}{}
\footnotetext{$^\dagger$ Work done while at Waabi.}
\endgroup

\begin{abstract}

    Accurately modeling agent behaviors is an important task in self-driving.
    It is also a task with many symmetries, such as equivariance to the order of
    agents and objects in the scene or equivariance to arbitrary roto-translations
    of the entire scene as a whole; \ie, $\SE2$-equivariance.
    The transformer architecture is a ubiquitous tool for modeling these symmetries.
    While standard self-attention is inherently permutation equivariant,
    explicit pairwise relative positional encodings have been the standard for introducing $\SE2$-equivariance.
    However, this approach introduces an additional cost that is quadratic in the number of agents,
    limiting its scalability to larger scenes and batch sizes.
    In this work, we propose DriveGATr, a novel transformer-based architecture for agent modeling that achieves
    $\SE2$-equivariance without the computational cost of existing methods.
    Inspired by recent advances in geometric deep learning, DriveGATr encodes scene elements
    as multivectors in the 2D projective geometric algebra $\GA$ and processes
    them with a stack of equivariant transformer blocks.
    Crucially, DriveGATr models geometric relationships using standard attention
    between multivectors, eliminating the need for costly explicit pairwise relative positional encodings.
    Experiments on the Waymo Open Motion Dataset demonstrate that DriveGATr is comparable to the
    state-of-the-art in traffic simulation and establishes a superior Pareto front for performance
    \vs computational cost.
\end{abstract}

\section{Introduction}

Understanding how traffic agents behave has many important applications in self-driving,
from online motion forecasting for autonomy to offline traffic modeling for simulation.
In these applications, a traffic scene typically consists of a set of geometric objects
representing traffic agents, like vehicles and pedestrians, and contextual elements,
like the map lane graph and traffic signals~\citep{vectornet,lanegcn,wayformer}.
Our goal is to learn an agent model that accurately predicts the actions of each agent
in the scene from this context.
This task has many symmetries, chief among which is equivariance to arbitrary 
2D roto-translations of the scene; \ie, $\SE2$-equivariance.
That is, if we apply a rigid transformation $\mathcal{T}\in \SE2$ to the input scene,
the outputs of the model should also transform by that same rigid transformation,
preserving their relative geometric relationships.

A natural question arises: how can we imbue our agent models with this equivariance?
A common approach is to rely on data diversity.
The hope is that with enough training examples, the model naturally learns a roughly equivariant function.
While conceptually simple, this approach can be sample and compute inefficient
and yet still struggle to generalize to new scenes as it is not truly equivariant.
Alternatively, one can directly encode this property as an inductive bias into the model,
designing the network architecture such that it is equivariant by construction.
Approaches in this category typically
explicitly model pairwise relationships between agents. For instance, one can
transform the context around each agent into its own respective coordinate frame
and process each agent's context independently~\citep{motionlm,gigaflow} or use
pairwise relative positional encoding to process all agents simultaneously~\citep{gorela,behaviorgpt}.
Unfortunately, all of these approaches
introduce an additional cost that scales quadratically with the number of agents,
limiting its suitability to scale to ever larger scenes, models, datasets, \etc.

In this paper, we propose \ourmodel, a novel architecture for modeling agents
that achieves $\SE2$-equivariance without the computational cost of existing methods.
Our approach builds on recent advances in geometric deep learning.
In particular, we develop a novel extension of the Geometric Algebra Transformer (GATr)~\citep{gatr} to $\SE2$-equivariance
and new architectural primitives that improve its efficacy for agent modeling.
First, we propose an efficient encoding of scene elements into the 2D projective geometric algebra
$\GA$, which allows for a native representation of 2D objects (\eg, points, lines)
and operators (\eg, translations, rotations) as 8-dimensional multivectors.
Next, we develop $\SE2$-equivariant neural network primitives, such as linear / bilinear layers, nonlinearities, normalization and scaled dot product attention.
In addition, we design a new primitive to transform equivariant geometric representations into invariant features, since an agent's actions are ultimately invariant.
These primitives form the basis of \ourmodel's transformer-based architecture.

Our approach has two key advantages.
First, compared to non-equivariant models, \ourmodel is equipped with a geometric inductive bias
that improves its expressivity, sample efficiency, and robustness to nuissance transformations.
Second, unlike prior equivariant methods, \ourmodel achieves this without explicit pairwise relative positional encodings.
Instead, it models geometric relationships between agents and scene elements using standard
scaled dot product attention between multivectors, which have been heavily optimized~\citep{flashattention}.
This gives us flexibility to improve the model's performance by growing each agent's context
(\eg, to all agents or the entire lane graph) or scale to larger scenes, models, and datasets efficiently.
Our experiments on the Waymo Open Motion Dataset~\citep{womd} confirm these attributes, showing that
\ourmodel achieves comparable results to the state-of-the-art in traffic simulation while
establishing a superior Pareto front in performance \vs computation cost compared to the baselines.

To summarize, we introduce \ourmodel, a tailoring of GATr to acheive efficient
$\SE2$-equivariance with internal geometric algebra representations.
We provide theoretical analysis showing it is
provably equivariant by construction. We develop novel components targeted 
for agent modeling in self-driving, and empirically validate the effectiveness
through extensive large scale experiments.

\section{Related work}
\subsection{\SE2 equivariance in traffic modeling}
A natural symmetry in traffic modeling is equivariance to 2D roto-translations of the scene;
\ie, \SE2 equivariance.
This symmetry gives rise to a powerful inductive bias for improving our traffic models'
expressivity, efficiency, and generalization~\cite{waymo-coordinate-frame}.
Indeed, equivariant models tend to outperform their non-equivariant baselines at any compute or data budget~\cite{does_equivariance_matter}. 
While the advantage of this inductive bias diminishes at scale,
the scale required far exceeds that of any public self-driving datasets today,
where equivariant models still top the leaderboards~\cite{smart,catk}.
Moreover, at a fixed compute budget, the optimal equivariant models are smaller,
providing an additional inference time advantage.

To achieve \SE2 equivariance, existing models explicitly compute relative poses between
each pair of actors and map elements in the scene~\cite{gorela,smart,catk,behaviorgpt}.
In particular, state-of-the-art models like SMART~\cite{smart,catk} use a transformer-based
architecture that extends the attention dot product with Relative Positional Embeddings (RPE),
\begin{equation}
    \alpha_{ij} = \frac{\langle Q_i, K_j + \operatorname{RPE}(\text{pos}_{i \to j}) \rangle}{\sqrt{d_k}}
\end{equation}
where $ \operatorname{RPE}(\text{pos}_{i \to j}) $ featurizes the relative pose from $ i$ to $ j $.
Explicitly computing pairwise RPEs scales quadratically with the number of scene elements,
limiting the suitability of these models to scale to larger scenes, models, and datasets.
Moreover, because they do not use standard scaled dot product attention, these models cannot
use highly optimized attention kernels like FlashAttention~\cite{flashattention}.

To address this limitation, recent work have explored ways to avoid explicit RPEs.
For example, DRoPE~\citep{drope} extends 1D rotary positional embeddings (RoPE)~\citep{roformer} to 2D,
modifying the attention dot product so that query and key embeddings are first transformed by
blockwise rotation matrices $ \mathbf{R}_i $ and $ \mathbf{R}_j $ encoding poses of $ i $ and $ j $
\begin{equation}
    \alpha_{ij} = \frac{\langle \mathbf{R}_i Q_i, \mathbf{R}_j K_j \rangle}{\sqrt{d_k}}
\end{equation}

While DRoPE bypasses the scalability issue, it lacks expressivity as it does not encode explicit geometric information about the scene elements.
It is also translation-equivariant but not rotation-equivariant.
Alternatively, VN-Transformer~\citep{vn_transformer} encodes poses as Vector-Neurons~\citep{vector_neurons}
and processes them with a transformer with SO(3) equivariant layers.
However, for numerical stability, VN-Transformer requires modifications that sacrifice true equivariance.

\subsection{Geometric Algebra Transformer}
Recently, a novel method from geometric deep learning, called the Geometric Algebra Transformer (GATr)~\cite{gatr}, proposes an E(3)-equivariant transformer-based architecture which represents geometric objects as elements of a projective geometric algebra.
This projective geometric algebra provides a single, unified 16-dimensional algebraic structure to represent all types of geometric data, including points, lines, planes, rotations, and translations.
GATr consists of E(3)-equivariant network primitives, including equivariant attention, MLP, and normalization layers, which operate directly on these geometric algebra elements.
Because it is built on the standard transformer framework, GATr is scalable, versatile, and efficient.

However, applying GATr directly to traffic agent modeling not only results in redundancy, but also inaccuracy.
Notably, the original GATr architecture is $\text{E}(3)$-equivariant, whereas a driving scene is only \SE2-equivariant due to gravity and right-hand traffic rules.
In this work, we derive and implement efficient 2D geometric encodings and equivariant layers which operate on an algebraic structure using 8 dimensions instead of 16.

\section{\ourmodel}
\label{sec:methods}

\begin{figure*}[t]
    \centering
    \begin{subfigure}[h]{\textwidth}
        \includegraphics[trim={1cm 0 0 0}, width=\textwidth]{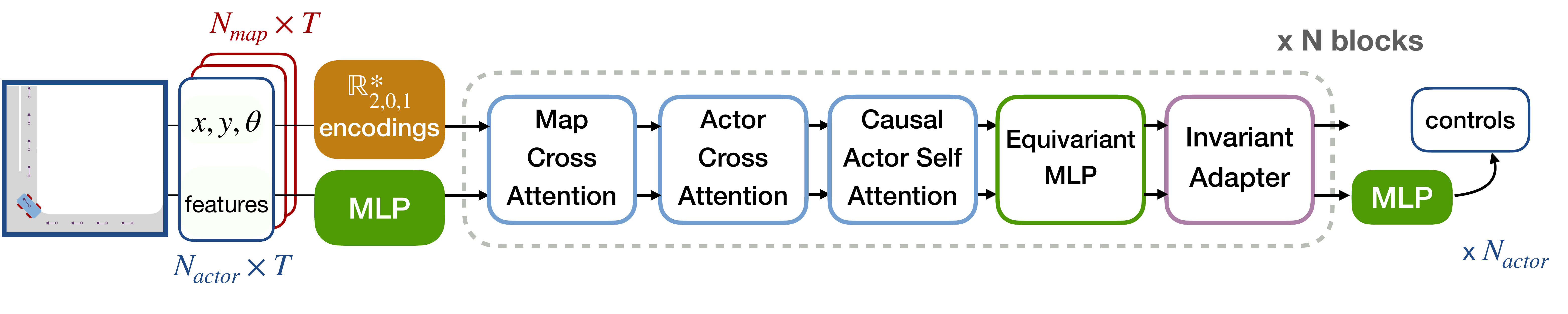}
        \vspace*{-0.6cm}
        \caption{Overview of the architecture.}\
        \label{pipeline_overview}
    \end{subfigure}
    \begin{subfigure}[b]{0.32\textwidth}
        \includegraphics[trim={7.5cm -3.5cm 60cm 0},page=2,width=\textwidth]{assets/overview.pdf}
        \caption{Multivector attention block.}
        \label{attention_overview}
    \end{subfigure}
    \begin{subfigure}[b]{0.57\textwidth}
        \includegraphics[trim={-9cm -3.5cm 60cm 0},page=3,width=0.8\textwidth]{assets/overview.pdf}
        \caption{Equivariant MLP block.}
        \label{mlp_overview}
    \end{subfigure}
    \caption{The \ourmodel architecture. \reffig{pipeline_overview} provides an overview.
        The poses and features of $N_{actor}$ agents and $N_{map}$ nodes in each scene are encoded as multivectors in $\GA$ and scalars.
        These tensors are processed by N transformer blocks, each consisting of agent and map cross attention, temporal causal self-attention, equivariant MLPs and invariant adapters.
        Each of these modules contain skip connections, closely mimicking a standard transformer.
        \reffig{attention_overview} describes the attention block.
        For all attention blocks, the query inputs are agent of interests.
        The key and value inputs are agents and map nodes for the cross attentions, attending across elements per-timestep.
        For self-attention, attention spans the temporal axis per-agent.
        \reffig{mlp_overview} describes the equivariant MLP block.
        \reftbl{tbl:encodings} provides details on $\GA$ encodings.
        \refsec{sec:equivariant_layers} provides details on each of the equivariant primitive layers.
        \refsec{sec:invariant_adapter} describes the invariant adapter block.
    }
    \label{overview}
\end{figure*}

This work is motivated by the goal of designing an efficient and expressive approach to equivariant traffic modelling.
As we will see later, a key idea in our proposed method will be to extend the attention dot product with more invariant dot products.
For example, if $h_{Q_i}$ and $h_{K_j}$ are 2D heading vectors, one could compute SE(2)-invariant attention logits via
\begin{equation}
    \alpha_{ij}=\frac{\langle Q_i,K_j\rangle+\langle h_{Q_i},h_{K_j}\rangle}{\sqrt{d_k+2}}
\end{equation}
so that query tokens attend more strongly to key tokens with similar heading.
However, this naive example has several limitations:
(i) $h_{Q_i}$ and $h_{K_j}$ are not learnable features, so this approach is not expressive;
(ii) this only reasons about headings, and not positions; and
(iii) in this example, the token features do not contain any explicit pose information.
Using geometric algebra encodings will allow us to address all of these limitations.

The geometric algebra $\GA$ gives us a unified and efficient way to represent 2D geometries, including points, lines, translations, and rotations.
We will show how to encode 2D poses as elements of this algebra, called \textit{multivectors}; in particular, the multivector representing the pose $(x,y,\theta)$ will have components corresponding to $x,y,\cos\theta$, and $\sin\theta$.
To make our geometric features learnable, we will derive \SE2-equivariant layers $\mathcal{L}:\GA\to\GA$, including linear layers, normalization, and nonlinearities.
The geometric algebra is also equipped with an invariant inner product $\langle\cdot,\cdot\rangle:\GA^2\to\mathbb{R}$, which allows us to compute SE(2)-invariant attention logits via
\begin{equation}
    \alpha_{ij}=\frac{\langle Q_i,K_j\rangle+\langle Q^{MV}_i,K^{MV}_j\rangle}{\sqrt{d_k+d^{MV}}} \label{eq:attn}
\end{equation}
where $Q^{MV}_i$ and $K^{MV}_j$ are multivector geometric features and $d^{MV}$ is the dimension of the invariant inner product.
Notably, ~\refeq{eq:attn} can be computed as standard dot-product attention via efficient kernels by concatenating $Q, K$ with (certain components of) $Q^{MV}, K^{KV}$.

In the next sections, we first provide an overview of \ourmodel.
We then provide a brief preliminary on generic geometric algebra and describe the details of \ourmodel, which is an efficient 2D extension of GATr~\citep{gatr}
with modifications for agent modeling.
We refer the readers to the supplementary material for additional preliminaries.

\subsection{DriveGATr Architecture}

Given a history of agent states $\mathcal{A}_t=\{a_1^{1:t}, \dots, a_N^{1:t} \}$ and map nodes $\mathcal{M}=\{m_1, \dots, m_M\}$, \ourmodel predicts an action for each agent: $\mu = \text{\ourmodel}(\mathcal{A}, \mathcal{M})$ at the current timestep.
A dynamics model $f$ advances the agent states: $a_i^{t+1} = f(a_i^t, \mu_i)$.
We repeat this process to unroll the traffic scene in $T$ timesteps.
By construction, the trajectories unrolled by the dynamics model are equivariant relative to the scene.
\reffig{overview} provides an overview of the architecture.

For every token in $\mathcal{A}$ and $\mathcal{M}$, we encode its 2D global\footnote{Theoretically, the choice of coordinate frame can be arbitrary due to SE(2) equivariance, but for numerical stability we use the ego's initial pose.} position $(x,y)$ with heading $\theta$ into a single multivector, using the bivector components to encode the point $(x,y)$, and the vector components to encode the line passing through $(x,y)$ with direction angle $\theta$.

We also encode invariant features in auxiliary scalars using a MLP.
For agent states, we encode their speed and bounding box dimensions in the auxiliary scalars.
For map segments, we encode their length, width, curvature, speed limit, and boundary types.

Our transformer consists of a sequence of factorized attention blocks, where each block consists of
(i) multi-head cross attention between the agent states and map tokens, per timestep;
(ii) multi-head self attention between the agent states, per timestep;
(iii) multi-head causal self-attention between agent states, per agent;
(iv) an equivariant MLP; and
(v) an invariant adapter described in Section~\ref{sec:invariant_adapter}.

We use a final MLP on the auxiliary scalars to decode invariant action logits for each agent.
We find it beneficial to attend agent states to the entire map in our cross-attention layer, rather than the nearest few map tokens.

\subsection{Geometric algebra}

\paragraph{Geometric product.} The geometric product $xy$ of $x, y \in \mathbb{R}^n$ is a way of multiplying vectors together.
The geometric product is defined to be bilinear and associative, and is characterized by the fundamental equation $v^2=\langle v,v\rangle$ where $\langle\cdot,\cdot\rangle$ is the standard inner product.
This induces an algebra, called a \textit{Euclidean geometric algebra}, whose elements are called \textit{multivectors}.
From the fundamental equation one can deduce that the geometric product is anticommutative on orthogonal basis vectors,
i.e. $e_ie_j=-e_je_i$ for $i \neq j$.
Consequently, for $n=2$ the Euclidean geometric algebra consists of multivectors of the form $x=x'+x_1e_1+x_2e_2+x_{12}e_{12}$, where $x',x_1,x_2,x_{12}$ are real coefficients and $e_{12}:=e_1e_2$.

This structure is useful because it provides a unified framework for representing and manipulating geometric objects.
For example, in 2D, we can rotate a vector $v=(x,y)$ by an angle $\theta$ using the geometric product:
\begin{align}
    R_{\theta}v&=(\cos\theta-(\sin\theta)e_{12})(xe_1+ye_2)\\
    &=(x\cos\theta-y\sin\theta)e_1+(x\sin\theta+y\cos\theta)e_2
\end{align}
where we have used that $e_1^2=e_2^2=1$ and $e_2e_1=-e_1e_2$.

\paragraph{Projective geometric algebra.} Euclidean geometric algebra is only able to represent linear transformations.
A translation is not a linear operation, but we can solve this by adding an extra, special dimension, similar to how homogeneous coordinates are used in computer graphics.

A projective geometric algebra is obtained by first adjoining an orthogonal basis vector $e_0$ to $\mathbb{R}^n$ with norm zero, that is, $e_0^2=0$.
For $n=2$, the resulting algebra, denoted by $\GA$, is 8-dimensional, spanned by the scalar 1, three vectors $e_0,e_1,e_2$, three \textit{bivectors} $e_{01},e_{20},e_{12}$, and one \textit{pseudoscalar} $e_{012}$.

\paragraph{Geometric algebra primitives.}
We define additional geometric algebra primitives that will be useful later.

The \emph{wedge product} is another way of multiplying vectors together, defined by $x\wedge y=(xy-yx)/2$.
It is bilinear and associative, and has the property that $v\wedge v=0$ for any vector $v$.
Also, for distinct basis vectors $e_i,e_j$, from $e_je_i=-e_ie_j$ one can derive that $e_i\wedge e_j=e_ie_j$ concides with the geometric product.
The wedge product also extends to general multivectors.

The \emph{multivector dual} $x\mapsto x^*$ is a linear operator, defined for a basis multivector $x$ as the multivector such that $x\wedge x^*$ equals the pseudoscalar.

The \emph{join} operator of two multivectors is defined as $\textrm{Join}(x,y)=(x^*\wedge y^*)^*$.

The $k$-\emph{blade projection} $\langle x \rangle_k$ is defined to select specific components of a multivector $x\in\GA$
\begin{align}
    \langle x\rangle_0&:=x'\\
    \langle x\rangle_1&:=x_0e_0+x_1e_1+x_2e_2\\
    \langle x\rangle_2&:=x_{01}e_{01}+x_{20}e_{20}+x_{12}e_{12}\\
    \langle x\rangle_3&:=x_{012}e_{012}
\end{align}

We define the \emph{invariant inner product} $\langle\cdot,\cdot\rangle$ between two multivectors in $\GA$ as
\begin{equation}
    \langle x,y\rangle:=x'y'+x_1y_1+x_2y_2+x_{12}y_{12}\in\mathbb{R}
\end{equation}
which ignores components containing $e_0$.

\subsection{Encodings}

\begin{table*}[t]
    \centering
    \begin{tabular}{ccc}
        \toprule
        Geometry / transform                             & Representation                                    & Transform inverse                                  \\
        \cmidrule(lr){1-1}      \cmidrule(lr){2-2}  \cmidrule(lr){3-3}
        Line $ax+by+c=0$                            & $ae_1+be_2+ce_0$                                  & -                                                 \\
        \rule{0pt}{3ex} 
        Point $(x,y)$                               & $xe_{20}+ye_{01}+e_{12}$                          & -                                                 \\
        \rule{0pt}{3ex} 
        Translation by $(a,b)$                      & $1-\frac{a}{2}e_{01}+\frac{b}{2}e_{20}$           & $1+\frac{a}{2}e_{01}-\frac{b}{2}e_{20}$           \\
        \rule{0pt}{3ex} 
        Counterclockwise rotation of angle $\theta$ & $\cos\frac{\theta}{2}-\sin\frac{\theta}{2}e_{12}$ & $\cos\frac{\theta}{2}+\sin\frac{\theta}{2}e_{12}$ \\
        \bottomrule
    \end{tabular}
    \caption{2D encodings of geometries and transforms.}
    \label{tbl:encodings}
    \vspace{-1em}
\end{table*}

We encode 2D objects and transformations into $\GA$ using~\reftbl{tbl:encodings} so that geometric features can be computed using the operations defined above.
Following~\citet{gatr}, we encode 2D geometries and transforms such that the result of applying a transform to a geometry is given by a sandwich product.
For example, if $t$ is a multivector representing a translation by $(a,b)$ and $p$ is a multivector representing the point $(x,y)$, then $tpt^{-1}$ is a multivector representing the point $(x+a,y+b)$.
Given this property, we can state that a mapping $\mathcal{L}:\GA\to\GA$ is \SE2 equivariant if for any roto-translation $u\in\GA$ and input $x\in\GA$, we have
\begin{equation}
    \mathcal{L}(uxu^{-1}) = u\mathcal{L}(x)u^{-1}
    \label{eq:equivariance2}
\end{equation}

The encodings in~\reftbl{tbl:encodings} additionally satisfy the properties that
(i) the intersection point of two lines $\ell_1,\ell_2\in\GA$ is given by their wedge product $p:=\ell_1\wedge\ell_2$; and
(ii) the line joining two points $p_1,p_2\in\GA$ is given by a join operator $\ell:=\textrm{Join}(p_1,p_2)$.
For a complete proof of the properties discussed above, please refer to the supplemental material.

\subsection{Equivariant layers}
\label{sec:equivariant_layers}

In this section we define the primitive equivariant layers of our model.
For the full proofs that each of our layers satisfy \refeq{eq:equivariance2}, please refer to the supplemental material. \\\\
\textbf{Linear layers.}
We define $SE(2)$-equivariant linear maps $\phi:\GA\to\GA$ by the form
\begin{equation}
    \phi(x)=\sum_{k=0}^{3}w_k\langle x\rangle_k+\sum_{k=0}^{2}v_ke_0\langle x\rangle_k+\sum_{k=0}^{2}u_ke_{012}\langle x\rangle_k
\end{equation}
where $w_k,v_k,u_k$ are parameters and $\langle\cdot\rangle_k$ denotes blade projection.
Since $\phi$ is linear and the map $x\mapsto uxu^{-1}$ is linear for any fixed operator $u$, to show that $\phi$ is equivariant it suffices to verify that $\phi(uxu^{-1})=u\phi(x)u^{-1}$ for any \textit{basis} multivector $x$ and rotation or translation $u$; please refer to the supplement for details.

We then define affine layers between multivector arrays $\Phi:\GA^{c}\to\GA^{c'}$ via
\begin{equation}
    \Phi(x)_i=\sum_{j=1}^{c}\phi^{(i,j)}(x_j)+b_i\hspace{0.2in}\forall i=1,\ldots,c'
\end{equation}
where $\phi^{(i,j)}$ are equivariant linear layers and $b_i$ are learnable scalar bias terms. \\\\
\textbf{Geometric bilinears.}
To increase expressivity, we include the geometric product which is equivariant as $(uxu^{-1})(uyu^{-1})=u(xy)u^{-1}$ for any $x,y$ and operator $u$, and we include the join operator which is also equivariant~\citet{gatr}.
These are combined to form the equivariant geometric layer $\textrm{Geometric}(w,x,y,z)=\textrm{Concatenate}(wx, \textrm{Join}(y,z))$. \\\\
\textbf{Nonlinearities and normalization.}
We define the equivariant scalar-gated activation $\textrm{GatedRELU}(x)=\textrm{RELU}(\langle x\rangle_0)x$. We define equivariant normalization over channels by $\textrm{LayerNorm}(x) = x / \sqrt{ \mathbb{E}\langle x,x\rangle + \varepsilon}$ where $\langle\cdot,\cdot\rangle$ is the invariant inner product. \\\\
\textbf{Scaled dot-product attention.}
For multivector query and key tokens $q,k\in\GA^C$, the corresponding invariant attention logit may be computed as
\begin{equation}
    \frac{\sum_{c=1}^C\langle q_c,k_c\rangle}{\sqrt{4C}}
\end{equation}
where $\langle\cdot,\cdot\rangle$ is the invariant inner product.
We find it crucial to follow~\citet{gatr} and extend the query and key multivectors with ``distance awareness'' features
\begin{align}
    \phi(q)&=\frac{q_{12}}{q_{12}^2+\varepsilon}\begin{pmatrix}q_{12}^2\\q_{01}^2+q_{20}^2\\q_{01}q_{12}\\q_{20}q_{12}\end{pmatrix}\\
    \psi(k)&=\frac{k_{12}}{k_{12}^2+\varepsilon}\begin{pmatrix}-k_{01}^2-k_{20}^2\\-k_{12}^2\\2k_{01}k_{12}\\2k_{20}k_{12}\end{pmatrix}
\end{align}
which have the property that $\phi(q)\cdot\psi(k)$ is invariant, and moreover if the bivector components of $q$ and $k$ represent points with $q_{12}=k_{12}=1$ then $\phi(q)\cdot\psi(k)=-\frac{1}{(1+\varepsilon)^2}[(k_{01}-q_{01})^2+(k_{20}-q_{20})^2]$ which is proportional to the negative squared distance between the two points.

Combining the multivector features, extended features, and auxiliary scalar features $q^s,k^s\in\mathbb{R}^{C'}$, the invariant attention logits are computed as
\begin{equation}
    \frac{\sum_{c=1}^C\langle q_c,k_c\rangle+\sum_{c=1}^C \phi(q_c)\cdot\psi(k_c)+\sum_{c=1}^{C'}q^s_c k^s_c}{\sqrt{4C+4C+C'}}.
    \label{eq:attention}
\end{equation}
As in a standard transformer, the attention logits are softmaxed and used to take a linear combination of the value tokens.
Notably, \refeq{eq:attention} can be computed as one dot-product by concatenating the three terms for both query and key,
allowing drop-in usage of efficient attention kernels.

\subsection{Invariant adapter}
\label{sec:invariant_adapter}

The output of our model is an invariant action which can be decoded from the auxiliary scalar features.
However, the multivector features contain important geometric information that should be used as well.
To this end, we introduce a novel transform which converts each agent's equivariant geometric features into invariant features by transforming the multivectors into a local coordinate frame.

Specifically, given equivariant multivector features $v\in\GA^{N\times d}$ and auxiliary scalar features $s\in\mathbb{R}^{N\times d'}$, for each agent $n=1,\ldots,N$ we compute
\begin{equation}
    s_{n}\leftarrow\textrm{MLP}(\textrm{flatten\_components}(u_n v_n u_n^{-1}))+s_{n}
    \label{eq:invariant_adapter}
\end{equation}
where $u_n\in\GA$ is the roto-translation transforming global poses into the coordinate frame of agent $n$.
Similar to all other layers,~\refeq{eq:invariant_adapter} is batched across scenes and agent.

For each $n$, $u_n v_n u_n^{-1}$ is invariant because it converts global equivariant features $v_n$ into agent-centric features.
Therefore, this transform preserves the invariance of the auxiliary scalars.

\begin{table*}[t!]
\small
\centering
\begin{tabular}{ l c c c c c c c} 
\toprule
Method                  & \# Params 
                        & Finetuned 
                        & RMM $\uparrow$
                        & Kinematic $\uparrow$
                        & Interactive $\uparrow$
                        & Map-based $\uparrow$
                        & minADE $\downarrow$ \\
\cmidrule(lr){1-3}      \cmidrule(lr){4-8}
BehaviorGPT             & 3M & - & 0.7438 & 0.4254 & 0.7233 & 0.7976 & 1.3804 \\
SMART-7M                & 7M & - & 0.7678 & 0.4894 & 0.7306 & 0.8163 & 1.3532 \\
SMART-7M + CAT-K        & 7M & \checkmark & 0.7709 & 0.4937 & 0.7333 & 0.8185 & 1.2953 \\
\cmidrule(lr){1-3}      \cmidrule(lr){4-8}
Transformer             & 3M & - & 0.7257 & 0.4759 & 0.7058 & 0.7654 & 1.6720 \\
Transformer + RPE       & 3M & - & 0.7251 & 0.4708 & 0.6953 & 0.7808 & 1.7486 \\
Transformer + DRoPE     & 3M & - & 0.7206 & 0.4629 & 0.7017 & 0.7604 & 1.9193 \\
\ourmodel-3M (ours)        & 3M & - & 0.7620 & 0.4859 & 0.7264 & 0.8103 & 1.4192 \\
\ourmodel-30M (ours)        & 30M & - & 0.7636 & 0.4890 & 0.7272 & 0.8120 & 1.3682 \\
\bottomrule
\end{tabular}
\caption{
\textbf{Results on WOSAC 2024 validation set.}
RMM is the Realism Meta Metric used for ranking.
\ourmodel achieves the best RMM among its baselines.
Compared to publicly available state-of-the-art methods, \ourmodel achieves comparable realism
without top-$k$ sampling and closed-loop fine-tuning.
}
\label{table:sota-comparison}
\end{table*}

\section{Experiments}

We evaluate \ourmodel in traffic simulation --- a representative task for modeling traffic agents.
Our experiments demonstrate two key properties:
(1) comparable performance to the state-of-the-art;
and (2) a superior Pareto front in performance \vs computational costs compared to prior work.

\subsection{Comparison to state-of-the-art}

In traffic simulation, equivariant models top the leaderboard but sacrifice computational efficiency
due to their explicit relative positional encodings.
We propose \ourmodel as an alternative that achieves \SE2-equivariance without the high computational costs of existing methods.
Our first experiment seeks to answer the question:
Does \ourmodel match the state-of-the-art?

\paragraph{Dataset \& metrics.}
We evaluate our method on the Waymo Open Motion Dataset (WOMD)~\citep{womd}
using the protocol proposed by the Waymo Open Sim Agents Challenge (WOSAC)~\citep{wosac}.
Given 1s of context in a traffic scenario, the task is to generate 32 rollouts of
8s closed-loop simulations at 10 Hz, each containing the trajectories of every agent in the scenario.
The rollouts are evaluated using distribution matching metrics along three dimensions---
kinematic, interactive, and map-based metrics---which are then summarized into a single
Realism Meta-Metric (RMM) score. We also report the lowest displacement error, averaged over time, across rollouts (minADE).

\paragraph{Baselines.}
We compare against the state-of-the-art on WOSAC:
\textbf{BehaviorGPT}~\citep{behaviorgpt}, \textbf{SMART}~\citep{smart}, and \textbf{CAT-K}~\citep{catk}.
BehaviorGPT and SMART use explicit relative positional encodings (RPEs).
CAT-K further trains SMART with closed-loop fine-tuning.

We also compare against four baselines representative of how state-of-the-art methods achieve \SE2-equivariance.
These baselines share the same base architecture and learning algorithm as \ourmodel,
differing only in how they introduce equivariance.
\textbf{Transformer} is a scene-centric model that uses
roto-translation data augmentation to approximate equivariance and is based on Trajeglish~\citep{trajeglish}.
\textbf{Transformer + DRoPE}~\citep{drope} uses rotary positional encodings (RoPE)
to achieve translation (but not rotation) equivariance.
We faithfully reproduce DRoPE using publicly available information.
\textbf{Transformer + RPE}~\citep{gorela} uses explicit RPEs to achieve equivariance.
Like other methods that use RPEs, we limit each agent's context to a neighbourhood (eight agents and four map tokens)
so that it fits in GPU memory during training.

\paragraph{Implementation details.}
We use a clustered discretized action space.
Starting with a large number of state-to-state transitions, we use a k-disk procedure~\citep{trajeglish} to select a vocabulary of 2048 actions for each agent class (vehicle, cyclist, pedestrian).
We then use this vocabulary to discretize the trajectories in our training dataset, and use a cross entropy loss to train our model to predict the next action for each agent.
We use 16 multivector channels, and six factorized attention blocks.
Our 3M variant uses auxilliary dimension 128 for a total of 2.7 million parameters, and our 30M variant uses auxilliary dimension 512 for 29.4 million parameters.
Similar to prior methods~\citep{smart,catk}, we embed each query token with its prediction at the last timestep as well as its actor class.
For all models and baselines, we train for 250,000 steps 
with learning rate $10^{-3}$ and cosine annealing.
We conduct analysis on the 3M variant, except for the comparison in~\reftbl{table:sota-comparison}, for efficiency purposes.

\paragraph{Results.}
Table \ref{table:sota-comparison} reports results on the WOSAC 2024 validation set.
\ourmodel achieves comparable realism scores to the state-of-the-art despite its relatively small model size.
In fact, \ourmodel achieves a 2\% improvement in RMM over BehaviorGPT---a state-of-the-art model of similar size.
The 30M variant of~\ourmodel achieves comparable RMM with SMART~\citep{smart}, the top method on the WOSAC leaderboard.
For simplicity, we did not explore closed-loop fine-tuning, top-$k$ sampling, and nucleus sampling
in our experiments, but we expect that these techniques will improve performance if tuned properly.

\ourmodel marks a significant improvement over our baselines, despite sharing largely the same base architecture.
From a modeling perspective, \ourmodel has two important advantages over the baselines:
in contrast to Transformer and Transformer + DRoPE, \ourmodel is \SE2-equivariant by construction;
and in contrast to Transformer + RPE, \ourmodel can grow each agent's context to encompass all agents
and map tokens in the scene whereas the baseline is limited to a small neighbourhood due to the
memory footprint of its explicit RPEs.

\begin{figure*}[t]
\centering
\begin{subfigure}[b]{0.3\textwidth}
\centering
\includegraphics[trim={6pt 6pt 6pt 6pt}, clip, width=\textwidth]{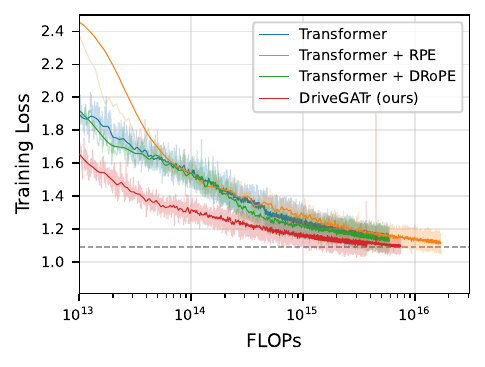}
\end{subfigure}
\begin{subfigure}[b]{0.3\textwidth}
\centering
\includegraphics[trim={6pt 6pt 6pt 6pt}, clip, width=\textwidth]{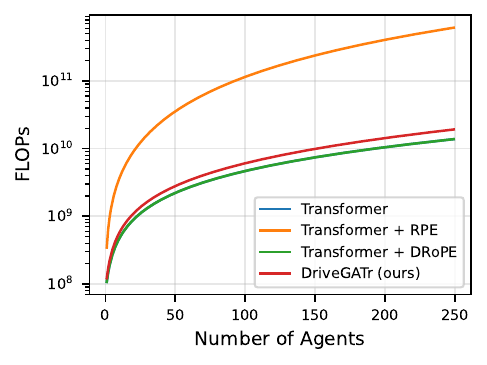}
\end{subfigure}
\begin{subfigure}[b]{0.3\textwidth}
\centering
\includegraphics[trim={6pt 6pt 6pt 6pt}, clip, width=\textwidth]{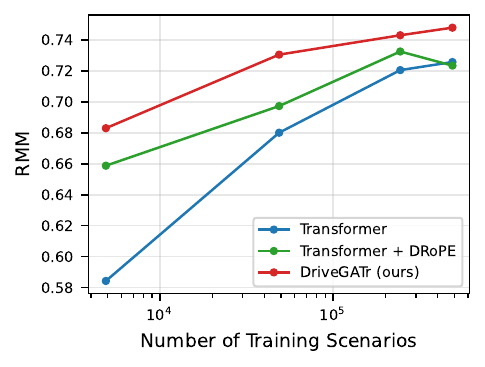}
\end{subfigure}
\caption{
\textbf{Training curves.}
We compare the envelope of minimal training loss per FLOP (\textbf{left}),
compute efficiency as the number of agents scale (\textbf{middle}),
and sample efficiency as the training dataset grows (\textbf{right}).
Compared to the baselines, \ourmodel establishes a superior Pareto front in performance
and computation cost thanks to its compute and sample efficiency.
}
\label{fig:scaling}
\end{figure*}

\subsection{Scaling analysis}
Our next set of experiments evaluate \ourmodel's scalability with respect to training compute and data.

\paragraph{Experiment setup.}
For each of our baselines, we train two models with varying training FLOP profiles on the full WOMD training dataset.
In our experiments, we vary batch size since we found that varying model size, learning rates, \etc was inconsequential.
Then, for each training run, we smooth its training loss curve with Gaussian smoothing.
Finally, from these training loss curves, we extract the envelope of minimal training loss per FLOP.
This training loss envelope gives us an estimate of how each baseline's performance improves as
we scale the available training compute.

\paragraph{Results.}
Figure \ref{fig:scaling} (left) shows that \ourmodel's training loss envelope is significantly lower
than those of the baselines.
For any number of training FLOPs, \ourmodel achieves the lowest training loss,
demonstrating its superior scalability with training compute.
This comes down to \ourmodel 's two key advantages:
(1) higher sample efficiency due to \SE2-equivariance; and
(2) higher compute efficiency due to how it achieves \SE2-equivariance.

To demonstrate \ourmodel 's compute efficiency, we analyze how each model's number of FLOPs scale with
number of agents during inference.
From Figure \ref{fig:scaling} (middle), we see that \ourmodel has significantly lower computational overhead than Transformer + RPE.
The latter requires computing explicit RPE, which introduces an additional overhead that is quadratic in the number of agents.
\ourmodel avoids this overhead by modeling geometric relationships using standard attention between multivectors.

To demonstrate \ourmodel 's sample efficiency, for each model family, we train on a range of dataset sizes: 1\%, 10\%, 50\%, and 100\%.
Next, we plot each model's RMM on the validation split against the number of scenarios seen during training.
Following~\citet{catk}, we evaluate on 2\% of the validation set due to the high cost of evaluation.
In Figure \ref{fig:scaling} (right), we see an intuitive ordering---models with more
inductive bias built in tend to have higher sample efficiency.

\subsection{Additional analysis}
\label{sec:additional_analysis}

\begin{figure*}[t]
\centering
\begin{subfigure}[b]{0.285\textwidth}
\centering
\includegraphics[trim={30cm 12cm 20cm 32cm}, clip,width=\textwidth]{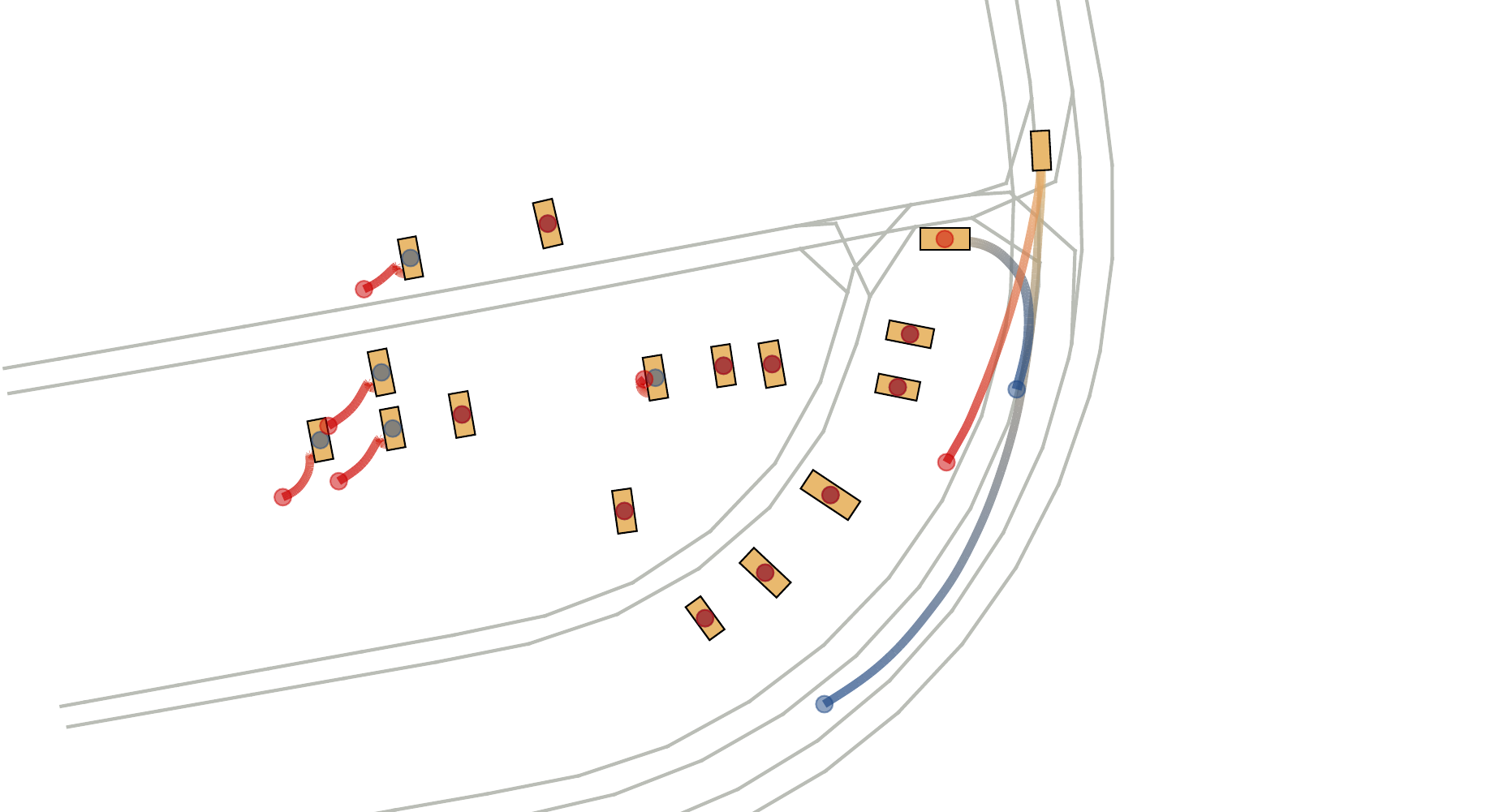}\\
\vspace{0.5cm}
\includegraphics[trim={30cm 20cm 20cm 25cm}, clip,width=\textwidth]{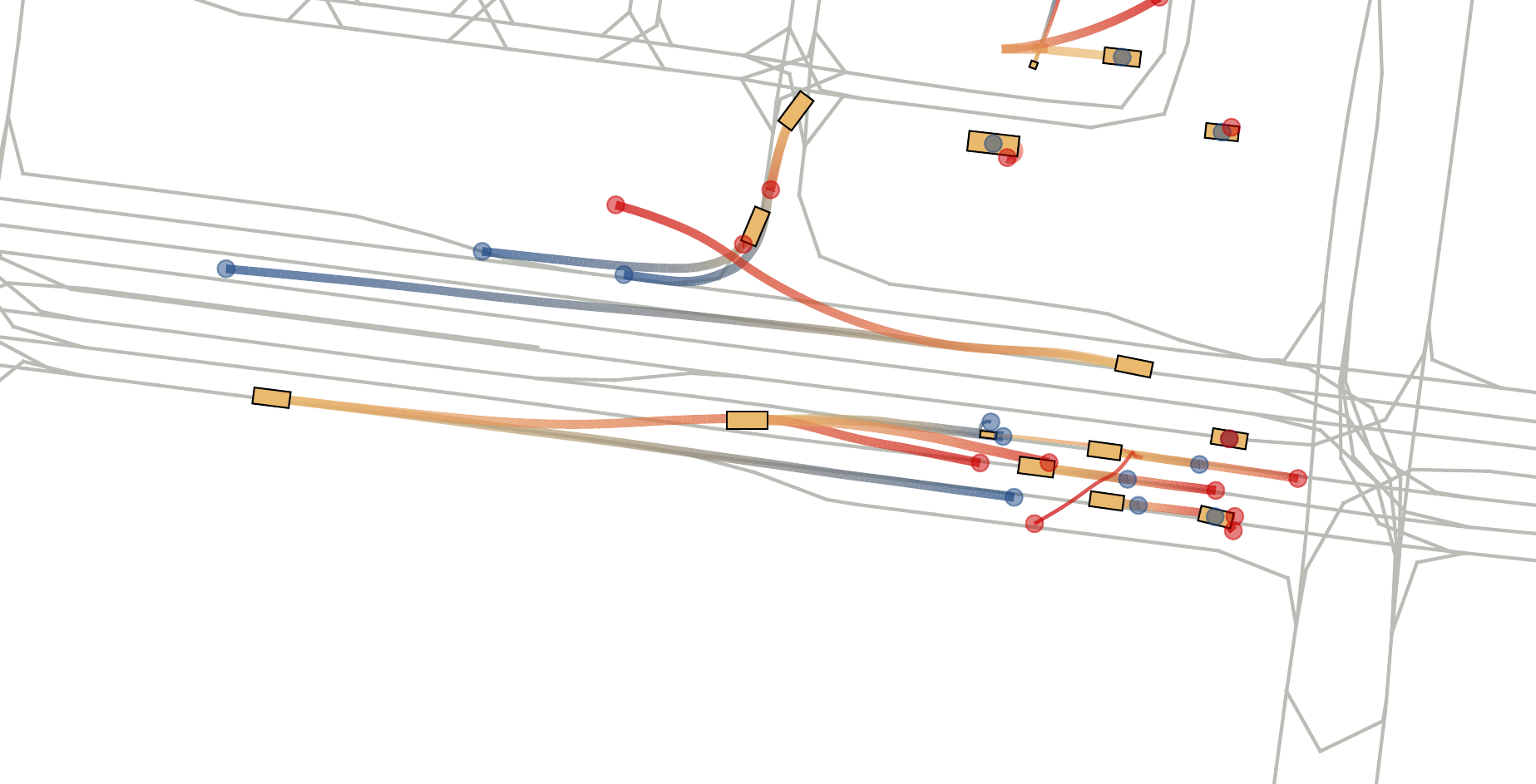}
\end{subfigure}
\begin{subfigure}[b]{0.285\textwidth}
\centering
\includegraphics[trim={30cm 12cm 20cm 32cm}, clip,width=\textwidth]{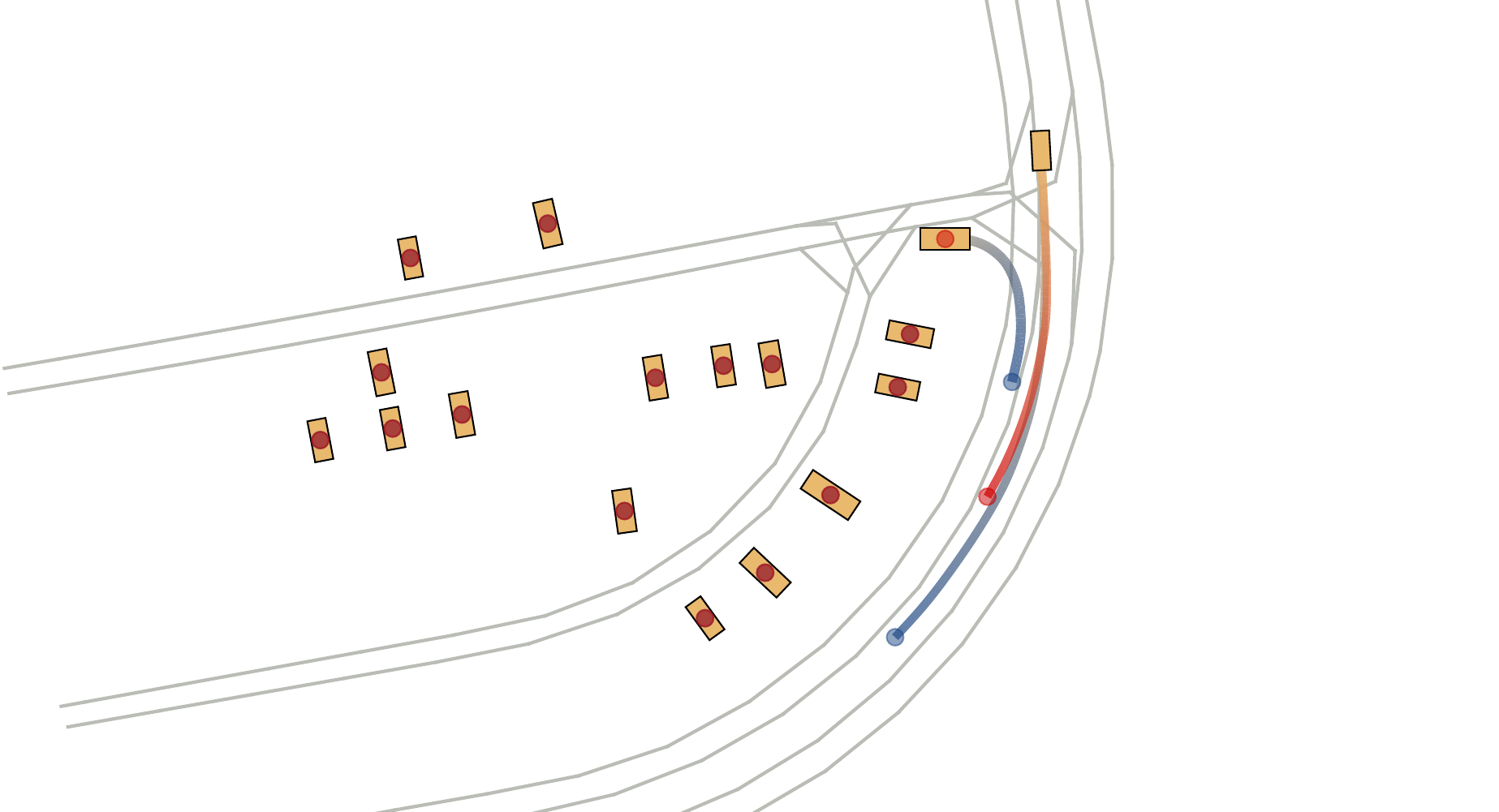}\\
\vspace{0.5cm}
\includegraphics[trim={30cm 20cm 20cm 25cm}, clip,width=\textwidth]{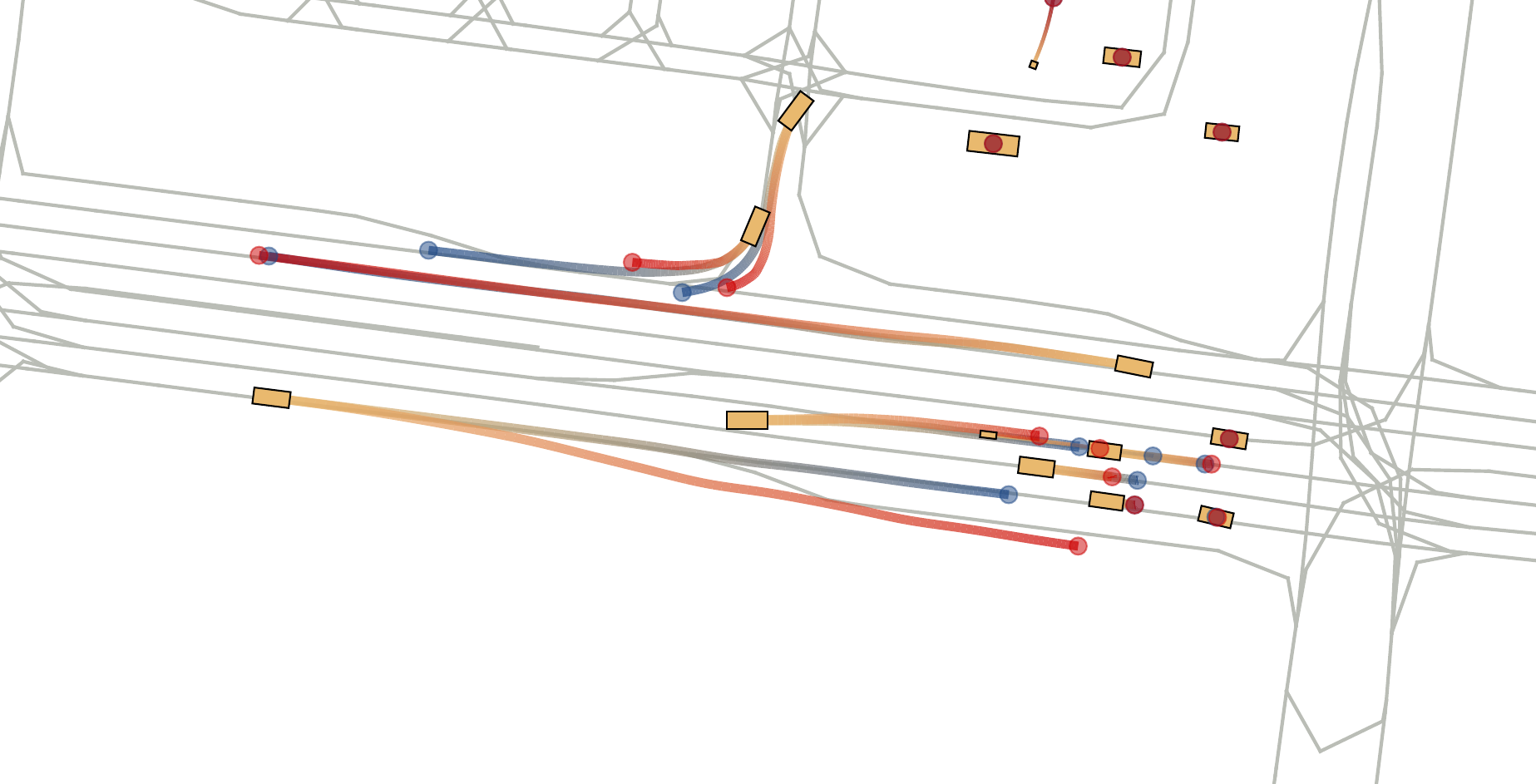}
\end{subfigure}
\begin{subfigure}[b]{0.285\textwidth}
\centering
\includegraphics[trim={30cm 12cm 20cm 32cm}, clip,width=\textwidth]{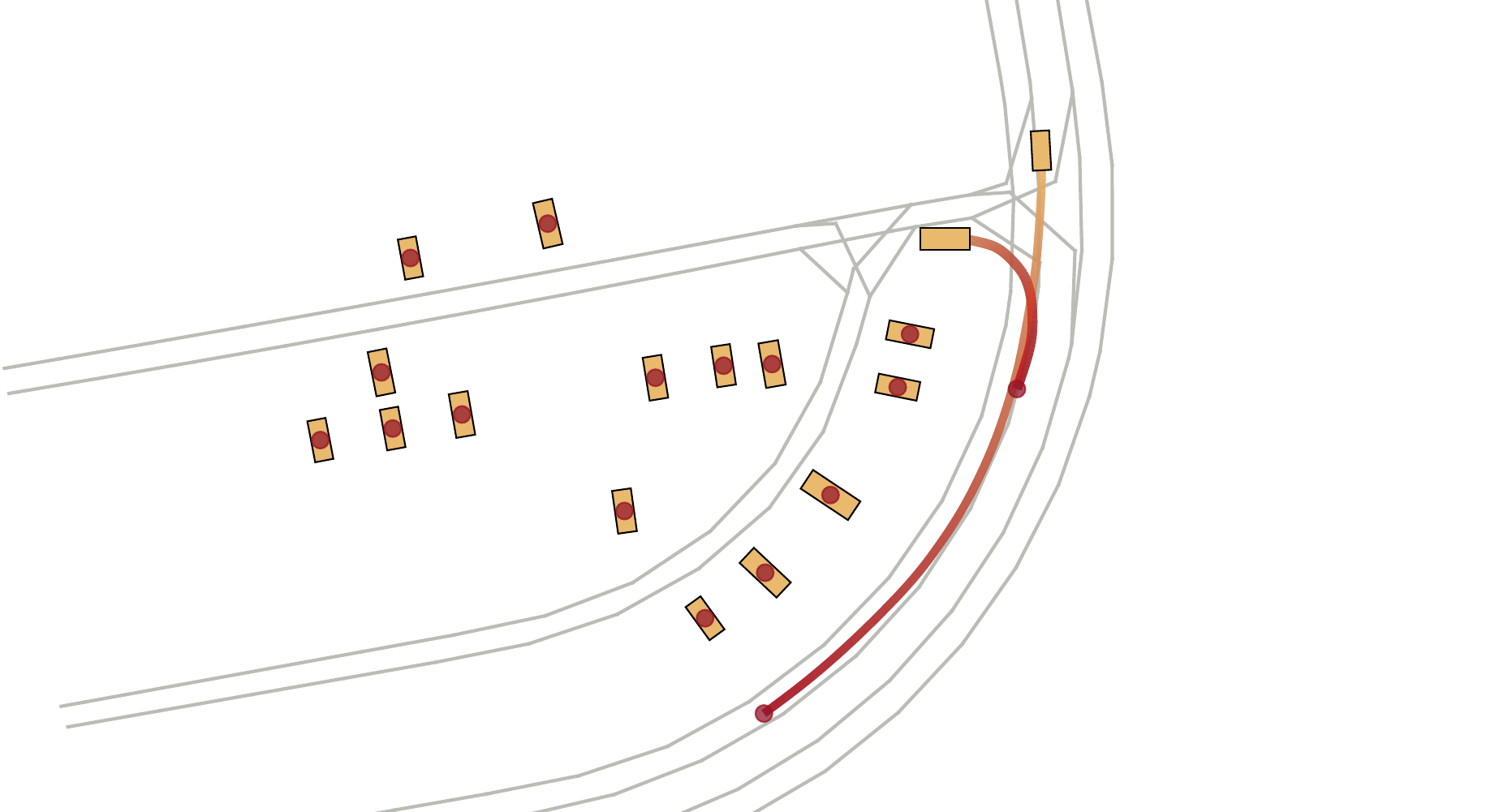}\\
\vspace{0.5cm}
\includegraphics[trim={30cm 20cm 20cm 25cm}, clip,width=\textwidth]{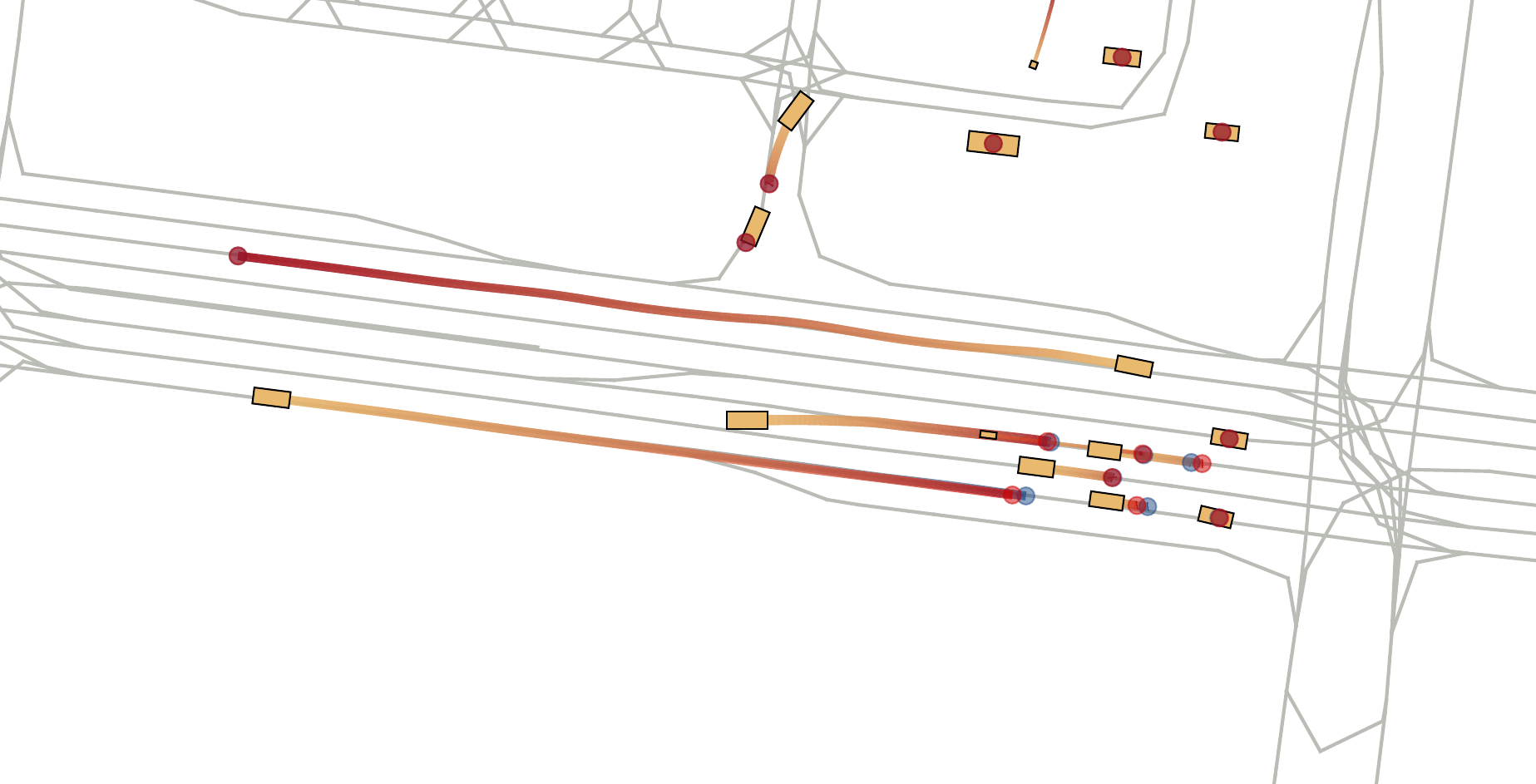}
\end{subfigure}
\caption{
\textbf{Robustness to roto-translations.}
We compare Transformer (\textbf{left}), Transformer + DRoPE (\textbf{middle}), and \ourmodel (\textbf{right})'s
robustness to roto-translations.
In each figure, we overlay rollouts from the original coordinate frame
vs one rotated by 90\textdegree and translated by 100m forward.
Blue trajectories visualize model predictions in the original input, and red visualize predictions in the transformed scene.
\ourmodel produces consistent trajectories despite closed-loop execution, demonstrating its robustness to roto-translations. 
}
\label{fig:robustness}
\end{figure*}

\paragraph{Robustness to roto-translations.}
\SE2-equivariance not only improves sample efficiency but also robustness to nuissance transformations.
In Figure \ref{fig:robustness}, we compare \ourmodel's robustness to roto-translations against Transformer and Transformer + DRoPE.
In each figure, we overlay rollouts from when the model is given inputs in its original coordinate frame \vs
one rotated by 90\textdegree and translated by 100m forward.
Note that for each rollout, instead of sampling, the model at each timestep executes its highest scored action.
As expected, Transformer is not robust to roto-translations---its rollouts change significantly
when given the same inputs from different coordinate frames.
Likewise, Transformer + DRoPE is not robust because it is not rotation equivariant.
In contrast, \ourmodel is robust because it has a mathematical guarantee for \SE2-equivariance.

\begin{table}[t!]
\begin{minipage}{0.45\linewidth}
\scriptsize
\centering
\begin{tabular}{ c c c c c} 
\toprule
IA & DA                 & RMM $\uparrow$
                        & minADE $\downarrow$ \\
\cmidrule(lr){1-2}     \cmidrule(lr){3-4}
\checkmark &            & 0.7408 & 1.6176 \\
           & \checkmark & 0.7483 & 1.6053 \\
\checkmark & \checkmark & 0.7478 & 1.5798 \\
\bottomrule
\end{tabular}
\end{minipage}
\hfill
\begin{minipage}{0.45\linewidth}
\scriptsize
\centering
\begin{tabular}{ l c c} 
\toprule
Map Attn.           & RMM $\uparrow$
                        & minADE $\downarrow$ \\
\cmidrule(lr){1-1}      \cmidrule(lr){2-3}
$ k = 4 $  & 0.7478 & 1.5798 \\
$ k = 8 $  & 0.7528 & 1.5293 \\
All     & 0.7617 & 1.4174 \\
\bottomrule
\end{tabular}
\end{minipage}
\caption{
\textbf{Ablation studies on the WOSAC 2\% validation split.}
On the \textbf{top}, we ablate the importance of the invariant adapter (IA) and distance-awareness (DA).
On the \textbf{bottom}, we ablate the number of map tokens each agent attends to in agent-to-map attention.
}
\label{table:ablation}
\end{table}

\paragraph{Ablation studies.}
In Table \ref{table:ablation}, we ablate two of \ourmodel's key architectural choices on the 2\% validation set, namely (i) whether we use the invariant adapter after each factorized attention block (including the last one)
and (ii) whether we use distance-awareness in \ourmodel's attention layers.
From the left side of Table \ref{table:ablation}, we observe that both architectural choices improve \ourmodel's realism.
For the invariant adapter, we see improvements in minADE only whereas for distance-awareness, we see improvements
across both metrics.
This matches our intuitions that there is important geometric information in the multivector features that should
be mixed into the auxiliary scalars, and that agents should attend to nearby agents and map tokens.

\paragraph{Latency.}
We provide memory footprint and latency comparisons in the supplementary material. We note that \ourmodel is faster and more memory efficient than the fully invariant baseline Transformer + RPE.

\section{Conclusion}

SE(2) equivariance is an inherent symmetry in agents modeling problems.
It is a ubiqutous inductive bias in state-of-the-art agent traffic simulation models,
improving their expressivity, sample efficiency, and robustness to nuissance transformations.
However, existing approaches achieve this with high computational costs.
We propose \ourmodel, a novel architecture for modeling traffic agents that guarantees
SE(2) equivariance yet avoids the high computational costs of existing approaches.
By leveraging efficient 2D geometric algebra encodings and the equivariant layer primitives,
\ourmodel is expressive, efficient and provably equivariant.

\paragraph{Limitations and broader impact.}
Although $\SE2$ is a useful equivariance property for agent modeling, self-driving is ultimately a 3D problem.
\ourmodel models equivariance in 2D but, like prior work~\citep{gigaflow}, it can be easily extended
to 2.5D by adding the height dimension to the auxiliary scalar features.
Another possibility is combining our method of acheiving $\SE2$ equivariance in the $xy$-plane, with translational invariance in the $z$-axis.
While we only evaluate \ourmodel in traffic simulation, we believe it is extensible to similar tasks like motion forecasting and ML-based planning.
We also highlight that \ourmodel is capable of making not just invariant but equivariant predictions in the global coordinate frame,
making it a suitable architectural choice for more challenging problems such as traffic scenario generation.
We leave this direction for future work.

{
    \small
    \bibliographystyle{ieeenat_fullname}
    \bibliography{main}
}

\clearpage
\onecolumn

\begin{center}
    {\Large\bfseries Supplemental Materials for Efficient Equivariant Transformer for Self-Driving Agent Modeling}
    \vspace{1em}
\end{center}

\section{Theoretical analysis}
We provide additional theoretical analysis in the following sections.
\refsec{sec:additional_ga_preliminaries} provides additional preliminaries for 2D projective geometric algebra $\GA$.
\refsec{sec:encoding_proofs} provides proofs of the satisfied properties of our derived $\GA$ encodings.
\refsec{sec:equivariance_proofs} provides proofs for $\SE2$ equivariance for each \ourmodel layer.

\subsection{Additional geometric algebra preliminaries}
In the Euclidean space $\mathbb{R}^n$, one may define a bilinear product of two vectors $x$ and $y$, called the \textit{geometric product} $xy$, characterized by the fundamental equation $v^2=\langle v,v\rangle$ where $\langle\cdot,\cdot\rangle$ is the standard inner product.
From the fundamental equation one can deduce that the geometric product is anticommutative on orthogonal basis vectors: for $i\neq j$, $(e_i+e_j)^2=\langle e_i+e_j,e_i+e_j\rangle\implies e_i^2+e_ie_j+e_je_i+e_j^2=e_i^2+2\langle e_i,e_j\rangle+e_j^2\implies e_ie_j=-e_je_i$.
Multiplying vectors together give us \textit{multivectors}; for $n=2$, the Euclidean geometric algebra consists of multivectors of the form $x=x'+x_1e_1+x_2e_2+x_{12}e_{12}$, where $x',x_1,x_2,x_{12}$ are real coefficients and $e_{12}:=e_1e_2$.

To get a \textit{projective} geometric algebra, we adjoin to $\mathbb{R}^n$ an orthogonal basis vector $e_0$ with norm zero, that is, $e_0^2=0$.
For $n=2$, the resulting algebra, denoted by $\GA$, is 8-dimensional, spanned by the scalar 1, three vectors $e_0,e_1,e_2$, three \textit{bivectors} $e_{01},e_{20},e_{12}$, and one \textit{pseudoscalar} $e_{012}$.
Using the properties that $e_0^2=0, e_1^2=e_2^2=1, e_{ij}=-e_{ji}$ for $i\neq j$, one can compute a table of geometric products $xy$:
\begin{center}
\begin{tabular}{c||c|c|c|c|c|c|c|c}
    $x$\textbackslash $y$ & $1$ & $e_{0}$ & $e_{1}$ & $e_{2}$ & $e_{01}$ & $e_{20}$ & $e_{12}$ & $e_{012}$ \\\hline\hline
    $1$ & $1$ & $e_{0}$ & $e_{1}$ & $e_{2}$ & $e_{01}$ & $e_{20}$ & $e_{12}$ & $e_{012}$ \\\hline
    $e_{0}$ & $e_{0}$ & $0$ & $e_{01}$ & $-e_{20}$ & $0$ & $0$ & $e_{012}$ & $0$ \\\hline
    $e_{1}$ & $e_{1}$ & $-e_{01}$ & $1$ & $e_{12}$ & $-e_{0}$ & $e_{012}$ & $e_{2}$ & $e_{20}$ \\\hline
    $e_{2}$ & $e_{2}$ & $e_{20}$ & $-e_{12}$ & $1$ & $e_{012}$ & $e_{0}$ & $-e_{1}$ & $e_{01}$ \\\hline
    $e_{01}$ & $e_{01}$ & $0$ & $e_{0}$ & $e_{012}$ & $0$ & $0$ & $-e_{20}$ & $0$ \\\hline
    $e_{20}$ & $e_{20}$ & $0$ & $e_{012}$ & $-e_{0}$ & $0$ & $0$ & $e_{01}$ & $0$ \\\hline
    $e_{12}$ & $e_{12}$ & $e_{012}$ & $-e_{2}$ & $e_{1}$ & $e_{20}$ & $-e_{01}$ & $-1$ & $-e_{0}$ \\\hline
    $e_{012}$ & $e_{012}$ & $0$ & $e_{20}$ & $e_{01}$ & $0$ & $0$ & $-e_{0}$ & $0$
\end{tabular}
\end{center}

A \textit{sandwich product} is an expression of the form $uxu^{-1}$, where $u,x$ are multivectors and $u$ has a multiplicative inverse with respect to the geometric product.

We also define the \textit{wedge product} $x\wedge y$, which is another bilinear product, but is defined by the fundamental equation $v\wedge v=0$.
One can derive that $e_i\wedge e_j=-e_j\wedge e_i$ for orthogonal basis vectors and compute a table of wedge products $x\wedge y$:
\begin{center}
\begin{tabular}{c||c|c|c|c|c|c|c|c}
    $x$\textbackslash $y$ & $1$ & $e_{0}$ & $e_{1}$ & $e_{2}$ & $e_{01}$ & $e_{20}$ & $e_{12}$ & $e_{012}$ \\\hline\hline
    $1$ & $1$ & $e_{0}$ & $e_{1}$ & $e_{2}$ & $e_{01}$ & $e_{20}$ & $e_{12}$ & $e_{012}$ \\\hline
    $e_{0}$ & $e_{0}$ & $0$ & $e_{01}$ & $-e_{20}$ & $0$ & $0$ & $e_{012}$ & $0$ \\\hline
    $e_{1}$ & $e_{1}$ & $-e_{01}$ & $0$ & $e_{12}$ & $0$ & $e_{012}$ & $0$ & $0$ \\\hline
    $e_{2}$ & $e_{2}$ & $e_{20}$ & $-e_{12}$ & $0$ & $e_{012}$ & $0$ & $0$ & $0$ \\\hline
    $e_{01}$ & $e_{01}$ & $0$ & $0$ & $e_{012}$ & $0$ & $0$ & $0$ & $0$ \\\hline
    $e_{20}$ & $e_{20}$ & $0$ & $e_{012}$ & $0$ & $0$ & $0$ & $0$ & $0$ \\\hline
    $e_{12}$ & $e_{12}$ & $e_{012}$ & $0$ & $0$ & $0$ & $0$ & $0$ & $0$ \\\hline
    $e_{012}$ & $e_{012}$ & $0$ & $0$ & $0$ & $0$ & $0$ & $0$ & $0$
\end{tabular}
\end{center}
The \textit{multivector dual} $x\mapsto x^*$ is a linear operator, defined for a basis multivector $x$ as the basis multivector such that $x\wedge x^*=e_{012}$.
For a general $\GA$ multivector
\begin{equation*}
    x=x'+x_0e_0+x_1e_1+x_2e_2+x_{01}e_{01}+x_{20}e_{20}+x_{12}e_{12}+x_{012}e_{012}
\end{equation*}
we have
\begin{equation*}
    x^*:=x_{012}+x_{12}e_0+x_{20}e_1+x_{01}e_2+x_2e_{01}+x_1e_{20}+x_0e_{12}+x'e_{012}
\end{equation*}
which has the same coefficients as $x$, in reverse order.

The \textit{join} operator of two multivectors is defined as $\textrm{Join}(x,y)=(x^*\wedge y^*)^*$.

Finally, we define \textit{$k$-grade projection} $\langle\cdot\rangle_k$ for $k=0,1,2,3$, which selects specific components of $x$:
\begin{align*}
    \langle x\rangle_0&:=x'\\
    \langle x\rangle_1&:=x_0e_0+x_1e_1+x_2e_2\\
    \langle x\rangle_2&:=x_{01}e_{01}+x_{20}e_{20}+x_{12}e_{12}\\
    \langle x\rangle_3&:=x_{012}e_{012}
\end{align*}
and we define the \textit{invariant inner product} $\langle\cdot,\cdot\rangle$ between two multivectors:
\begin{equation*}
    \langle x,y\rangle:=x'y'+x_1y_1+x_2y_2+x_{12}y_{12}\in\mathbb{R}
\end{equation*}
which ignores components containing $e_0$.
\label{sec:additional_ga_preliminaries}

\subsection{Encodings}
We encode 2D objects and transformations into $\GA$ using Table 1.
These encodings satisfy many nice properties described in subsequent propositions.

\begin{proposition}
    The result of applying an operator $u$ to an object $x$ is given by a sandwich product $u[x]:=uxu^{-1}$.
\end{proposition}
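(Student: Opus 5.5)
The plan is to prove the statement by direct computation with the multiplication table of Section~\ref{sec:additional_ga_preliminaries}. I will check each operator type in Table~\ref{tbl:encodings} (translation, rotation) against each object type (point, line), and then extend to general roto-translations. First I will confirm that the listed inverses are genuine inverses. For the translation $t=1+\tau$ with $\tau=-\frac a2 e_{01}+\frac b2 e_{20}$, every product of two ideal bivectors vanishes ($e_{01}^2=e_{20}^2=e_{01}e_{20}=e_{20}e_{01}=0$), so $\tau^2=0$ and $(1+\tau)(1-\tau)=1$. For the rotation $r=\cos\frac\theta2-\sin\frac\theta2 e_{12}$, we have $e_{12}^2=-1$, so $r\bar r=\cos^2\frac\theta2+\sin^2\frac\theta2=1$ with $\bar r=\cos\frac\theta2+\sin\frac\theta2e_{12}$.

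Next I will compute the translation sandwich. Because $\tau^2=0$, we get $t x t^{-1}=x+[\tau,x]-\tau x\tau$. For a point $p=xe_{20}+ye_{01}+e_{12}$, the ideal parts of $p$ commute with $\tau$ and annihilate it, so only $e_{12}$ contributes. From the table, $[e_{01},e_{12}]=-2e_{20}$ and $[e_{20},e_{12}]=2e_{01}$. Hence $[\tau,e_{12}]=ae_{20}+be_{01}$, and $\tau e_{12}\tau$ vanishes, being a product of ideal elements. This gives $tpt^{-1}=(x+a)e_{20}+(y+b)e_{01}+e_{12}$, the translated point. For a line $\ell=\alpha e_1+\beta e_2+ce_0$, the same expansion using $e_1e_{01}=-e_0$, $e_{01}e_1=e_0$, etc.\ should leave $\alpha,\beta$ fixed and change $c$ to $c-\alpha a-\beta b$. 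That is exactly the equation of the translated line $\alpha(x-a)+\beta(y-b)+c=0$.

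For rotations, I will write $r=\exp(-\frac\theta2 e_{12})$. Then $r x\bar r$ acts on the span of $\{e_1,e_2\}$ and on the span of $\{e_{20},e_{01}\}$ as a rotation by $\theta$. It fixes $e_0$ and $e_{12}$, since these commute with $e_{12}$. Concretely, I will compute $re_1\bar r$ and $re_{20}\bar r$ via double-angle identities and obtain the rotated coefficients; the remaining basis vectors follow by the same table entries. Applied to points this yields $(x\cos\theta-y\sin\theta,\,x\sin\theta+y\cos\theta)$, and applied to lines it yields the rotated normal with unchanged offset $c$.

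Finally, a general roto-translation is a product $u=tr$, with $u^{-1}=r^{-1}t^{-1}$. Therefore $uxu^{-1}=t(rxr^{-1})t^{-1}$, so composition of sandwiches matches composition of transforms, and the statement follows from the two base cases. I expect the main obstacle to be purely bookkeeping: keeping the sign conventions of $e_{20}$ versus $e_{02}$ straight so that the translation direction and rotation sense come out as listed in Table~\ref{tbl:encodings}. I will double-check those signs on the basis elements $e_{12}$ and $e_1$ before writing out the general case.
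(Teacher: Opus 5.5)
Your proposal is correct and follows essentially the same route as the paper: the paper also computes the translation and rotation sandwiches $t[\cdot]$ and $r[\cdot]$ on the basis multivectors, then uses linearity to check the four cases (translated point, translated line, rotated point, rotated line). Your shortcuts ($\tau^2=0$ giving $x+[\tau,x]-\tau x\tau$, and commutation or anticommutation with $e_{12}$ giving $r x \bar r\in\{x,\,x\bar r^{2}\}$) are tidier bookkeeping for the same expansions, while your inverse check and the composition step $u=tr$ are small additions the paper leaves implicit.
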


\begin{remark}
    Note that for \SE2-equivariance, we have only considered when $u$ is a non-mirroring transformation.
    If $u$ is a mirroring transformation (e.g. a reflection), one must instead use $u[x]:=u\hat{x}u^{-1}$ where $\hat{x}$ denotes grade involution, which flips the sign of the odd-grade components of $x$.
\end{remark}

\begin{proof}
Since geometric product is bilinear, then $u[\cdot]$ is linear for any fixed $u$. As such, it is helpful to first compute sandwich products $u[x]$ for \textit{basis} multivectors $x$.
When $u=t$ is a translation by $(a,b)$,
\begin{align*}
    t[1]&=1\\
    t[e_0]&=(1-\tfrac{a}{2}e_{01}+\tfrac{b}{2}e_{20})e_0(1+\tfrac{a}{2}e_{01}-\tfrac{b}{2}e_{20})\\
    &=e_0\\
    t[e_1]&=(1-\tfrac{a}{2}e_{01}+\tfrac{b}{2}e_{20})e_1(1+\tfrac{a}{2}e_{01}-\tfrac{b}{2}e_{20})\\
    &=e_1+\tfrac{a}{2}e_1e_{01}-\tfrac{b}{2}e_1e_{20}-\tfrac{a}{2}e_{01}e_1+\tfrac{b}{2}e_{20}e_1\\
    &=e_1-ae_0\\
    t[e_2]&=(1-\tfrac{a}{2}e_{01}+\tfrac{b}{2}e_{20})e_2(1+\tfrac{a}{2}e_{01}-\tfrac{b}{2}e_{20})\\
    &=e_2+\tfrac{a}{2}e_2e_{01}-\tfrac{b}{2}e_2e_{20}-\tfrac{a}{2}e_{01}e_2+\tfrac{b}{2}e_{20}e_2\\
    &=e_2-be_0\\
    t[e_{01}]&=(1-\tfrac{a}{2}e_{01}+\tfrac{b}{2}e_{20})e_{01}(1+\tfrac{a}{2}e_{01}-\tfrac{b}{2}e_{20})\\
    &=e_{01}\\
    t[e_{20}]&=(1-\tfrac{a}{2}e_{01}+\tfrac{b}{2}e_{20})e_{20}(1+\tfrac{a}{2}e_{01}-\tfrac{b}{2}e_{20})\\
    &=e_{20}\\
    t[e_{12}]&=(1-\tfrac{a}{2}e_{01}+\tfrac{b}{2}e_{20})e_{12}(1+\tfrac{a}{2}e_{01}-\tfrac{b}{2}e_{20})\\
    &=e_{12}+\tfrac{a}{2}e_{12}e_{01}-\tfrac{b}{2}e_{12}e_{20}-\tfrac{a}{2}e_{01}e_{12}+\tfrac{b}{2}e_{20}e_{12}\\
    &=e_{12}+be_{01}+ae_{20}\\
    t[e_{012}]&=(1-\tfrac{a}{2}e_{01}+\tfrac{b}{2}e_{20})e_{012}(1+\tfrac{a}{2}e_{01}-\tfrac{b}{2}e_{20})\\
    &=e_{012}
\end{align*}
When $u=r$ is a rotation by $\theta$,
\begin{align*}
    r[1]&=1\\
    r[e_0]&=(\cos\tfrac{\theta}{2}-\sin\tfrac{\theta}{2}e_{12})e_0(\cos\tfrac{\theta}{2}+\sin\tfrac{\theta}{2}e_{12})\\
    &=\cos^2\tfrac{\theta}{2}e_0+\cos\tfrac{\theta}{2}\sin\tfrac{\theta}{2}e_0e_{12}-\sin\tfrac{\theta}{2}\cos\tfrac{\theta}{2}e_{12}e_0-\sin^2\tfrac{\theta}{2}e_{12}e_0e_{12}\\
    &=e_0\\
    r[e_1]&=(\cos\tfrac{\theta}{2}-\sin\tfrac{\theta}{2}e_{12})e_1(\cos\tfrac{\theta}{2}+\sin\tfrac{\theta}{2}e_{12})\\
    &=\cos^2\tfrac{\theta}{2}e_1+\cos\tfrac{\theta}{2}\sin\tfrac{\theta}{2}e_1e_{12}-\sin\tfrac{\theta}{2}\cos\tfrac{\theta}{2}e_{12}e_1-\sin^2\tfrac{\theta}{2}e_{12}e_1e_{12}\\
    &=\cos\theta e_1+\sin\theta e_2\\
    r[e_2]&=(\cos\tfrac{\theta}{2}-\sin\tfrac{\theta}{2}e_{12})e_2(\cos\tfrac{\theta}{2}+\sin\tfrac{\theta}{2}e_{12})\\
    &=\cos^2\tfrac{\theta}{2}e_2+\cos\tfrac{\theta}{2}\sin\tfrac{\theta}{2}e_2e_{12}-\sin\tfrac{\theta}{2}\cos\tfrac{\theta}{2}e_{12}e_2-\sin^2\tfrac{\theta}{2}e_{12}e_2e_{12}\\
    &=\cos\theta e_2-\sin\theta e_1\\
    r[e_{01}]&=(\cos\tfrac{\theta}{2}-\sin\tfrac{\theta}{2}e_{12})e_{01}(\cos\tfrac{\theta}{2}+\sin\tfrac{\theta}{2}e_{12})\\
    &=\cos^2\tfrac{\theta}{2}e_{01}+\cos\tfrac{\theta}{2}\sin\tfrac{\theta}{2}e_{01}e_{12}-\sin\tfrac{\theta}{2}\cos\tfrac{\theta}{2}e_{12}e_{01}-\sin^2\tfrac{\theta}{2}e_{12}e_{01}e_{12}\\
    &=\cos\theta e_{01}-\sin\theta e_{20}\\
    r[e_{20}]&=(\cos\tfrac{\theta}{2}-\sin\tfrac{\theta}{2}e_{12})e_{20}(\cos\tfrac{\theta}{2}+\sin\tfrac{\theta}{2}e_{12})\\
    &=\cos^2\tfrac{\theta}{2}e_{20}+\cos\tfrac{\theta}{2}\sin\tfrac{\theta}{2}e_{20}e_{12}-\sin\tfrac{\theta}{2}\cos\tfrac{\theta}{2}e_{12}e_{20}-\sin^2\tfrac{\theta}{2}e_{12}e_{20}e_{12}\\
    &=\sin\theta e_{01}+\cos\theta e_{20}\\
    r[e_{12}]&=(\cos\tfrac{\theta}{2}-\sin\tfrac{\theta}{2}e_{12})e_{12}(\cos\tfrac{\theta}{2}+\sin\tfrac{\theta}{2}e_{12})\\
    &=\cos^2\tfrac{\theta}{2}e_{12}+\cos\tfrac{\theta}{2}\sin\tfrac{\theta}{2}e_{12}e_{12}-\sin\tfrac{\theta}{2}\cos\tfrac{\theta}{2}e_{12}e_{12}-\sin^2\tfrac{\theta}{2}e_{12}e_{12}e_{12}\\
    &=e_{12}\\
    r[e_{012}]&=(\cos\tfrac{\theta}{2}-\sin\tfrac{\theta}{2}e_{12})e_{012}(\cos\tfrac{\theta}{2}+\sin\tfrac{\theta}{2}e_{12})\\
    &=\cos^2\tfrac{\theta}{2}e_{012}+\cos\tfrac{\theta}{2}\sin\tfrac{\theta}{2}e_{012}e_{12}-\sin\tfrac{\theta}{2}\cos\tfrac{\theta}{2}e_{12}e_{012}-\sin^2\tfrac{\theta}{2}e_{12}e_{012}e_{12}\\
    &=e_{012}
\end{align*}
Using the above, we can now verify that
\begin{enumerate}
    \item A translation by $(a,b)\in\mathbb{R}^2$ of the point $p=(x,y)$ yields the point $(x',y')=(x+a,y+b)$:
    \begin{align*}
        t[p]&=t[xe_{20}+ye_{01}+e_{12}]\\
        &=x\cdot t[e_{20}]+y\cdot t[e_{01}]+t[e_{12}]\\
        &=xe_{20}+ye_{01}+(e_{12}+be_{01}+ae_{20})\\
        &=x'e_{20}+y'e_{01}+e_{12}
    \end{align*}
    \item A translation by $(a,b)\in\mathbb{R}^2$ of the line $\ell:Ax+By+C=0$ yields the line $Ax+By+C'=0$, where $C'=C-Aa-Bb$:
    \begin{align*}
        t[\ell]&=t[Ae_1+Be_2+Ce_0]\\
        &=A\cdot t[e_1]+B\cdot t[e_2]+C\cdot t[e_0]\\
        &=A(e_1-ae_0)+B(e_2-be_0)+Ce_0\\
        &=Ae_1+Be_2+C'e_0
    \end{align*}
    \item A rotation by angle $\theta\in\mathbb{R}$ of the point $p=(x,y)$ yields the point $(x',y')=(x\cos\theta-y\sin\theta,x\sin\theta+y\cos\theta)$:
    \begin{align*}
        r[p]&=r[xe_{20}+ye_{01}+e_{12}]\\
        &=x\cdot r[e_{20}]+y\cdot r[e_{01}]+r[e_{12}]\\
        &=x(\sin\theta e_{01}+\cos\theta e_{20})+y(\cos\theta e_{01}-\sin\theta e_{20})+e_{12}\\
        &=x'e_{20}+y'e_{01}+e_{12}
    \end{align*}
    \item A rotation by angle $\theta\in\mathbb{R}$ of the line $\ell:Ax+By+C=0$ yields the line $A'x+B'y+C=0$, where $(A',B')=(A\cos\theta-B\sin\theta,A\sin\theta+B\cos\theta)$:
    \begin{align*}
        r[\ell]&=r[Ae_1+Be_2+Ce_0]\\
        &=A\cdot r[e_1]+B\cdot r[e_2]+C\cdot r[e_0]\\
        &=A(\cos\theta e_1+\sin\theta e_2)+B(\cos\theta e_2-\sin\theta e_1)+Ce_0\\
        &=A'e_1+B'e_2+Ce_0
    \end{align*}
\end{enumerate}
This concludes the proof.
\end{proof}

\begin{proposition}
The intersection of two lines is given by a wedge product.
\end{proposition}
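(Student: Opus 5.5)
Write the two lines in the encoding of Table~\ref{tbl:encodings} as $\ell_1=a_1e_1+b_1e_2+c_1e_0$ and $\ell_2=a_2e_1+b_2e_2+c_2e_0$, corresponding to $a_ix+b_iy+c_i=0$. The plan is a direct computation of $\ell_1\wedge\ell_2$ using the wedge-product table from Section~\ref{sec:additional_ga_preliminaries}. We then show that the result is a nonzero scalar multiple of the point encoding $xe_{20}+ye_{01}+e_{12}$, where $(x,y)$ is the intersection point. Since a projective point representation is defined only up to nonzero scale, dividing by the $e_{12}$ coefficient recovers exactly the tabulated form.

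First, expand bilinearly. Because $e_i\wedge e_i=0$ and $e_i\wedge e_j=-e_j\wedge e_i$, only three antisymmetric pairs survive:
\begin{equation*}
    \ell_1\wedge\ell_2=(a_1b_2-a_2b_1)\,e_1\wedge e_2+(c_1a_2-c_2a_1)\,e_0\wedge e_1+(b_1c_2-b_2c_1)\,e_2\wedge e_0 .
\end{equation*}
Reading off the table, $e_1\wedge e_2=e_{12}$, $e_0\wedge e_1=e_{01}$ and $e_2\wedge e_0=e_{20}$. This gives
\begin{equation*}
    \ell_1\wedge\ell_2=(a_1b_2-a_2b_1)e_{12}+(c_1a_2-c_2a_1)e_{01}+(b_1c_2-b_2c_1)e_{20}.
\end{equation*}
The sign bookkeeping here, in particular the orientation convention $e_{20}$ rather than $e_{02}$, is the step most prone to error, so I would check each term against the table.

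Next, solve the linear system $a_ix+b_iy=-c_i$ by Cramer's rule. With $D=a_1b_2-a_2b_1$, this gives $x=(b_1c_2-b_2c_1)/D$ and $y=(c_1a_2-c_2a_1)/D$. Comparing coefficients shows $\ell_1\wedge\ell_2=D\,(xe_{20}+ye_{01}+e_{12})$, which is the point $(x,y)$ up to the scale $D$.

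Finally, treat the degenerate case $D=0$. Here the lines are parallel and the wedge product has no $e_{12}$ component. It is therefore a pure combination of $e_{01},e_{20}$, i.e.\ an ideal point (a point at infinity in the common direction of the two lines), consistent with projective geometry. If the lines coincide, the wedge product vanishes entirely. The main subtlety overall is stating clearly that equality holds projectively, up to a nonzero scalar factor, rather than literally.
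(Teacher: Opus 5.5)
Your proposal is correct and follows the paper's proof essentially verbatim: you expand $\ell_1\wedge\ell_2$ bilinearly via the wedge table to get $(a_1b_2-a_2b_1)e_{12}+(a_2c_1-a_1c_2)e_{01}+(b_1c_2-b_2c_1)e_{20}$, and then use Cramer's rule to identify this as $(a_1b_2-a_2b_1)$ times the encoding of the intersection point. Your extra treatment of the case $a_1b_2-a_2b_1=0$ is a correct refinement the paper omits, since the paper implicitly assumes this determinant is nonzero: parallel lines give an ideal point in the common direction, and coincident lines give zero.
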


\begin{proof}
For lines $a_1x+b_1y+c_1=0$ and $a_2x+b_2y+c_2=0$, their intersection point $x_0=\cfrac{b_1c_2-b_2c_1}{a_1b_2-a_2b_1}$, $y_0=\cfrac{a_2c_1-a_1c_2}{a_1b_2-a_2b_1}$ can be computed via
\begin{align*}
    \ell_1\wedge\ell_2&=(a_1e_1+b_1e_2+c_1e_0)\wedge(a_2e_1+b_2e_2+c_2e_0)\\
    &=(a_2c_1-a_1c_2)e_{01}+(b_1c_2-b_2c_1)e_{20}+(a_1b_2-a_2b_1)e_{12}\\
    &=(a_1b_2-a_2b_1)(x_0e_{20}+y_0e_{01}+e_{12})
\end{align*}
\end{proof}

\begin{proposition}
    The line joining two points is given by a join operator.
\end{proposition}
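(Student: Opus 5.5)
The plan is a direct computation, mirroring the proof of the preceding proposition for the wedge product. We take two points $p_1=(x_1,y_1)$ and $p_2=(x_2,y_2)$, encoded via Table~\ref{tbl:encodings} as $p_i=x_ie_{20}+y_ie_{01}+e_{12}$. We then evaluate $\textrm{Join}(p_1,p_2)=(p_1^*\wedge p_2^*)^*$ using the explicit dual formula (coefficients in reversed order) and the wedge product table from the supplementary preliminaries. Finally we check that the result is the encoding $ae_1+be_2+ce_0$ of a line that passes through both points.

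\textbf{Step 1: duals.} By the reversed-coefficient rule, $e_{12}\mapsto e_0$, $e_{20}\mapsto e_1$ and $e_{01}\mapsto e_2$. Hence
\begin{equation*}
p_i^*=e_0+x_ie_1+y_ie_2 .
\end{equation*}

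\textbf{Step 2: wedge product.} Expand $p_1^*\wedge p_2^*$ bilinearly, using $e_j\wedge e_j=0$ and the table entries $e_0\wedge e_1=e_{01}$, $e_0\wedge e_2=-e_{20}$, $e_1\wedge e_2=e_{12}$, together with their antisymmetric counterparts. I expect this to give
\begin{equation*}
p_1^*\wedge p_2^*=(x_2-x_1)e_{01}+(y_1-y_2)e_{20}+(x_1y_2-x_2y_1)e_{12}.
\end{equation*}

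\textbf{Step 3: dual again.} Applying the dual once more sends $e_{01}\mapsto e_2$, $e_{20}\mapsto e_1$ and $e_{12}\mapsto e_0$. This yields
\begin{equation*}
\textrm{Join}(p_1,p_2)=(y_1-y_2)e_1+(x_2-x_1)e_2+(x_1y_2-x_2y_1)e_0 .
\end{equation*}
By Table~\ref{tbl:encodings}, this encodes the line $(y_1-y_2)x+(x_2-x_1)y+(x_1y_2-x_2y_1)=0$.

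\textbf{Step 4: verification.} Substituting $(x_1,y_1)$ gives
\begin{equation*}
(y_1-y_2)x_1+(x_2-x_1)y_1+x_1y_2-x_2y_1=0 ,
\end{equation*}
and the same cancellation occurs for $(x_2,y_2)$. So both points lie on the line. When $p_1\neq p_2$ the coefficients $(a,b)$ are not both zero, so this is a genuine line, and it is the unique line through the two points. Line encodings are only defined up to a nonzero scalar, matching the scalar factor that appeared in the wedge-product proposition.

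The only real obstacle is bookkeeping of signs in Step 2, in particular the $e_{20}$ versus $e_{02}$ orientation convention. I will read every sign directly off the wedge table rather than re-deriving it, so that the outcome is consistent with the conventions used in the earlier propositions.
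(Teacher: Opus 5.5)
Your proposal is correct and matches the paper's proof: the paper also dualizes the point encodings to $e_0+x_ie_1+y_ie_2$, wedges them to obtain $(c-a)e_{01}+(b-d)e_{20}+(ad-bc)e_{12}$ (your formula with $(a,b)=(x_1,y_1)$, $(c,d)=(x_2,y_2)$), and dualizes back to get the line $(b-d)x+(c-a)y+(ad-bc)=0$. Your explicit substitution check and your remarks on nondegeneracy and scale are small additions that the paper leaves implicit.
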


\begin{proof}
If we join the points $(a,b)$ and $(c,d)$, we obtain
\begin{align*}
    &((ae_{20}+be_{01}+e_{12})^*\wedge(ce_{20}+de_{01}+e_{12})^*)^*\\
    &=((ae_1+be_2+e_0)\wedge(ce_1+de_2+e_0))^*\\
    &=((c-a)e_{01}+(b-d)e_{20}+(ad-bc)e_{12})^*\\
    &=(c-a)e_2+(b-d)e_1+(ad-bc)e_0
\end{align*}
which is the encoding of the line $(b-d)x+(c-a)y+(ad-bc)=0$ passing through both points.
\end{proof}

\begin{proposition}
The distance from a point to a line is given by a join operator.
\end{proposition}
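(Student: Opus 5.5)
The plan is a direct coordinate computation, in the same style as the proofs of the two preceding propositions. Take a point $p=(x_0,y_0)$, encoded by Table~\ref{tbl:encodings} as $p=x_0e_{20}+y_0e_{01}+e_{12}$, and a line $\ell: ax+by+c=0$, encoded as $\ell=ae_1+be_2+ce_0$. We compute $\textrm{Join}(p,\ell)=(p^*\wedge\ell^*)^*$ and show it is the scalar $ax_0+by_0+c$. Its absolute value, divided by $\sqrt{a^2+b^2}$, is the Euclidean distance from $p$ to $\ell$.

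\textbf{Step 1 (duals).} Using the reversed-coefficient formula for the dual, we get $p^*=x_0e_1+y_0e_2+e_0$, a grade-1 element, and $\ell^*=be_{01}+ae_{20}+ce_{12}$, a grade-2 element.

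\textbf{Step 2 (wedge).} A vector wedged with a bivector can only produce the pseudoscalar. From the wedge table, the only nonzero cross terms are $e_1\wedge e_{20}=e_{012}$, $e_2\wedge e_{01}=e_{012}$ and $e_0\wedge e_{12}=e_{012}$. Every other pairing, such as $e_1\wedge e_{01}$, vanishes. Hence
\[
p^*\wedge\ell^*=(ax_0+by_0+c)\,e_{012}.
\]

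\textbf{Step 3 (dual back).} Since $e_{012}^*=1$, we obtain $\textrm{Join}(p,\ell)=ax_0+by_0+c\in\mathbb{R}$.

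\textbf{Step 4 (identify the distance).} By the standard formula, $d(p,\ell)=|ax_0+by_0+c|/\sqrt{a^2+b^2}$. The normalizing factor can itself be written intrinsically as $\sqrt{\langle\ell,\ell\rangle}$ using the invariant inner product, because $\langle\ell,\ell\rangle=a^2+b^2$. So
\[
d(p,\ell)=\frac{|\textrm{Join}(p,\ell)|}{\sqrt{\langle \ell,\ell\rangle}},
\]
and for a normalized line ($a^2+b^2=1$) the join is exactly the signed distance.

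As a sanity check, both the join and $\langle\ell,\ell\rangle$ should be invariant under sandwiching by translations and rotations. This can be confirmed from the computation in the first proposition: under $t[\cdot]$ the constant term $c$ becomes $c-aa'-bb'$ while the point shifts by $(a',b')$, so $ax_0+by_0+c$ is unchanged; under $r[\cdot]$, $(a,b)$ and $(x_0,y_0)$ rotate together, so their dot product is unchanged.

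The main obstacle is sign and convention bookkeeping. The dual and wedge tables use the ordering $e_{20}$ rather than $e_{02}$, so each cross term's sign must be read carefully from the tables. The statement must also be interpreted as distance up to normalization by $\sqrt{a^2+b^2}$ and up to sign, with the point assumed normalized so that its $e_{12}$ coefficient is $1$.
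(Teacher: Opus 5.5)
Your proposal is correct and follows the paper's proof essentially verbatim: dualize the point to $x_0e_1+y_0e_2+e_0$ and the line to $ae_{20}+be_{01}+ce_{12}$, read off $(ax_0+by_0+c)e_{012}$ from the wedge table, and dualize back to the scalar $ax_0+by_0+c$. The only difference is that the paper assumes a unit normal ($a^2+b^2=1$), whereas you divide by $\sqrt{\langle\ell,\ell\rangle}$ to cover unnormalized lines, which is a correct refinement.
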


\begin{proof}
If we join the point $(x_0,y_0)$ and the line $ax+by+c=0$ where $\sqrt{a^2+b^2}=1$, we obtain
\begin{align*}
    &((x_0e_{20}+y_0e_{01}+e_{12})^*\wedge(ae_1+be_2+ce_0)^*)^*\\
    &=((x_0e_1+y_0e_2+e_0)\wedge(ae_{20}+be_{01}+ce_{12}))^*\\
    &=((ax_0+by_0+c)e_{012})^*\\
    &=ax_0+by_0+c
\end{align*}
which is the signed distance between them.
\end{proof}
\label{sec:encoding_proofs}

\subsection{Equivariant layers}

To show that a function $f:\GA\to\GA$ is \SE2-equivariant, one can verify that for any rotation or translation operator $u$ and input multivector $x$, we have $f(u[x])=u[f(x)]$.

We define linear maps $\phi:\GA\to\GA$ in the form
\begin{equation}
    \phi(x)=\sum_{k=0}^{3}w_k\langle x\rangle_k+\sum_{k=0}^{2}v_ke_0\langle x\rangle_k+\sum_{k=0}^{2}u_ke_{012}\langle x\rangle_k
\end{equation}
Note that this differs from the linear layer defined in ~\citep{gatr} as we have introduced additional parameters $u_k$.
With these extra terms, $\phi$ is not equivariant to reflections.

This is how $\phi$ acts on the basis multivectors:
\begin{align*}
    \phi(1)&=w_0+v_0e_0+u_0e_{012}\\
    \phi(e_0)&=w_1e_0\\
    \phi(e_1)&=w_1e_1+v_1e_{01}+u_1e_{20}\\
    \phi(e_2)&=w_1e_2-v_1e_{20}+u_1e_{01}\\
    \phi(e_{01})&=w_2e_{01}\\
    \phi(e_{20})&=w_2e_{20}\\
    \phi(e_{12})&=w_2e_{12}+v_2e_{012}-u_2e_0\\
    \phi(e_{012})&=w_3e_{012}
\end{align*}

\begin{proposition}
The linear map defined in \textnormal{equation (20)} is \SE2-equivariant.
\end{proposition}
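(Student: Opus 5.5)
The plan is to reduce the claim to finitely many checks on basis multivectors and on the generators of $\SE2$. Every roto-translation is a composition of a rotation $r$ and a translation $t$, and $u[x]=uxu^{-1}$ composes: $(tr)[x]=t[r[x]]$. So if $\phi$ commutes with each pure rotation and each pure translation, it commutes with every element of $\SE2$. Both $\phi$ and $x\mapsto u[x]$ are linear, so it also suffices to check $\phi(u[x])=u[\phi(x)]$ on the eight basis multivectors. We will use the table of sandwich products computed in the proof of Proposition 1 together with the list of values $\phi(e_I)$ given just above the statement.

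\emph{Rotations.} From Proposition 1, $r$ fixes $1,e_0,e_{12},e_{012}$ and acts as the same planar rotation on the pairs $(e_1,e_2)$ and $(e_{01},e_{20})$; more precisely, $r[e_{01}]=\cos\theta e_{01}-\sin\theta e_{20}$ and $r[e_{20}]=\sin\theta e_{01}+\cos\theta e_{20}$. The map $\phi$ preserves grades up to the $e_0$ and $e_{012}$ multiplications, so the checks split naturally.
\begin{itemize}
\item For $1,e_0,e_{12},e_{012}$, the image $\phi(x)$ lies in the span of $r$-fixed elements ($1,e_0,e_{012}$, or $e_{12},e_{012},e_0$), so both sides agree immediately.
\item For $e_{01},e_{20}$, $\phi$ acts as the scalar $w_2$, so it commutes with $r$.
\item The real content is $e_1,e_2$. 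We compute $\phi(r[e_1])=\cos\theta(w_1e_1+v_1e_{01}+u_1e_{20})+\sin\theta(w_1e_2-v_1e_{20}+u_1e_{01})$ and compare it with $r[\phi(e_1)]=w_1(\cos\theta e_1+\sin\theta e_2)+v_1(\cos\theta e_{01}-\sin\theta e_{20})+u_1(\sin\theta e_{01}+\cos\theta e_{20})$. These match term by term. The $e_2$ case is analogous.
\end{itemize}

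\emph{Translations.} From Proposition 1, $t$ fixes $1,e_0,e_{01},e_{20},e_{012}$, while $t[e_1]=e_1-ae_0$, $t[e_2]=e_2-be_0$, and $t[e_{12}]=e_{12}+be_{01}+ae_{20}$.
\begin{itemize}
\item For $x\in\{1,e_0,e_{01},e_{20},e_{012}\}$, $\phi(x)$ lies in the span of $t$-fixed elements, so equality is immediate.
\item For $e_1$: $\phi(e_1-ae_0)=w_1e_1+v_1e_{01}+u_1e_{20}-aw_1e_0$, and $t[\phi(e_1)]=w_1(e_1-ae_0)+v_1e_{01}+u_1e_{20}$. These agree, and $e_2$ is the same.
\item For $e_{12}$: $\phi(e_{12}+be_{01}+ae_{20})=w_2e_{12}+v_2e_{012}-u_2e_0+w_2(be_{01}+ae_{20})$, while $t[\phi(e_{12})]=w_2(e_{12}+be_{01}+ae_{20})+v_2e_{012}-u_2e_0$. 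These also agree.
\end{itemize}

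The main obstacle is sign bookkeeping in the mixed cases, namely rotation of $e_1,e_2$ and translation of $e_{12}$. Those are the cases where the new $u_k$ terms, which the original GATr layer lacks, interact with the action. There we must confirm that the orientation conventions in $e_{20}$ versus $e_{02}$ make $u_1e_{20}$ and $u_1e_{01}$ rotate consistently. The listed values $\phi(e_I)$ should first be rechecked against the geometric product table, for example $e_{012}e_1=e_{20}$ and $e_{012}e_{12}=-e_0$, before the comparisons are carried out.
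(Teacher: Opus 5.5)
Your proposal is correct and follows essentially the same route as the paper's proof. Both reduce by linearity to the eight basis multivectors, use the sandwich-product formulas from Proposition 1 together with the listed values $\phi(e_I)$, and check pure translations and pure rotations separately; your explicit composition remark $(tr)[x]=t[r[x]]$ and your grouping of the trivial cases by $r$- or $t$-fixed spans are small tidy additions that the paper leaves implicit.
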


\begin{proof}
Let $t=1-\tfrac{a}{2}e_{01}+\tfrac{b}{2}e_{20}$ be a translation operator and $r=\cos\frac{\theta}{2}-\sin\frac{\theta}{2}e_{12}$ be a rotation operator.
Since $\phi$ and $u[\cdot]$ are linear, it suffices to show that $t[\phi(x)]=\phi(t[x])$ and $r[\phi(x)]=\phi(r[x])$ for the basis multivectors $x\in\{1,e_0,e_1,e_2,e_{01},e_{20},e_{12},e_{012}\}$.

Referring to the proof of Proposition 1 for values of $t[x]$ and $r[x]$, we can verify these equations hold for translations:
\begin{align*}
    \phi(t[1])&=\phi(1)=w_0+v_0e_0+u_0e_{012}\\
    t[\phi(1)]&=t[w_0+v_0e_0+u_0e_{012}]=w_0+v_0e_0+u_0e_{012}\\[5pt]
    \phi(t[e_0])&=\phi(e_0)=w_1e_0\\
    t[\phi(e_0)]&=t[w_1e_0]=w_1e_0\\[5pt]
    \phi(t[e_1])&=\phi(e_1-ae_0)=(w_1e_1+v_1e_{01}+u_1e_{20})-aw_1e_0\\
    t[\phi(e_1)]&=t[w_1e_1+v_1e_{01}+u_1e_{20}]=w_1(e_1-ae_0)+v_1e_{01}+u_1e_{20}\\[5pt]
    \phi(t[e_2])&=\phi(e_2-be_0)=(w_1e_2-v_1e_{20}+u_1e_{01})-bw_1e_0\\
    t[\phi(e_2)]&=t[w_1e_2-v_1e_{20}+u_1e_{01}]=w_1(e_2-be_0)-v_1e_{20}+u_1e_{01}\\[5pt]
    \phi(t[e_{01}])&=\phi(e_{01})=w_2e_{01}\\
    t[\phi(e_{01})]&=t[w_2e_{01}]=w_2e_{01}\\[5pt]
    \phi(t[e_{20}])&=\phi(e_{20})=w_2e_{20}\\
    t[\phi(e_{20})]&=t[w_2e_{20}]=w_2e_{20}\\[5pt]
    \phi(t[e_{12}])&=\phi(e_{12}+be_{01}+ae_{20})=(w_2e_{12}+v_2e_{012}-u_2e_0)+bw_2e_{01}+aw_2e_{20}\\
    t[\phi(e_{12})]&=t[w_2e_{12}+v_2e_{012}-u_2e_0]=w_2(e_{12}+be_{01}+ae_{20})+v_2e_{012}-u_2e_0\\[5pt]
    \phi(t[e_{012}])&=\phi(e_{012})=w_3e_{012}\\
    t[\phi(e_{012})]&=t[w_3e_{012}]=w_3e_{012}
\end{align*}
and rotations:
\begin{align*}
    \phi(r[1])&=\phi(1)=w_0+v_0e_0+u_0e_{012}\\
    r[\phi(1)]&=r[w_0+v_0e_0+u_0e_{012}]=w_0+v_0e_0+u_0e_{012}\\[5pt]
    \phi(r[e_0])&=\phi(e_0)=w_1e_0\\
    r[\phi(e_0)]&=r[w_1e_0]=w_1e_0\\[5pt]
    \phi(r[e_1])&=\phi(\cos\theta e_1+\sin\theta e_2)=\cos\theta(w_1e_1+v_1e_{01}+u_1e_{20})+\sin\theta(w_1e_2-v_1e_{20}+u_1e_{01})\\
    r[\phi(e_1)]&=r[w_1e_1+v_1e_{01}+u_1e_{20}]\\
    &=w_1(\cos\theta e_1+\sin\theta e_2)+v_1(\cos\theta e_{01}-\sin\theta e_{20})+u_1(\sin\theta e_{01}+\cos\theta e_{20})\\[5pt]
    \phi(r[e_2])&=\phi(\cos\theta e_2-\sin\theta e_1)=\cos\theta(w_1e_2-v_1e_{20}+u_1e_{01})-\sin\theta(w_1e_1+v_1e_{01}+u_1e_{20})\\
    r[\phi(e_2)]&=r[w_1e_2-v_1e_{20}+u_1e_{01}]\\
    &=w_1(\cos\theta e_2-\sin\theta e_1)-v_1(\sin\theta e_{01}+\cos\theta e_{20})+u_1(\cos\theta e_{01}-\sin\theta e_{20})\\[5pt]
    \phi(r[e_{01}])&=\phi(\cos\theta e_{01}-\sin\theta e_{20})=w_2\cos\theta e_{01}-w_2\sin\theta e_{20}\\
    r[\phi(e_{01})]&=r[w_2e_{01}]=w_2(\cos\theta e_{01}-\sin\theta e_{20})\\[5pt]
    \phi(r[e_{20}])&=\phi(\sin\theta e_{01}+\cos\theta e_{20})=w_2\sin\theta e_{01}+w_2\cos\theta e_{20}\\
    r[\phi(e_{20})]&=r[w_2e_{20}]=w_2(\sin\theta e_{01}+\cos\theta e_{20})\\[5pt]
    \phi(r[e_{12}])&=\phi(e_{12})=w_2e_{12}+v_2e_{012}-u_2e_0\\
    r[\phi(e_{12})]&=r[w_2e_{12}+v_2e_{012}-u_2e_0]=w_2e_{12}+v_2e_{012}-u_2e_0\\[5pt]
    \phi(r[e_{012}])&=\phi(e_{012})=w_3e_{012}\\
    r[\phi(e_{012})]&=r[w_3e_{012}]=w_3e_{012}
\end{align*}
\end{proof}

\begin{proposition}
The geometric bilinear layer $\textrm{Geometric}(w,x,y,z)=\textrm{Concatenate}(wx, \textrm{Join}(y,z))$ is \SE2-equivariant.
\end{proposition}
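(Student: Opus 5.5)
Since Concatenate acts componentwise, and $u[\cdot]$ acts channel-by-channel on a concatenated output, it suffices to show separately that $(w,x)\mapsto wx$ and $(y,z)\mapsto\textrm{Join}(y,z)$ are equivariant. Here the group acts diagonally: each input is replaced by $u[\cdot]$. As in the proof of Proposition 5, I only check the generators, namely translations $t=1-\tfrac{a}{2}e_{01}+\tfrac{b}{2}e_{20}$ and rotations $r=\cos\tfrac{\theta}{2}-\sin\tfrac{\theta}{2}e_{12}$. Every element of $\SE2$ is a composition of these, and equivariance is preserved under composition because $(u_1u_2)[x]=u_1[u_2[x]]$.

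\emph{Geometric product.} Associativity gives $u[w]\,u[x]=uwu^{-1}uxu^{-1}=u(wx)u^{-1}=u[wx]$ for any invertible $u$. The only thing to check is that $t$ and $r$ are invertible with the inverses listed in Table~\ref{tbl:encodings}. For $r$ this follows from $e_{12}^2=-1$, which gives $(\cos\tfrac{\theta}{2})^2+(\sin\tfrac{\theta}{2})^2=1$. For $t$ it follows from $e_{01}^2=e_{20}^2=0$ and $e_{01}e_{20}=e_{20}e_{01}=0$, read off the product table, so the cross terms cancel.

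\emph{Join.} The sandwich does not commute with the dual in general, so here I argue by basis computation instead. The map $(y,z)\mapsto\textrm{Join}(y,z)$ is bilinear and $u[\cdot]$ is linear, so it suffices to verify $\textrm{Join}(u[y],u[z])=u[\textrm{Join}(y,z)]$ for basis pairs, using the values of $t[e_A]$ and $r[e_A]$ from the proof of Proposition 1. To cut down the casework, I use that the dual is the coefficient-reversal map given in the preliminaries. Define the conjugated action $\tilde u(x):=(u[x^*])^*$. Then $\textrm{Join}$ is equivariant exactly when $\wedge$ is equivariant under $\tilde u$, that is, when $\tilde u(y')\wedge\tilde u(z')=\tilde u(y'\wedge z')$. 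From the sandwich tables, $\tilde r$ rotates the grade-1 part $(e_1,e_2)$ and the grade-2 part $(e_{20},e_{01})$ by a rotation, and fixes $1$, $e_0$, $e_{12}$, $e_{012}$. Likewise $\tilde t$ acts as a unipotent map, for instance $\tilde t(e_0)=e_0+ae_1+be_2$. Each of these is a grade-preserving map induced from a linear map $L$ on the vector space spanned by $e_0,e_1,e_2$, extended to higher grades by $L\wedge L$ and $\det L$; here $\det L=1$, which handles the pseudoscalar. Any such outermorphism respects $\wedge$.

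\textbf{Main obstacle.} The crux is confirming that $\tilde t$ and $\tilde r$ really are outermorphisms, and sign conventions make this fiddly. Because the basis is $e_{20}$ rather than $e_{02}$, the induced action on bivectors must equal $L\wedge L$ with the right signs. I would check this directly on the three bivector basis elements for each generator. If the signs fail to match, I would instead grind through the 64 basis pairs of $\textrm{Join}$ using the wedge table. Most of those pairs vanish, so in practice only the grade-1/grade-1, grade-1/grade-2 and scalar pairs need to be checked.
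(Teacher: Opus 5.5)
Your proposal is correct. The geometric-product half is the same associativity argument the paper uses; you also check that the inverses listed in the table really are inverses, which the paper takes for granted.

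The join half takes a genuinely different route. The paper either cites GATr or suggests brute-force verification of $\textrm{Join}(u[y],u[z])=u[\textrm{Join}(y,z)]$ on basis pairs. You instead conjugate the action by the dual, $\tilde u(x)=(u[x^*])^*$. You note that join-equivariance under $u$ is equivalent to $\wedge$-equivariance under $\tilde u$, and that this holds because $\tilde u$ is the outermorphism of a determinant-one linear map $L$ on $\mathrm{span}(e_0,e_1,e_2)$.

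The sign check you flag as the main obstacle does go through, so the fallback to basis pairs is unnecessary. From the sandwich tables, $\tilde t$ sends $e_0\mapsto e_0+ae_1+be_2$ and fixes $e_1,e_2$. On bivectors, $\tilde t(e_{01})=(t[e_2])^*=e_{01}-be_{12}=Le_0\wedge Le_1$ and $\tilde t(e_{20})=(t[e_1])^*=e_{20}-ae_{12}=Le_2\wedge Le_0$. It also fixes $1$, $e_{12}$ and $e_{012}$. For rotations, $\tilde r$ coincides with $r[\cdot]$ itself, which already acts as $L\wedge L$ on bivectors and fixes $e_{12}$ and $e_{012}$.

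What your route buys is a self-contained argument needing only eight basis images per generator rather than up to 64 pairs. It also explains why the join is equivariant even though the dual does not commute with the sandwich. For both generators, $\tilde u$ acts on vectors by the inverse transpose of $u[\cdot]$'s action, which is again unimodular. The paper's proof is shorter on the page only because it outsources this work to GATr.
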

\begin{proof}
Geometric product is equivariant as $(uwu^{-1})(uxu^{-1})=u(wx)u^{-1}$ for any $w,x$ and operator $u$.
That join is equivariant is proven in ~\citep{gatr}.
Alternatively, one can use linearity and check that $\textrm{Join}(uyu^{-1},uzu^{-1})=u\cdot\textrm{Join}(y,z)\cdot u^{-1}$ for any operator $u$ and \textit{basis} multivectors $y,z$.
\end{proof}

\begin{lemma}
If $f:\GA\to\GA$ is equivariant and $\rho:\GA\to\mathbb{R}$ is invariant, then $g(x):=\rho(x)f(x)$ is equivariant.
\end{lemma}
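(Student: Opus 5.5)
The plan is to verify the defining identity of equivariance directly from the definitions, using only that the sandwich map $u[\cdot]$ is linear for a fixed operator $u$. This linearity was noted at the start of the proof of Proposition 1, since the geometric product is bilinear. Fix an arbitrary rotation or translation operator $u$ and an arbitrary input $x\in\GA$. We must show $g(u[x])=u[g(x)]$.

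First, expand the left-hand side: $g(u[x])=\rho(u[x])\,f(u[x])$. Invariance of $\rho$ means $\rho(u[x])=\rho(x)$, and equivariance of $f$ means $f(u[x])=u[f(x)]$. Substituting both gives
\begin{equation*}
g(u[x])=\rho(x)\,u[f(x)].
\end{equation*}

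Second, pull the real scalar $\rho(x)$ inside the sandwich product. Because $u[\cdot]$ is linear, $\rho(x)\,u[f(x)]=u[\rho(x)f(x)]$. Equivalently, a real scalar commutes with every multivector under the geometric product, so $\rho(x)\,u f(x) u^{-1}=u\,(\rho(x)f(x))\,u^{-1}$. The right-hand side is exactly $u[g(x)]$, which completes the argument. Since $u$ and $x$ were arbitrary, this covers all of $\SE2$; no reduction to basis multivectors is needed, because $g$ itself is nonlinear.

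No step is a real obstacle. The only point needing care is to read ``invariant'' as $\rho(u[x])=\rho(x)$ for every operator $u$, and to note that the product $\rho(x)f(x)$ is scalar multiplication, which is central in the algebra. As a remark, applying the lemma with $f=\mathrm{id}$ and $\rho(x)=\mathrm{RELU}(\langle x\rangle_0)$ gives equivariance of $\textrm{GatedRELU}$. This uses that $\langle u[x]\rangle_0=\langle x\rangle_0$, which holds because the computations in Proposition 1 show that $t[\cdot]$ and $r[\cdot]$ never produce a scalar component from a non-scalar basis element and fix $1$.
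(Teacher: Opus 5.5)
Your proposal is correct and follows the paper's proof: substitute $\rho(uxu^{-1})=\rho(x)$ and $f(uxu^{-1})=uf(x)u^{-1}$, then move the real scalar $\rho(x)$ inside the sandwich product to obtain $u\,g(x)\,u^{-1}$. You also state why that last step is valid (real scalars commute with every multivector), which the paper leaves implicit.
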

\begin{proof}
For any operator $u$ and multivector $x$,
\begin{align*}
    g(uxu^{-1})&=\rho(uxu^{-1})f(uxu^{-1})\\
    &=\rho(x)\cdot uf(x)u^{-1}\\
    &=ug(x)u^{-1}
\end{align*}
\end{proof}

\begin{proposition}
The scalar-gated activation $\textrm{GatedRELU}(x)=\textrm{RELU}(\langle x\rangle_0)x$ is \SE2-equivariant.
\end{proposition}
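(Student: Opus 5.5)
The plan is to apply the preceding Lemma with $f(x)=x$ (the identity map) and $\rho(x)=\textrm{RELU}(\langle x\rangle_0)$. Then $\textrm{GatedRELU}(x)=\rho(x)f(x)$ is exactly of the form $g$ treated in the Lemma. The identity is trivially equivariant, since $f(uxu^{-1})=uxu^{-1}=uf(x)u^{-1}$. So the only thing to verify is that $\rho$ is $\SE2$-invariant, i.e.\ that $\langle uxu^{-1}\rangle_0=\langle x\rangle_0$ for every rotation or translation operator $u$. Invariance of $\rho$ then follows because RELU is applied after the scalar projection.

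For the invariance of the scalar component, we use linearity of $x\mapsto u[x]$ and of $\langle\cdot\rangle_0$ to reduce to basis multivectors, as in the proof of Proposition 1. From the tables computed there, $t[1]=r[1]=1$. Every other basis element $e_0,e_1,e_2,e_{01},e_{20},e_{12},e_{012}$ is mapped by $t[\cdot]$ and $r[\cdot]$ to a combination of non-scalar basis elements (for instance $t[e_{12}]=e_{12}+be_{01}+ae_{20}$ and $r[e_1]=\cos\theta e_1+\sin\theta e_2$). Hence for a general multivector $x=x'+\dots$ the scalar coefficient of $u[x]$ is $x'$, giving $\langle u[x]\rangle_0=\langle x\rangle_0$ for $u$ a pure translation or pure rotation.

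Finally, a general roto-translation is a product $u=tr$ of such operators, and sandwiching composes: $(tr)x(tr)^{-1}=t[r[x]]$. So invariance under the generators gives invariance under all of $\SE2$, and the Lemma yields the claim. The only step needing care is the scalar-invariance observation, namely that no basis multivector other than $1$ acquires a scalar component under the sandwich. This is immediate from the computations already carried out in Proposition 1, so I expect no real obstacle.
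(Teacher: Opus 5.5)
Your proposal is correct and follows the paper's proof. Both apply Lemma 1 with the identity as the equivariant factor and $\mathrm{RELU}(\langle x\rangle_0)$ as the invariant gate, which reduces the claim to showing that translation and rotation sandwiches leave the scalar coefficient unchanged; the paper checks this by writing out $u[x]$ for a general multivector, which is the same computation as your basis-element reduction via the tables of Proposition 1, and your remark that invariance under $t$ and $r$ separately extends to the composite $tr$ is a harmless step the paper leaves implicit.
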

\begin{proof}
By Lemma 1 it suffices to show that $\langle x\rangle_0$ is \SE2-invariant. Write
\begin{equation*}
    x=x'+x_0e_0+x_1e_1+x_2e_2+x_{01}e_{01}+x_{20}e_{20}+x_{12}e_{12}+x_{012}e_{012}
\end{equation*}
If $u=1-\tfrac{a}{2}e_{01}+\tfrac{b}{2}e_{20}$ is a translation, we can compute
\begin{multline}
    u[x]=x'+(x_0-ax_1-bx_2)e_0+x_1e_1+x_2e_2\\
    +(x_{01}+bx_{12})e_{01}+(x_{20}+ax_{12})e_{20}+x_{12}e_{12}+x_{012}e_{012}
\end{multline}
hence $\langle u[x]\rangle_0=x'=\langle x\rangle_0$ so $\langle x\rangle_0$ is translation-invariant.
Similarly, if $u=\cos\frac{\theta}{2}-\sin\frac{\theta}{2}e_{12}$ is a rotation,
\begin{multline}
    u[x]=x'+x_0e_0+(x_1\cos\theta-x_2\sin\theta)e_1+(x_1\sin\theta+x_2\cos\theta)e_2\\
    +(x_{01}\cos\theta+x_{20}\sin\theta)e_{01}+(x_{20}\cos\theta-x_{01}\sin\theta)e_{20}+x_{12}e_{12}+x_{012}e_{012}
\end{multline}
hence $\langle x\rangle_0$ is rotation-invariant.
\end{proof}

\begin{lemma}
The inner product $\langle x,y\rangle:=x'y'+x_1y_1+x_2y_2+x_{12}y_{12}$ is \SE2-invariant.
\end{lemma}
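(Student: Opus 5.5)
The plan is to show invariance separately under a translation $u=t$ and under a rotation $u=r$. Since every element of \SE2 is a composite of these, and $(u_1u_2)[x]=u_1[u_2[x]]$, this suffices. For both cases I will use the explicit component formulas for $u[x]$ already computed in the proof of Proposition~5 (the GatedRELU proof), which were derived from the basis sandwich products in Proposition~1.

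For a translation $t$ by $(a,b)$, the formula shows that $t[x]$ has the same coefficients as $x$ in the components $1,e_1,e_2,e_{12}$. Only the $e_0$, $e_{01}$ and $e_{20}$ coefficients change, and those are exactly the components the inner product ignores. The same holds for $y$. Hence
\[
\langle t[x],t[y]\rangle=x'y'+x_1y_1+x_2y_2+x_{12}y_{12}=\langle x,y\rangle .
\]

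For a rotation $r$ by $\theta$, the scalar and $e_{12}$ coefficients are unchanged, so the terms $x'y'$ and $x_{12}y_{12}$ are preserved. The pair $(x_1,x_2)$ is mapped to $R_\theta(x_1,x_2)^\top$, where $R_\theta$ is the standard $2\times2$ rotation matrix, and the same happens to $(y_1,y_2)$. Orthogonality of $R_\theta$, that is $R_\theta^\top R_\theta=I$, then gives $x_1y_1+x_2y_2$ unchanged. A direct check is that the cross terms $\cos\theta\sin\theta$ cancel and $\cos^2\theta+\sin^2\theta=1$.

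There is no real obstacle here. The only point needing care is that the $e_0$-containing components really are the only ones that translations alter. That is precisely why the inner product excludes them: including $x_0y_0$ or $x_{01}y_{01}$ would break translation invariance.
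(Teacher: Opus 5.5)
Your proposal is correct and follows essentially the same route as the paper: it treats translations and rotations separately and reads off the transformed components from the explicit formulas in the GatedRELU proof (equations (21) and (22)). Translations leave the $1,e_1,e_2,e_{12}$ coefficients untouched, while rotations fix $x'$ and $x_{12}$ and act orthogonally on $(x_1,x_2)$. Two small differences: your remark that a general element of SE(2) is a composite with $(u_1u_2)[x]=u_1[u_2[x]]$ states explicitly a point the paper leaves implicit, and your appeal to $R_\theta^\top R_\theta=I$ is a compact form of the paper's direct expansion.
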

\begin{proof}
If $u$ is a translation, by equation (21)
\begin{align*}
    \langle u[x],u[y]\rangle=x'y'+x_1y_1+x_2y_2+x_{12}y_{12}=\langle x,y\rangle
\end{align*}
If $u$ is a rotation, by equation (22)
\begin{align*}
    \langle u[x],u[y]\rangle&=x'y'+(x_1\cos\theta-x_2\sin\theta)(y_1\cos\theta-y_2\sin\theta)\\
    &\hspace{0.2in}+(x_1\sin\theta+x_2\cos\theta)(y_1\sin\theta+y_2\cos\theta)+x_{12}y_{12}\\
    &=x'y'+x_1y_1+x_2y_2+x_{12}y_{12}\\
    &=\langle x,y\rangle
\end{align*}
\end{proof}

\begin{proposition}
The normalization layer $\textrm{LayerNorm}(x) = x / \sqrt{ \mathbb{E}\langle x,x\rangle + \varepsilon}$ is \SE2-equivariant.
\end{proposition}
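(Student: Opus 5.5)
The plan is to reduce the statement to Lemma 1 and Lemma 2. LayerNorm acts on a multivector array $x=(x_1,\ldots,x_C)\in\GA^C$ channelwise: $\textrm{LayerNorm}(x)_c = \rho(x)\, x_c$. Here
\begin{equation*}
\rho(x):=\Bigl(\tfrac{1}{C}\textstyle\sum_{c=1}^C\langle x_c,x_c\rangle+\varepsilon\Bigr)^{-1/2}
\end{equation*}
is a real-valued function of the whole array, and $\mathbb{E}$ is the mean over channels. The operator $u$ acts on arrays channelwise, $u[x]:=(ux_1u^{-1},\ldots,ux_Cu^{-1})$. Equivariance then means $\textrm{LayerNorm}(u[x])_c=u\,\textrm{LayerNorm}(x)_c\,u^{-1}$ for every channel $c$.

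\textbf{Step 1: the scale factor is invariant.} By Lemma 2, $\langle ux_cu^{-1},ux_cu^{-1}\rangle=\langle x_c,x_c\rangle$ for each channel $c$ and every translation or rotation $u$. The channel average is therefore invariant, and $\rho(u[x])=\rho(x)$ because $\rho$ is obtained from that average by a fixed scalar function.

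One check is needed here. The invariant inner product gives $\langle x_c,x_c\rangle=x_c'^2+x_{c,1}^2+x_{c,2}^2+x_{c,12}^2\ge 0$, since it drops every $e_0$-component. So the quantity under the square root is at least $\varepsilon>0$, and $\rho$ is well defined and finite.

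\textbf{Step 2: apply the gating argument.} The identity map on each channel is trivially equivariant, and Step 1 shows the factor $\rho$ is invariant. Lemma 1 is stated for a single multivector, but its proof only uses that $\rho$ is an invariant scalar and that $f$ is equivariant. The same proof therefore goes through verbatim when $\rho$ depends on the whole array and $f$ is the projection onto channel $c$. This gives
\begin{equation*}
\textrm{LayerNorm}(u[x])_c=\rho(u[x])\,ux_cu^{-1}=u\bigl(\rho(x)x_c\bigr)u^{-1}.
\end{equation*}
The last equality uses that the real scalar $\rho(x)$ commutes with the geometric product.

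\textbf{Step 3: extend to all of $\SE2$.} Steps 1 and 2 are proved only for pure rotations and pure translations. A general roto-translation is a product $u=tr$, and $(tr)x(tr)^{-1}=t(rxr^{-1})t^{-1}$. Composing the two cases therefore gives equivariance for every $u\in\SE2$.

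The main obstacle is the bookkeeping in Step 2, not any computation. Lemma 1 is phrased for $\GA\to\GA$ maps with $\rho$ a function of the same multivector, whereas LayerNorm couples channels through $\rho$. The fix is to restate Lemma 1 for functions on $\GA^C$ and note that its proof is unchanged.
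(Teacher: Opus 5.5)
Your proposal is correct and takes the same route as the paper, whose proof is the one line ``This follows from Lemma 1 and Lemma 2.'' Beyond that, you make explicit three points the paper leaves implicit: the channel coupling through $\rho$, which requires restating Lemma 1 on $\GA^C$; the nonnegativity of $\langle x_c,x_c\rangle$, which keeps the denominator well defined; and the extension from pure rotations and translations to general roto-translations by composition.
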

\begin{proof}
This follows from Lemma 1 and Lemma 2.
\end{proof}

\begin{proposition}
Multivector attention is \SE2-equivariant.
\end{proposition}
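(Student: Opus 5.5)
Multivector attention maps query, key and value tokens $q_i,k_j,v_j\in\GA^C$, together with auxiliary scalars $q^s_i,k^s_j$ and scalar values, to outputs $o_i=\sum_j \mathrm{softmax}_j(\alpha_{ij})\,v_j$, with logits $\alpha_{ij}$ given by \refeq{eq:attention}. Queries, keys and values are produced from the inputs by the equivariant linear layers of Proposition 5, so they are equivariant functions of the input. The plan is to show two things: each logit $\alpha_{ij}$ is \SE2-invariant, and the output is a combination of equivariant values with invariant weights, hence equivariant by Lemma 1 and linearity of $u[\cdot]$.

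\textbf{Step 1 (invariance of the logits).} The logit has three terms, which I will treat separately.
\begin{itemize}
\item The multivector inner-product term $\sum_c\langle q_c,k_c\rangle$ is invariant by Lemma 2, applied channelwise to $u[q_c],u[k_c]$.
\item The scalar term $\sum_c q^s_ck^s_c$ is invariant because the auxiliary scalars are invariant by construction. This rests on the invariant adapter argument and on the scalar inputs being invariant features.
\item The distance-aware term $\phi(q_c)\cdot\psi(k_c)$ needs a direct check, using formulas (21) and (22) for how the bivector components transform.
\end{itemize}

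\textbf{Step 2 (distance-aware term).} Under a rotation, $q_{12}$ is fixed and $(q_{01},q_{20})$ rotates by $\theta$; the same holds for $k$. So $q_{01}^2+q_{20}^2$ and $k_{01}^2+k_{20}^2$ are invariant, and the cross term $q_{01}k_{01}+q_{20}k_{20}$ is a 2D dot product of two vectors rotated by the same angle, hence invariant. Expanding $\phi(q)\cdot\psi(k)$ gives a factor $\frac{q_{12}k_{12}}{(q_{12}^2+\varepsilon)(k_{12}^2+\varepsilon)}$ times
\[-q_{12}^2(k_{01}^2+k_{20}^2)-k_{12}^2(q_{01}^2+q_{20}^2)+2q_{12}k_{12}(q_{01}k_{01}+q_{20}k_{20}),\]
and every piece of this is rotation invariant.

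Under a translation by $(a,b)$, $q_{12}$ is fixed and $q_{01}\mapsto q_{01}+bq_{12}$, $q_{20}\mapsto q_{20}+aq_{12}$. The bracket equals $-\|k_{12}\,\vec q-q_{12}\,\vec k\|^2$, where $\vec q=(q_{01},q_{20})$ and $\vec k=(k_{01},k_{20})$. Under the translation, $k_{12}\vec q-q_{12}\vec k$ shifts by $k_{12}q_{12}(b,a)-q_{12}k_{12}(b,a)=0$, so the bracket is translation invariant. This completing-the-square identity is the main obstacle; I would verify it carefully by expanding.

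\textbf{Step 3 (equivariance of the output).} The softmax weights $w_{ij}$ are functions of invariant logits only, so they are invariant. For any operator $u$, linearity of the sandwich product gives
\[\sum_j w_{ij}\,u[v_j]=u\Big[\sum_j w_{ij}v_j\Big].\]
This is Lemma 1 extended to finite sums. The scalar-valued outputs are invariant combinations of invariant values.

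The final output projection is an equivariant linear layer (Proposition 5), so the composition is equivariant. This holds uniformly for the cross-attention, per-timestep self-attention and causal temporal attention, since masking only restricts the index set $j$ and does not depend on the input geometry.
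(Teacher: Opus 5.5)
Your proof is correct and follows the same route as the paper's: the logits are invariant by Lemma~2, and Lemma~1 plus linearity of $x\mapsto uxu^{-1}$ then shows that the softmax-weighted sum of equivariant values is equivariant. It is more complete than the paper's proof, which only treats the logit $\sum_c\langle q_c,k_c\rangle/\sqrt{4C}$ and never checks the distance-aware term; your identity $\phi(q)\cdot\psi(k)=-\frac{q_{12}k_{12}}{(q_{12}^2+\varepsilon)(k_{12}^2+\varepsilon)}\|k_{12}\vec q-q_{12}\vec k\|^2$, with $\vec q=(q_{01},q_{20})$ and $\vec k=(k_{01},k_{20})$, is correct and settles that case.
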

\begin{proof}
By Lemma 2, attention logits
\begin{equation*}
    \frac{\sum_{c=1}^C\langle q_c,k_c\rangle}{\sqrt{4C}}
\end{equation*}
(and hence attention scores) are invariant. The result follows from applying Lemma 1 and observing that a sum of equivariant multivector features is equivariant.
\end{proof}
\label{sec:equivariance_proofs}

\section{Additional qualitative results}

\begin{figure}[b]
\centering
\begin{subfigure}[b]{0.325\textwidth}
\centering
\includegraphics[trim={25cm 20cm 10cm 20cm}, clip,width=\textwidth]{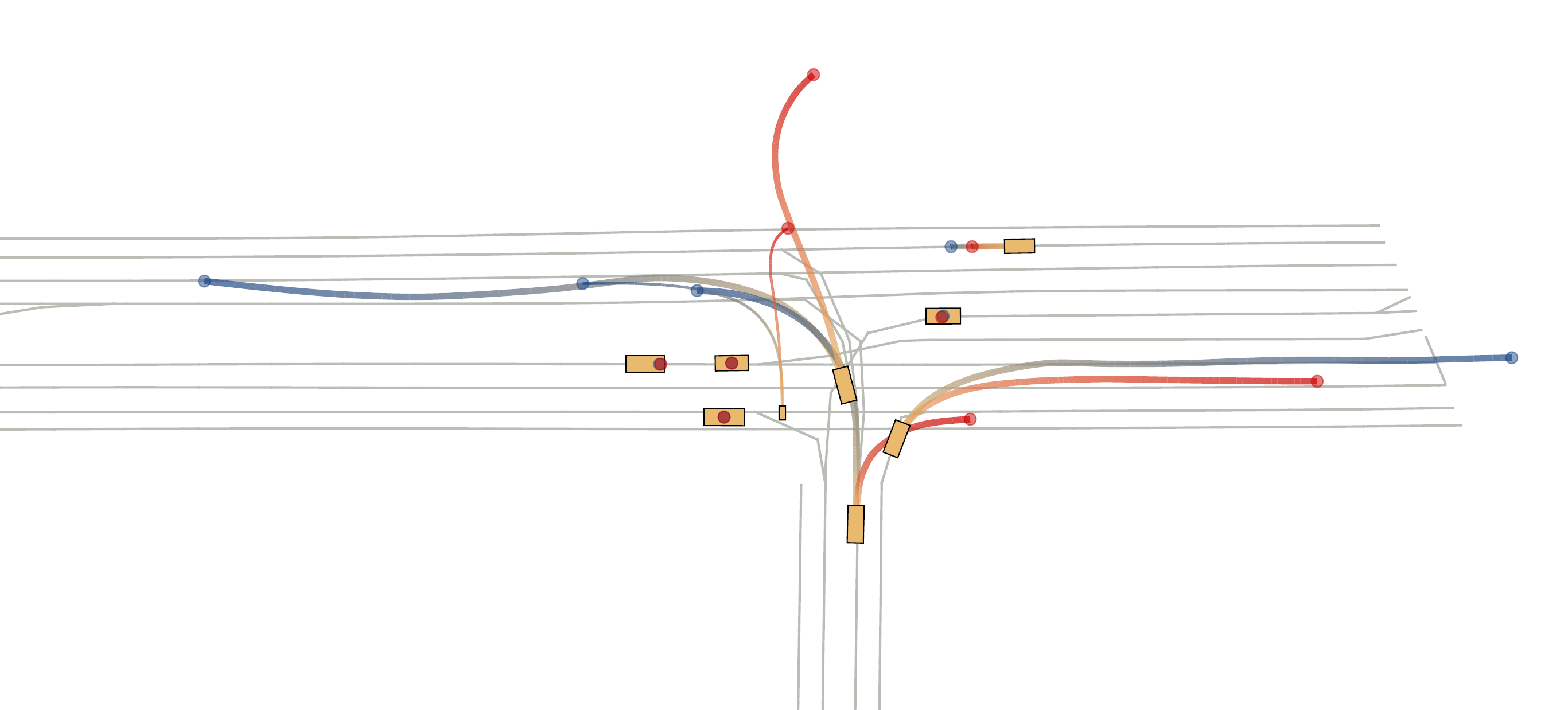}\\
\vspace{0.5cm}
\includegraphics[trim={10cm 20cm 20cm 25cm}, clip,width=\textwidth]{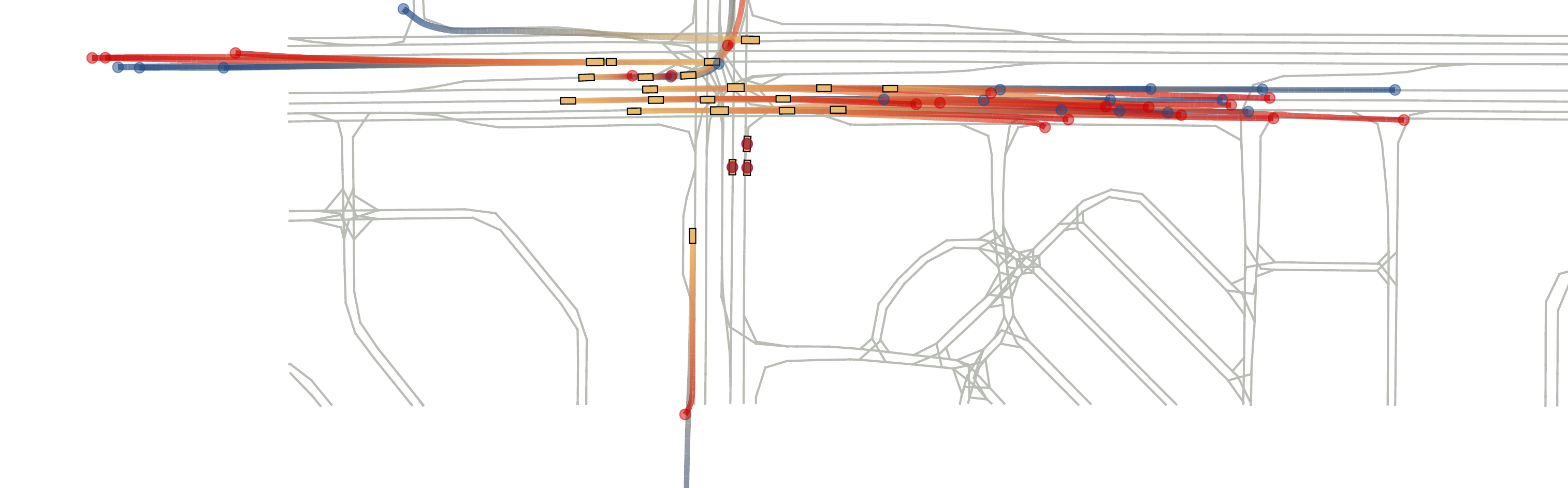}\\
\vspace{0.5cm}
\includegraphics[trim={10cm 20cm 20cm 20cm}, clip,width=\textwidth]{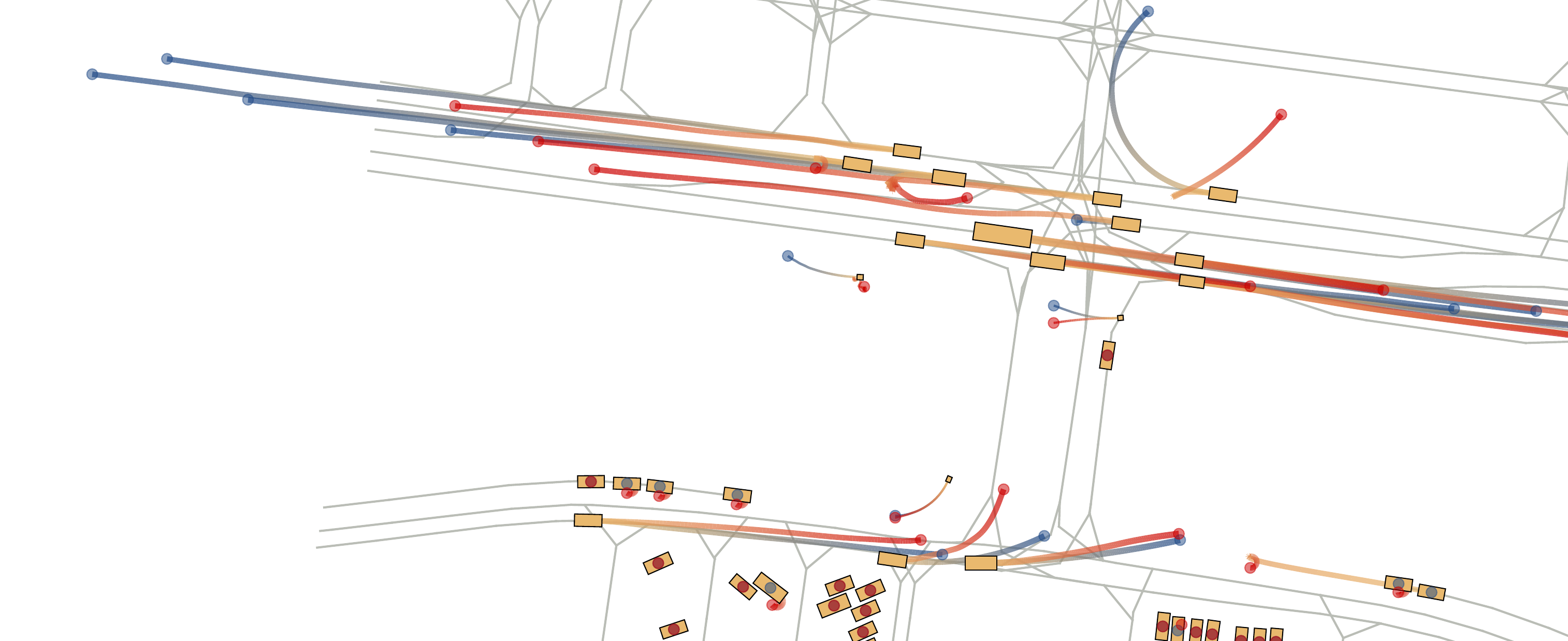}\\
\vspace{0.5cm}
\includegraphics[trim={10cm 20cm 20cm 20cm}, clip,width=\textwidth]{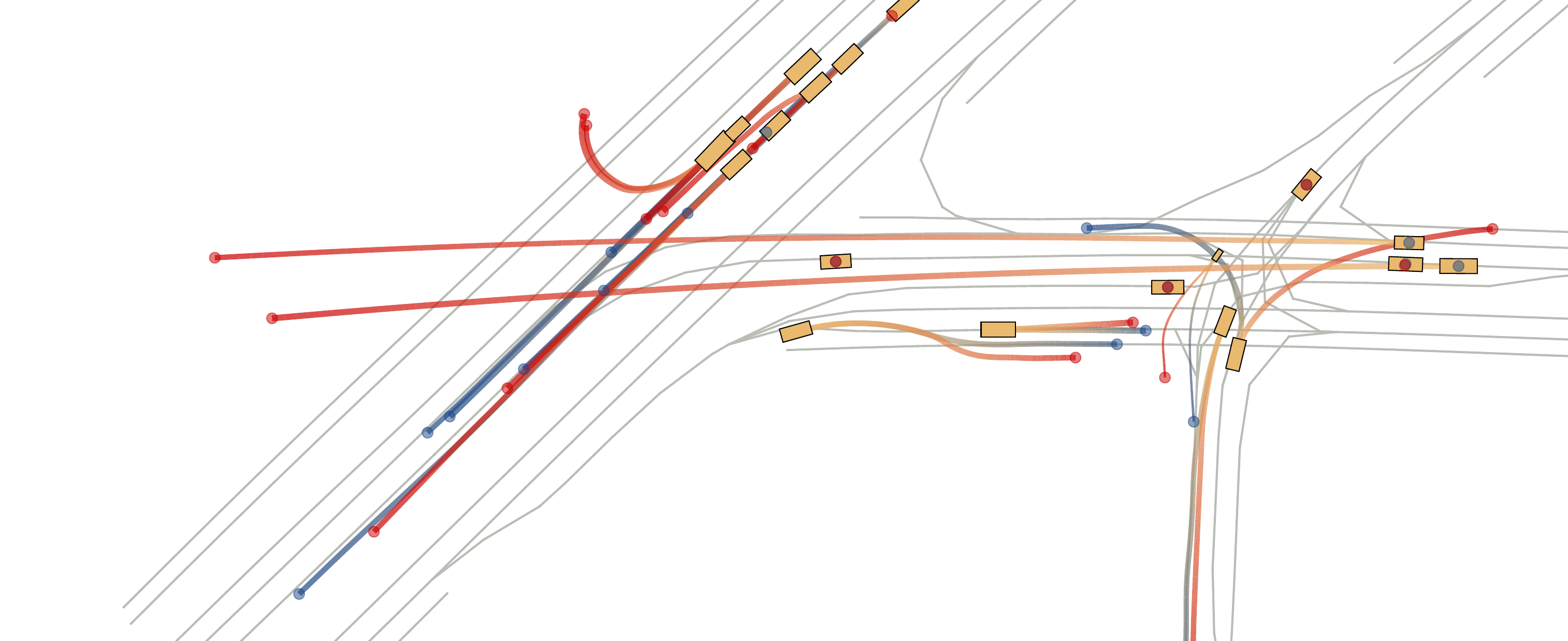}
\end{subfigure}
\begin{subfigure}[b]{0.325\textwidth}
\centering
\includegraphics[trim={25cm 20cm 10cm 20cm}, clip,width=\textwidth]{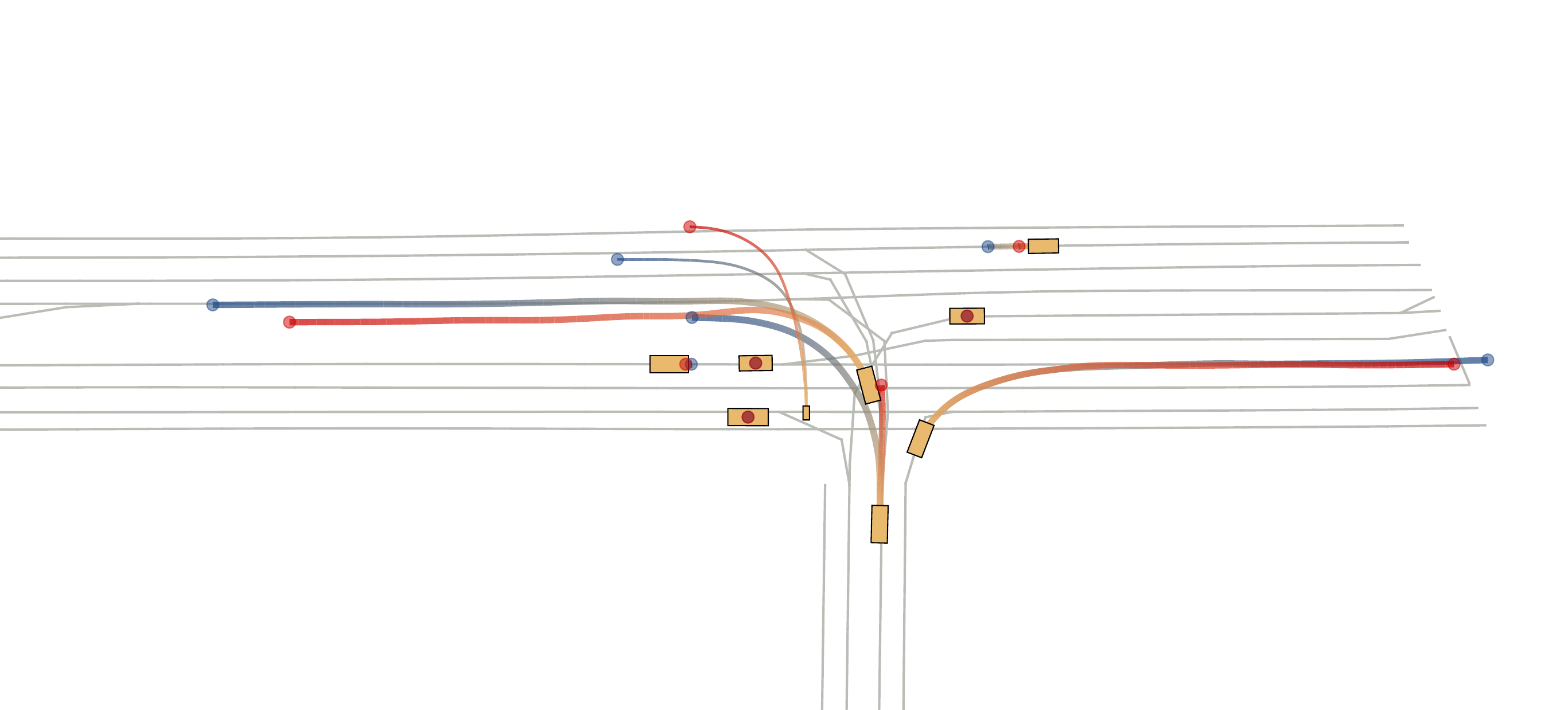}\\
\vspace{0.5cm}
\includegraphics[trim={10cm 20cm 20cm 25cm}, clip,width=\textwidth]{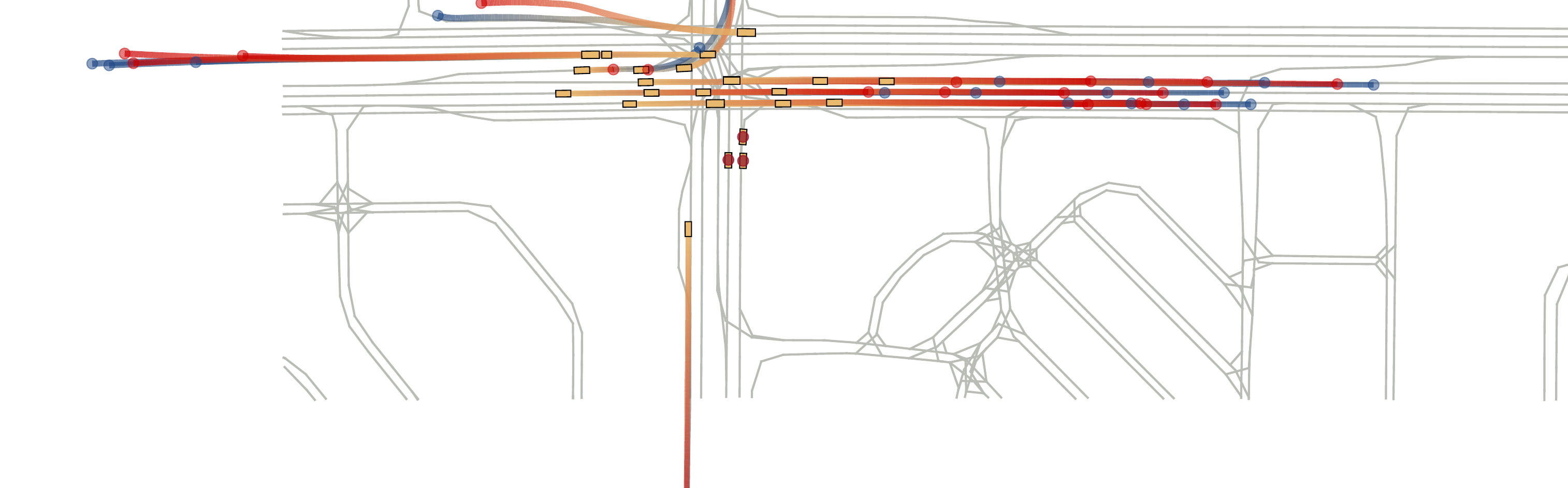}\\
\vspace{0.5cm}
\includegraphics[trim={10cm 20cm 20cm 20cm}, clip,width=\textwidth]{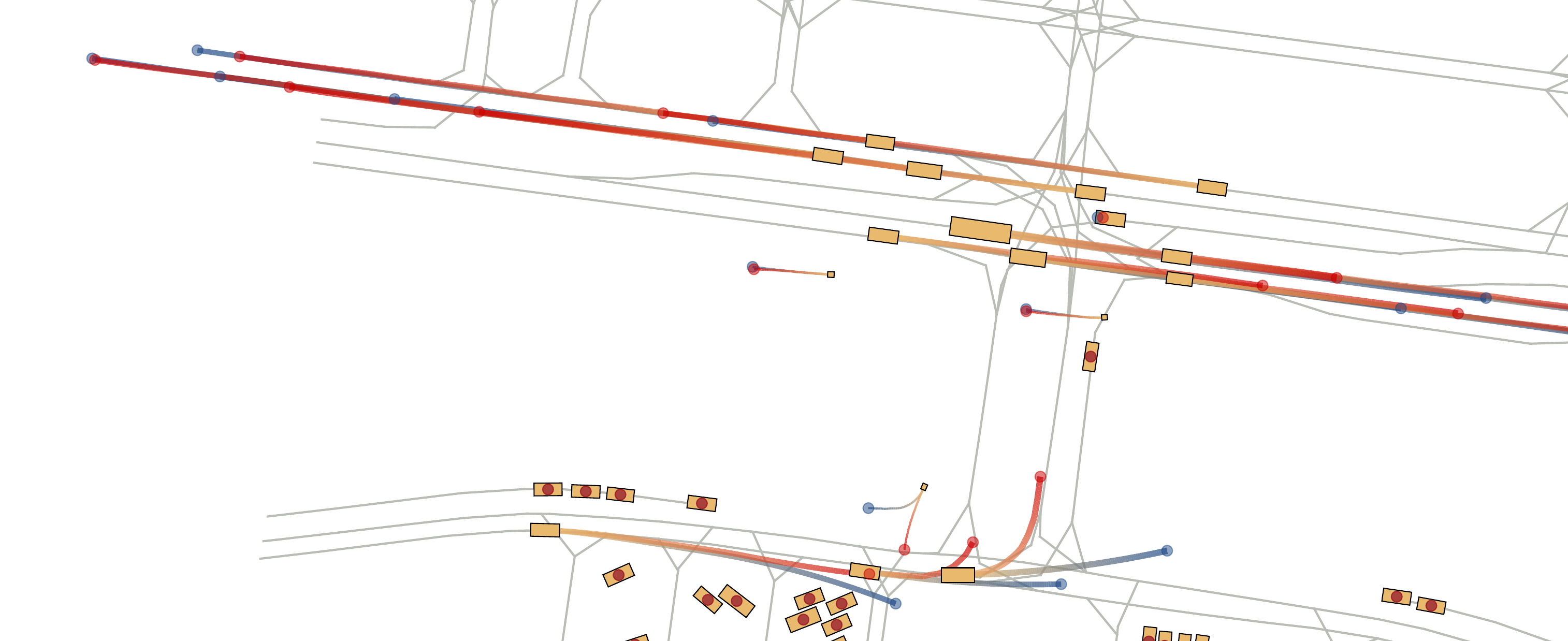}\\
\vspace{0.5cm}
\includegraphics[trim={10cm 20cm 20cm 20cm}, clip,width=\textwidth]{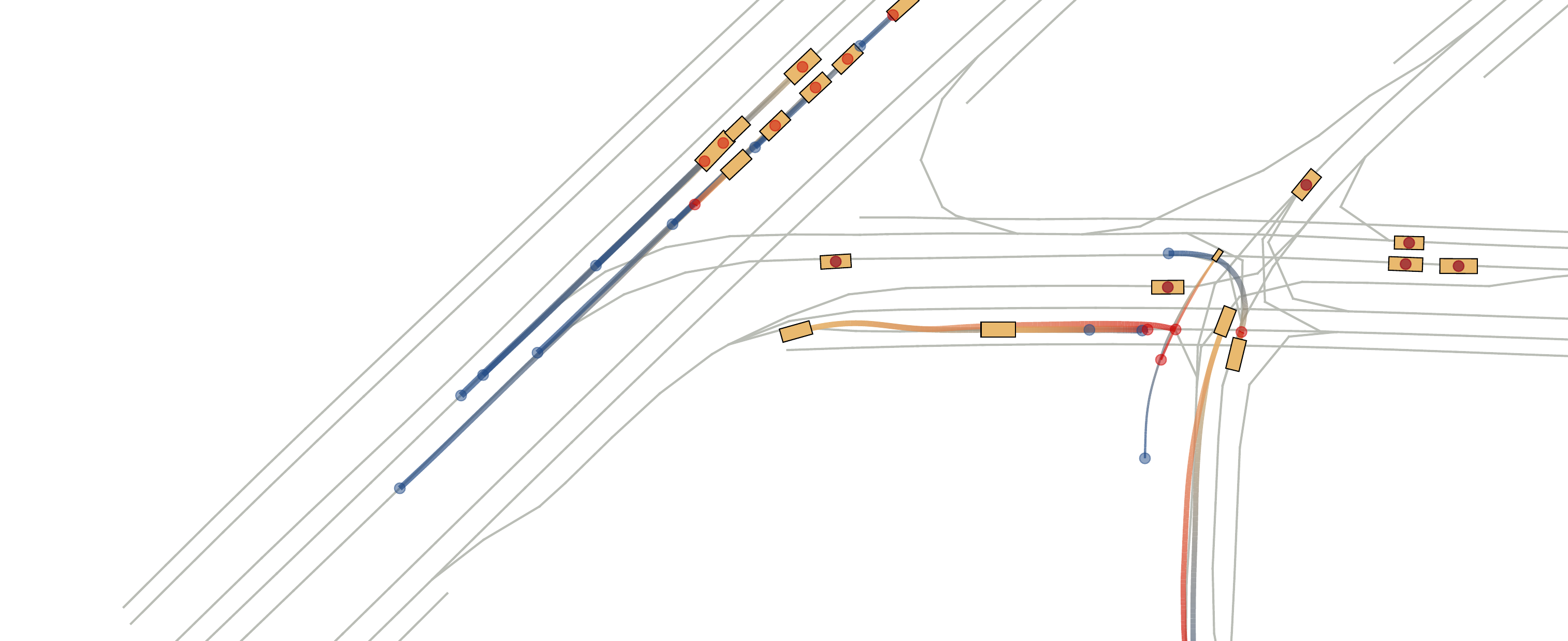}
\end{subfigure}
\begin{subfigure}[b]{0.325\textwidth}
\centering
\includegraphics[trim={25cm 20cm 10cm 20cm}, clip,width=\textwidth]{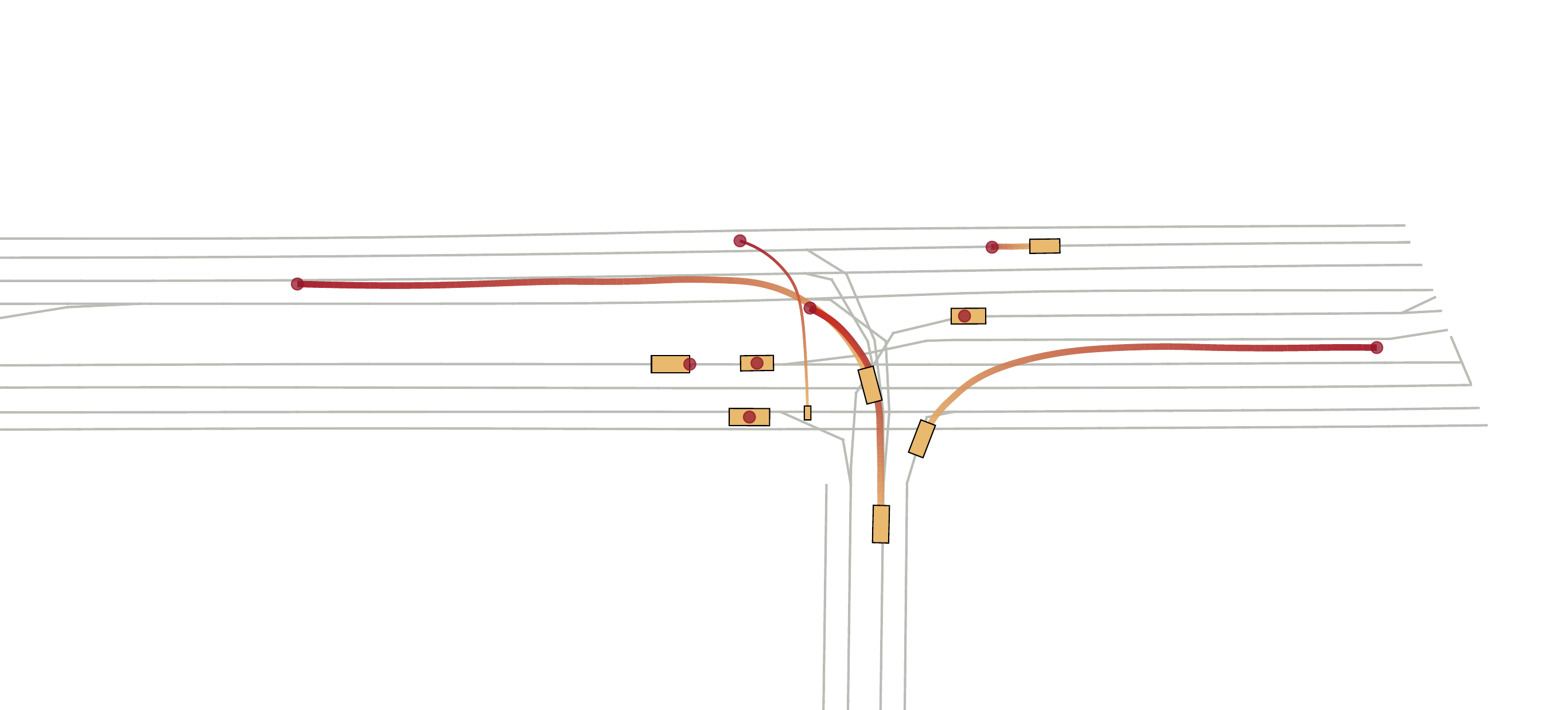}\\
\vspace{0.5cm}
\includegraphics[trim={10cm 20cm 20cm 25cm}, clip,width=\textwidth]{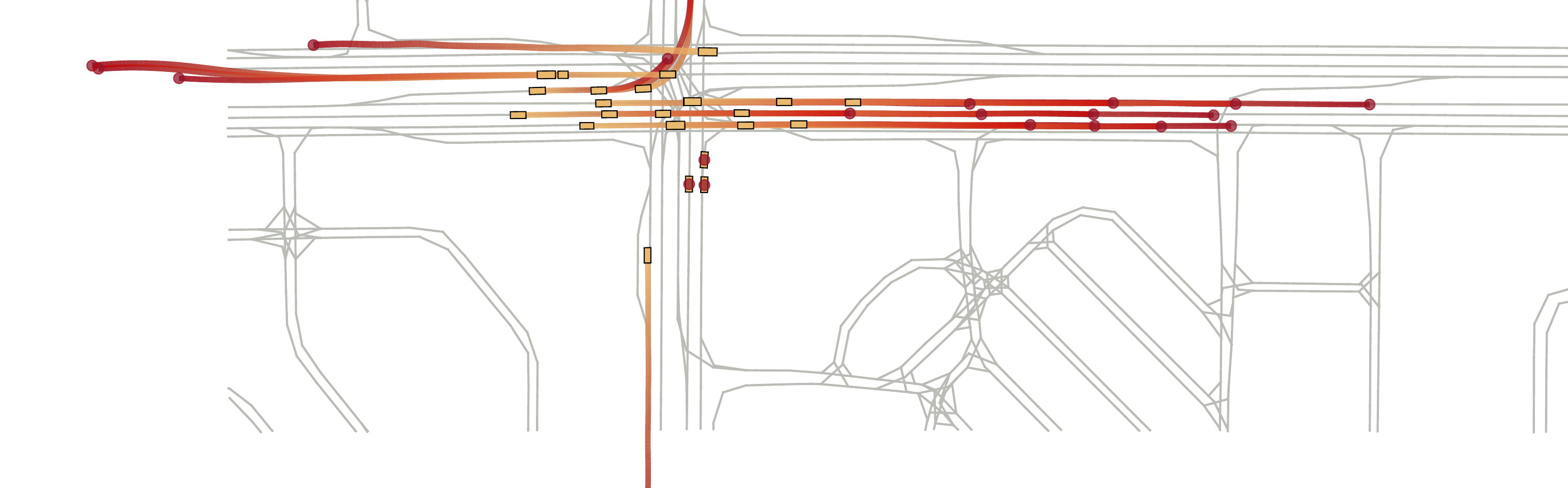}\\
\vspace{0.5cm}
\includegraphics[trim={10cm 20cm 20cm 20cm}, clip,width=\textwidth]{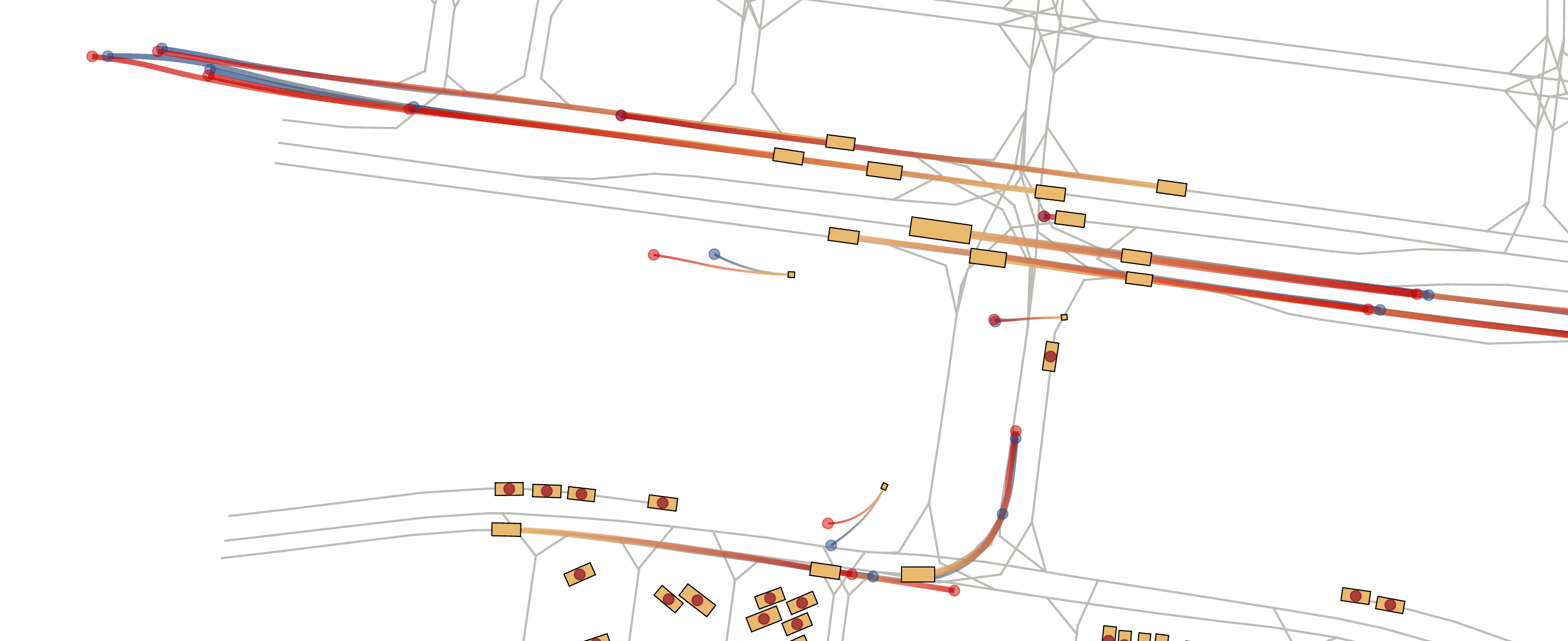}\\
\vspace{0.5cm}
\includegraphics[trim={10cm 20cm 20cm 20cm}, clip,width=\textwidth]{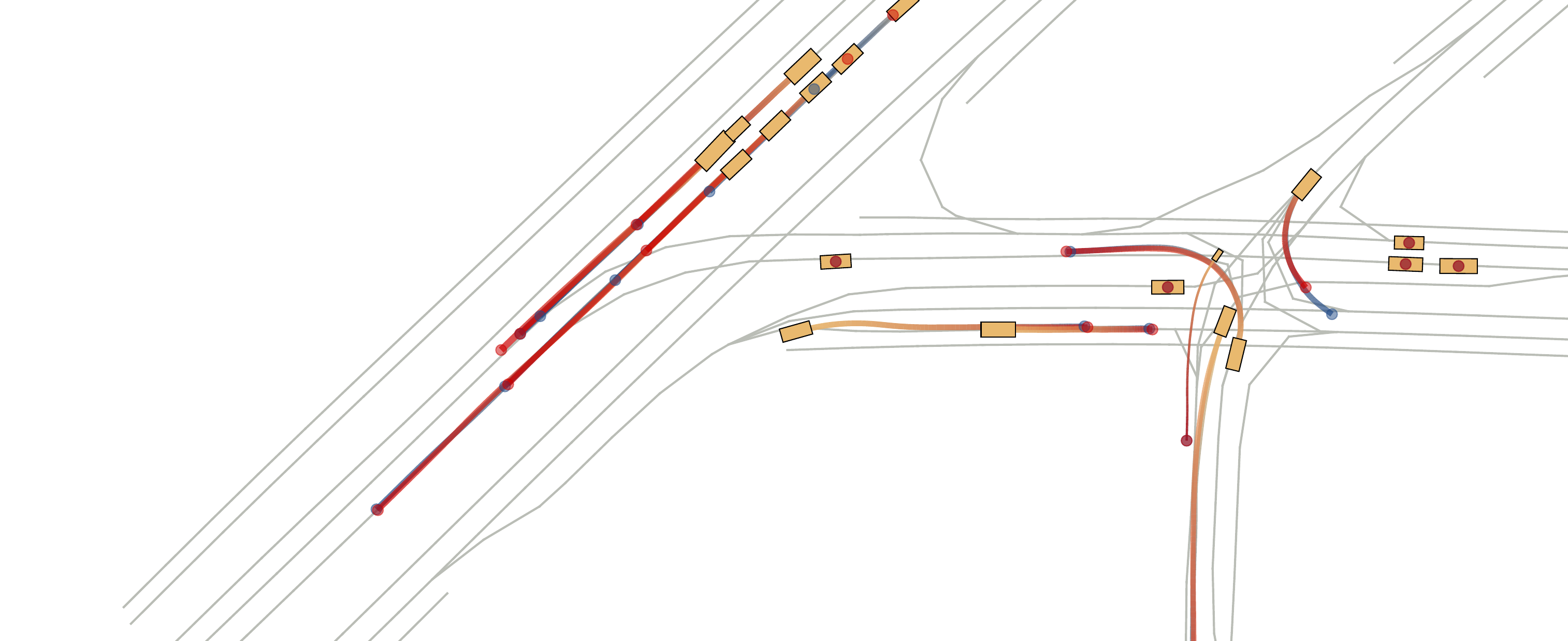}
\end{subfigure}
\caption{
\textbf{Robustness to roto-translations.}
Additional visualizations showing Transformer (\textbf{left}), Transformer + DRoPE (\textbf{middle}), and \ourmodel (\textbf{right})'s
robustness to roto-translations.
In each figure, we overlay rollouts from the original coordinate frame
vs one rotated by 90\textdegree and translated by 100m forward.
Blue trajectories visualize model predictions in the original input, and red visualize predictions in the transformed scene.
}
\label{fig:robustness}
\end{figure}

\begin{figure}[t]
\centering
\begin{subfigure}[b]{0.485\textwidth}
\centering
\includegraphics[trim={20cm 15cm 18cm 25cm}, clip,width=\textwidth]{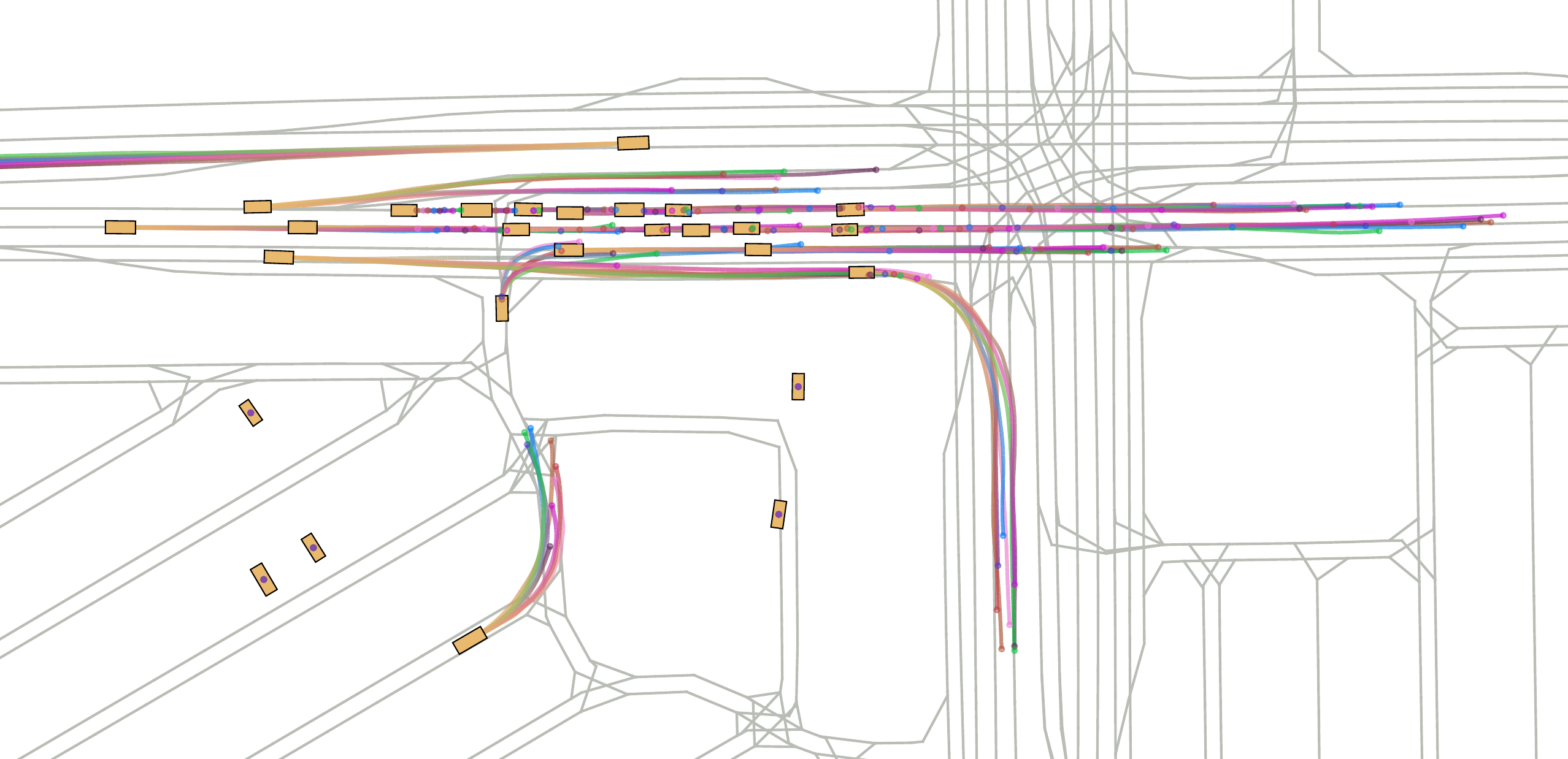}\\
\includegraphics[trim={22cm 22cm 15cm 18cm}, clip,width=\textwidth]{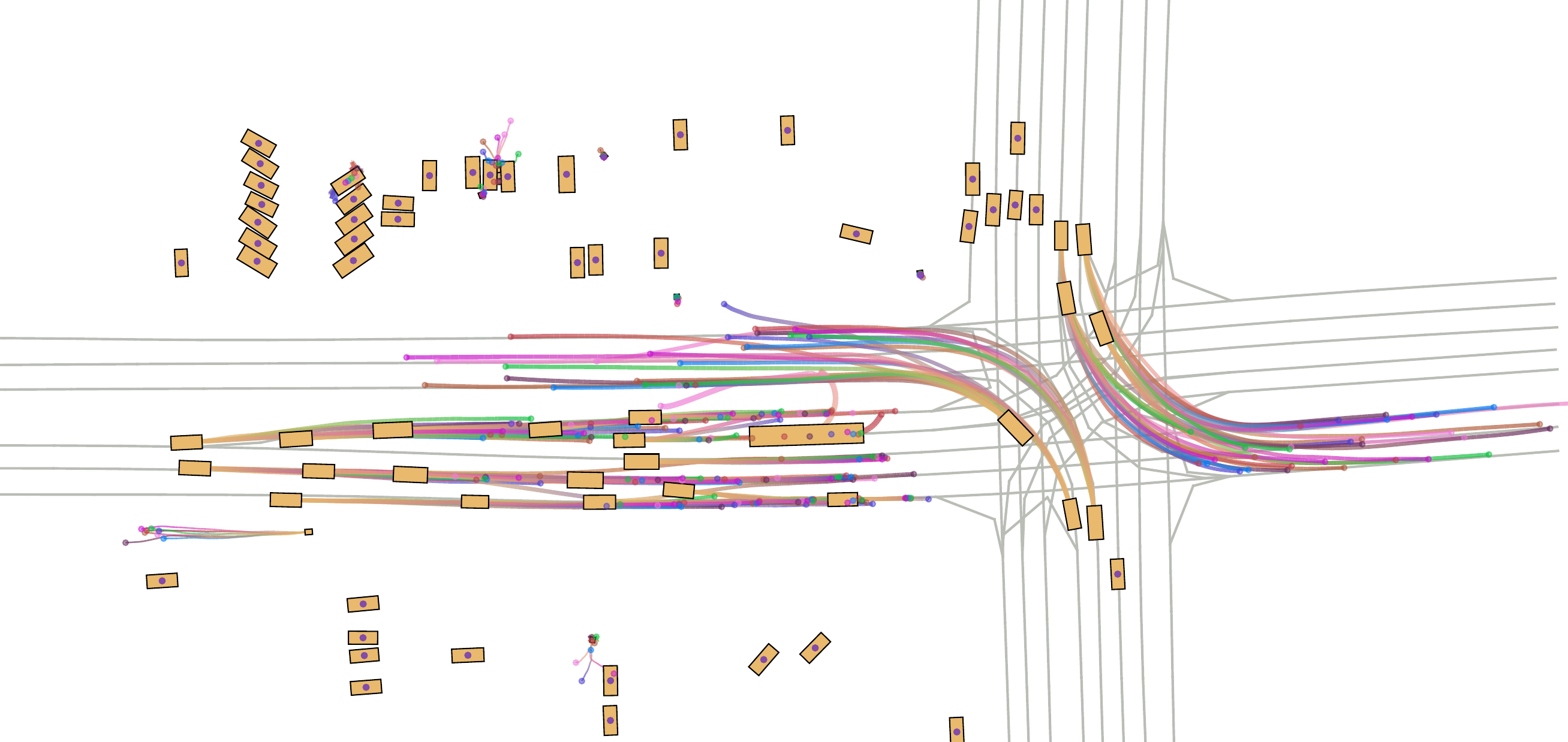}\\
\includegraphics[trim={22cm 10cm 15cm 10cm}, clip,width=\textwidth]{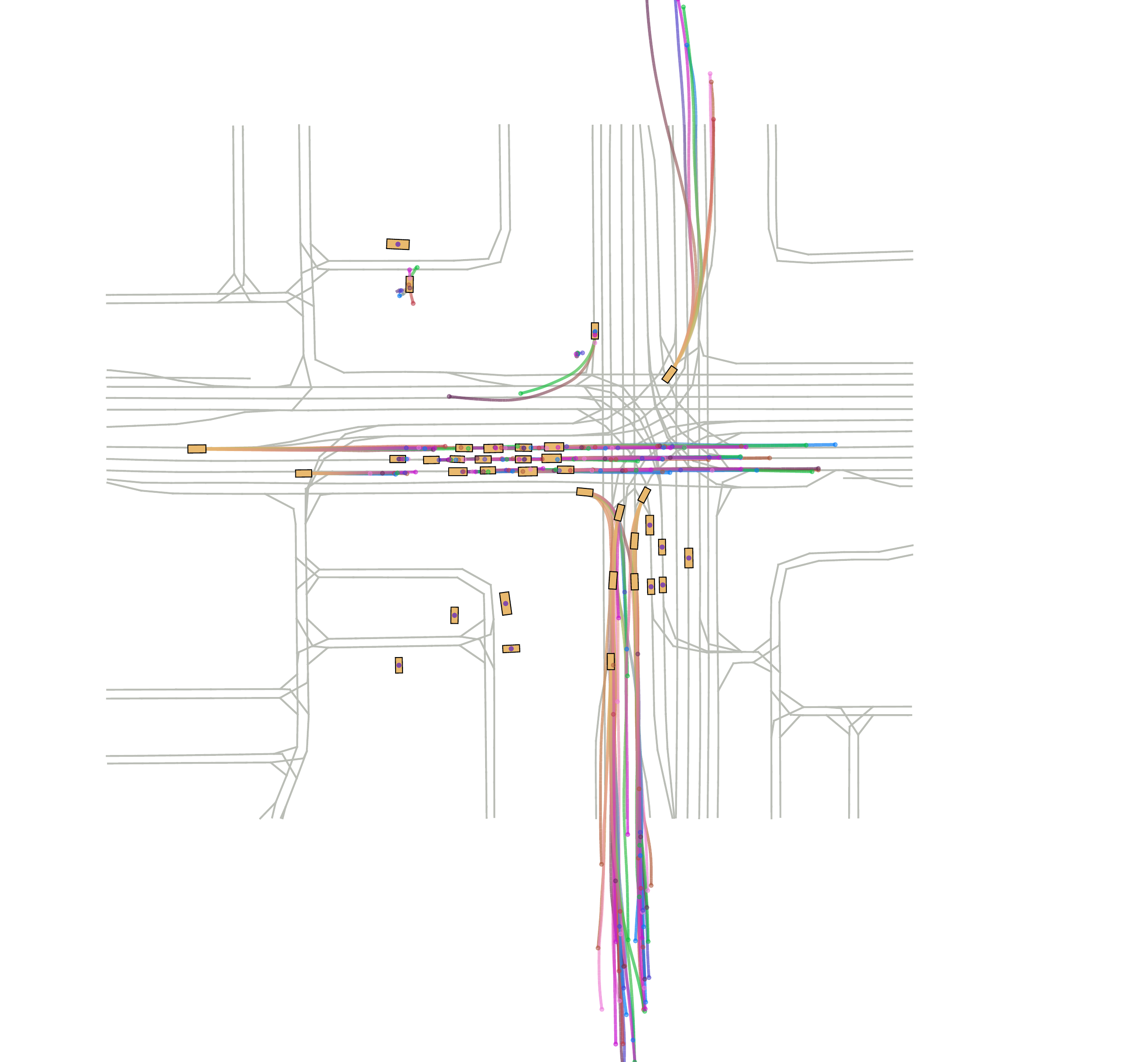}\\
\includegraphics[trim={22cm 15cm 15cm 15cm}, clip,width=\textwidth]{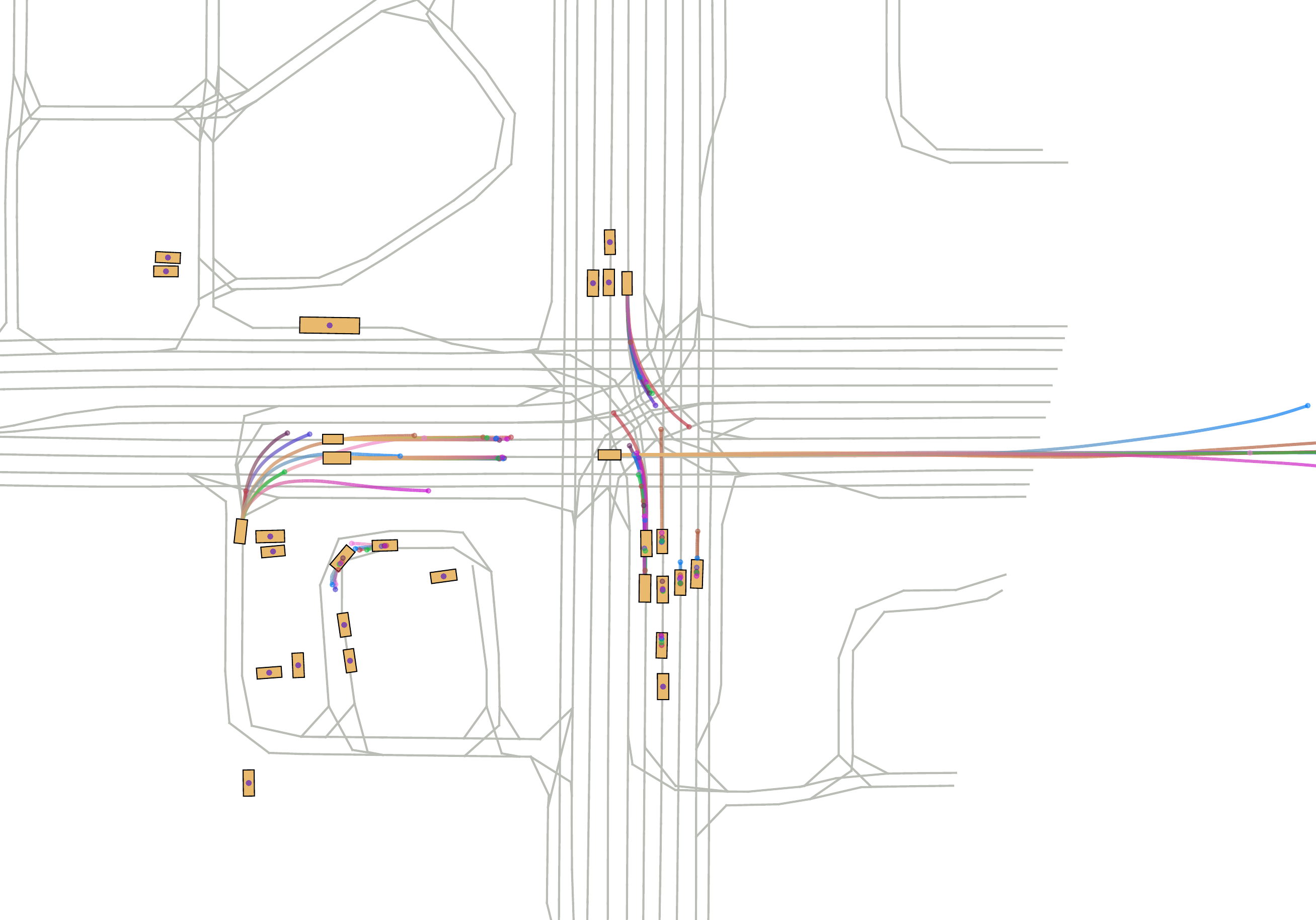}
\end{subfigure}
\begin{subfigure}[b]{0.485\textwidth}
\centering
\includegraphics[trim={25cm 15cm 15cm 20cm}, clip,width=\textwidth]{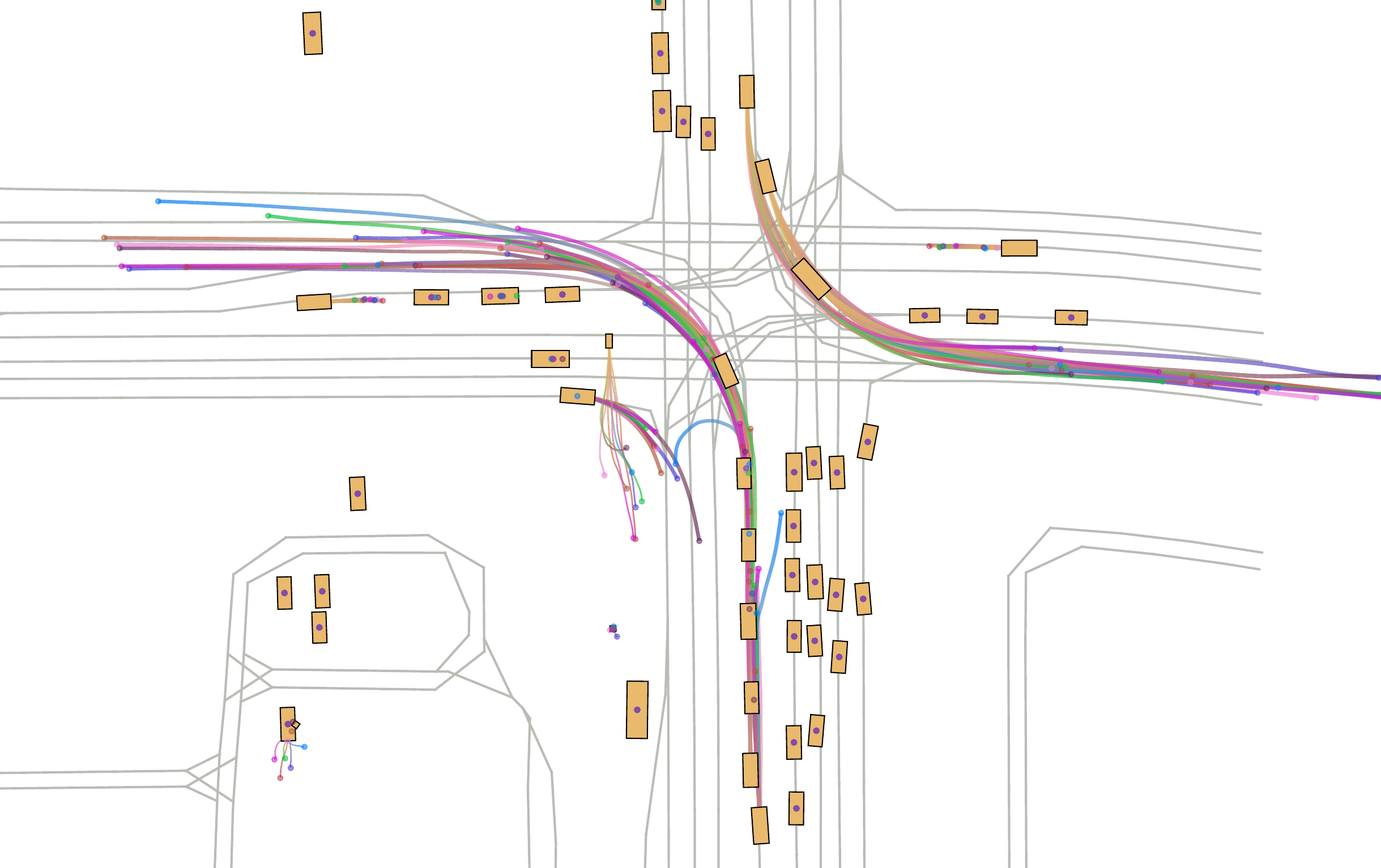}\\
\includegraphics[trim={25cm 15cm 12cm 15cm}, clip,width=\textwidth]{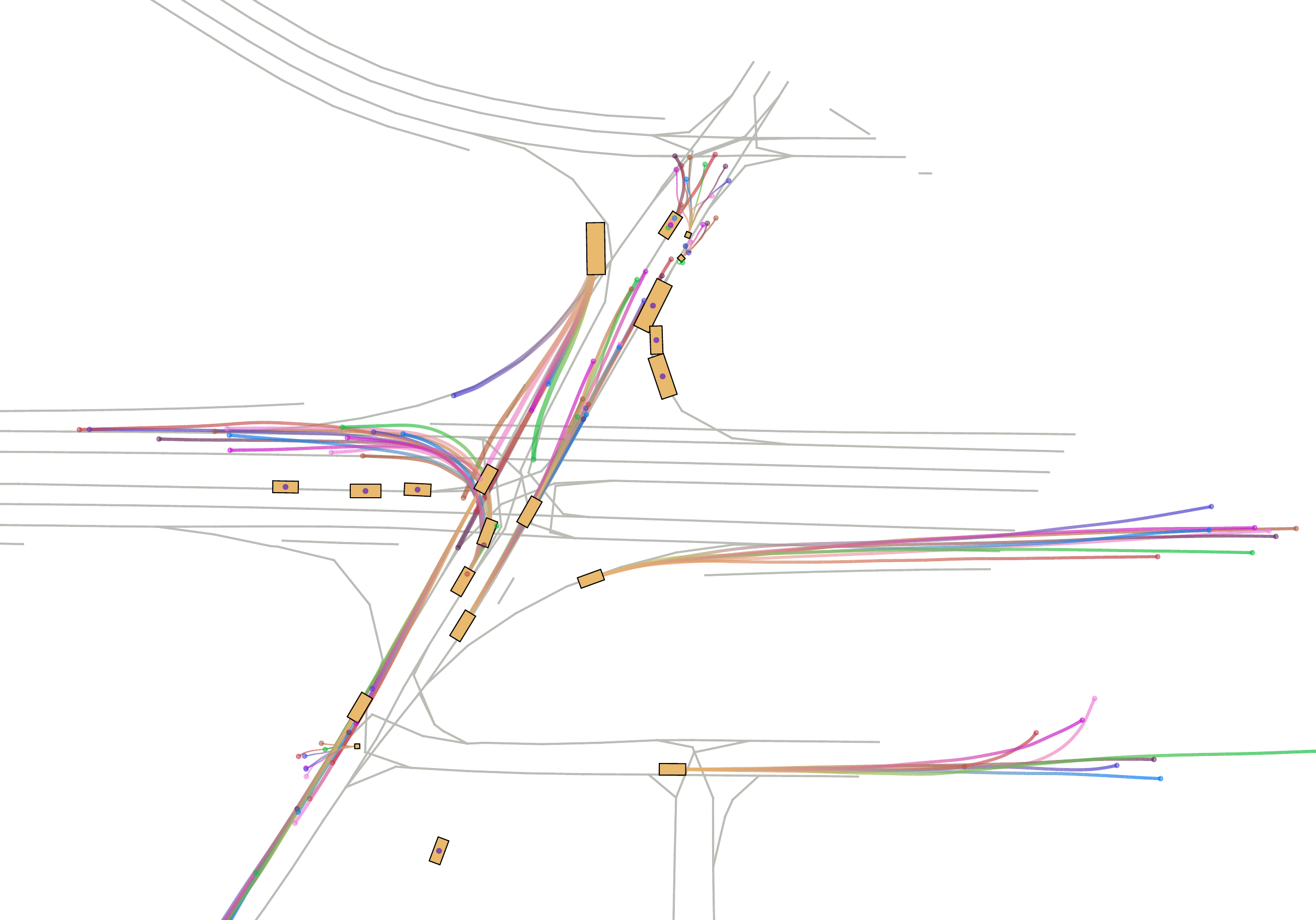}\\
\includegraphics[trim={28cm 15cm 15cm 25cm}, clip,width=\textwidth]{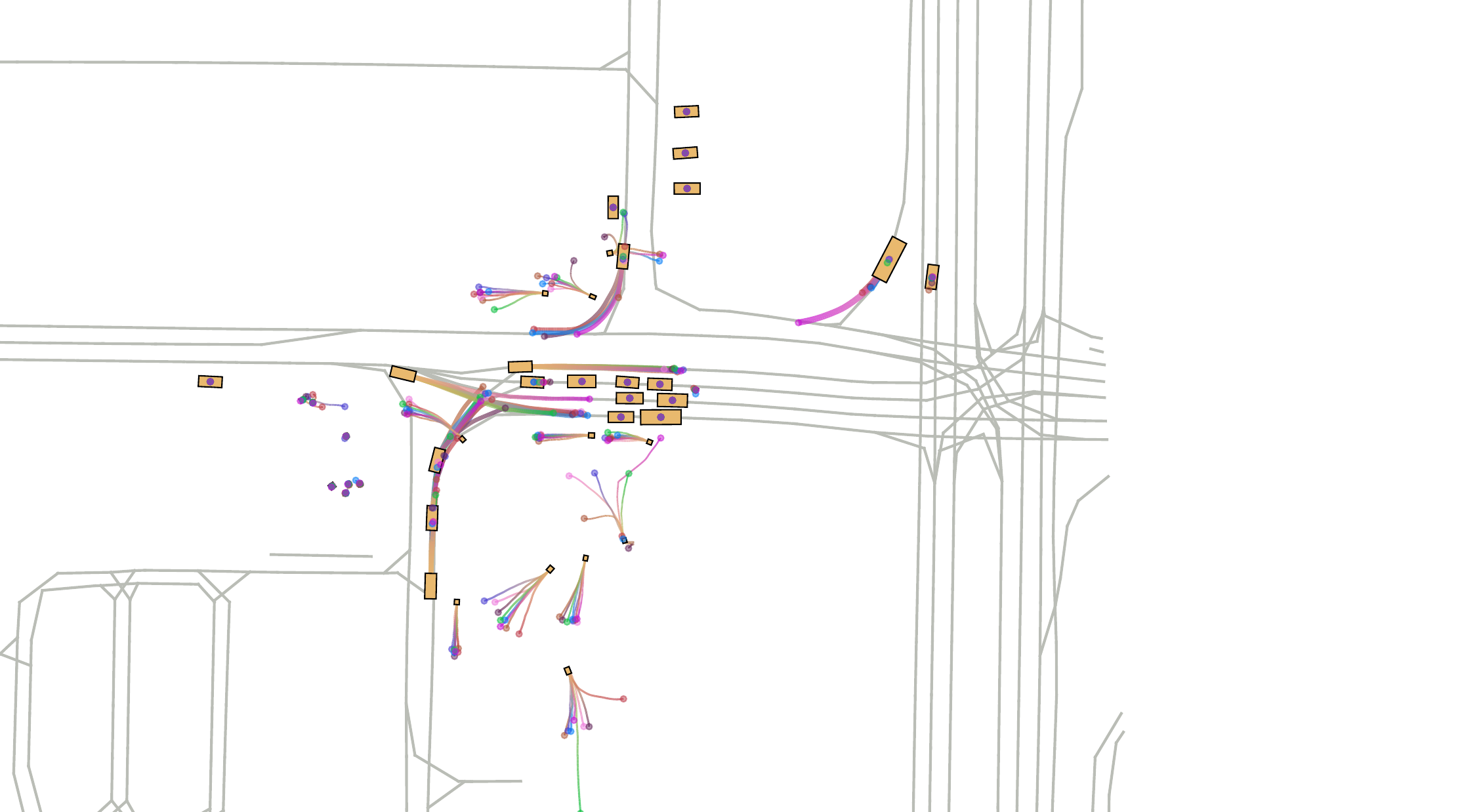}\\
\includegraphics[trim={28cm 25cm 20cm 25cm}, clip,width=\textwidth]{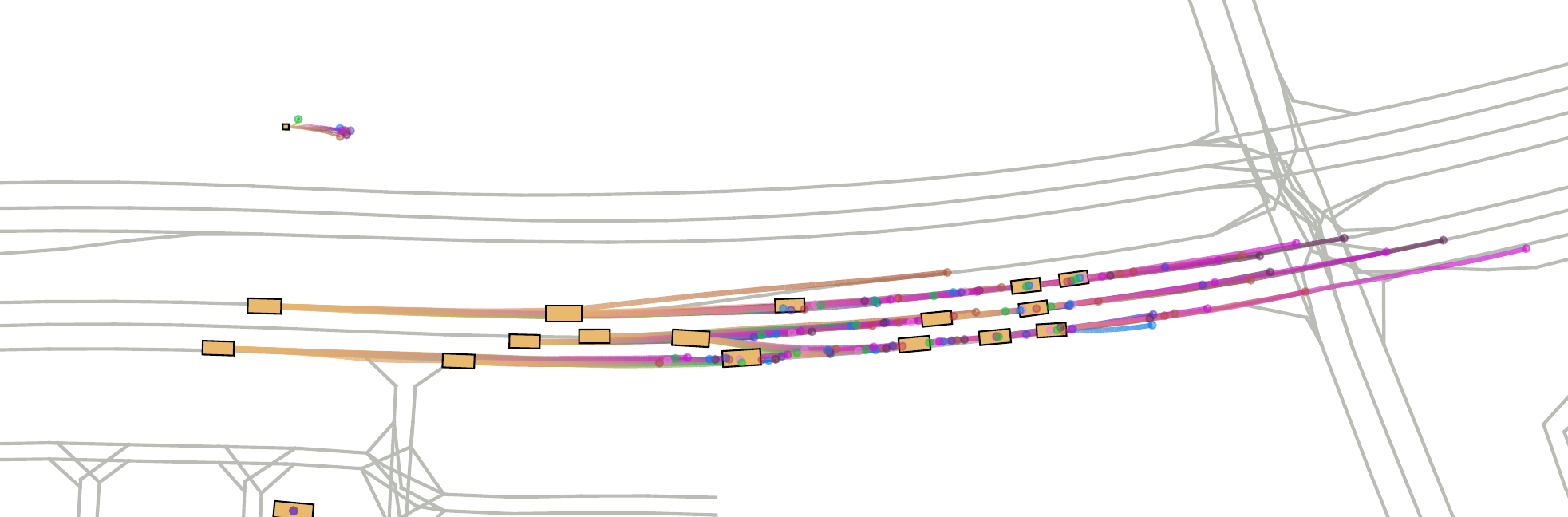}\\
\includegraphics[trim={28cm 25cm 20cm 25cm}, clip,width=\textwidth]{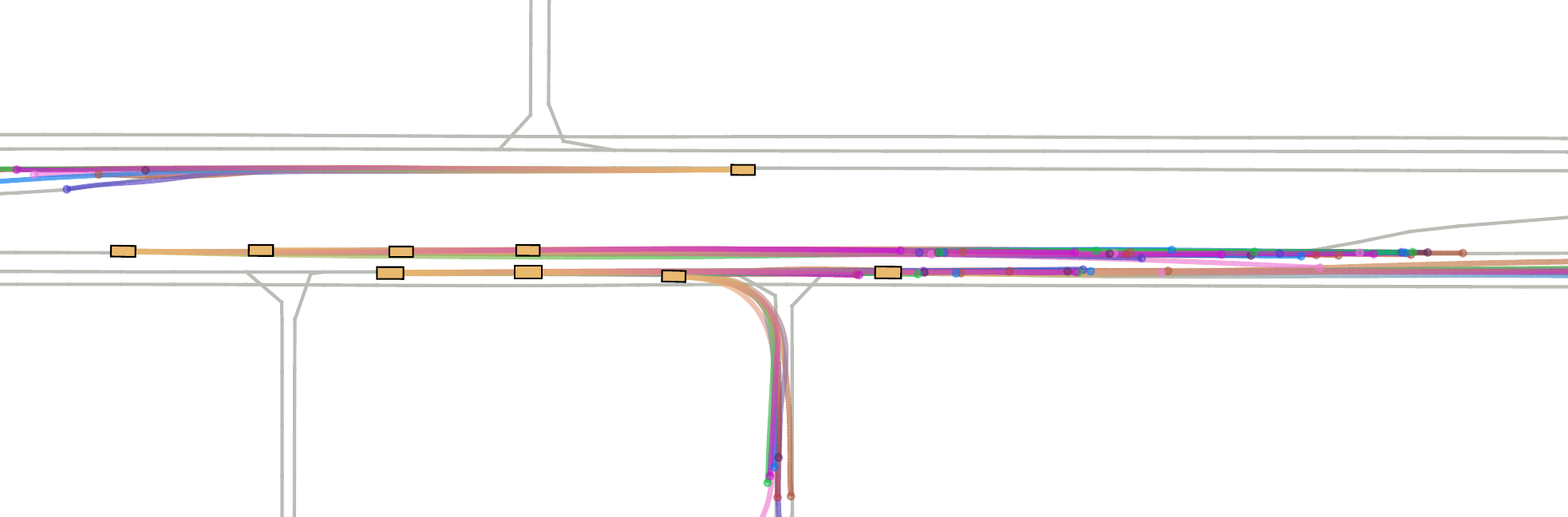}
\end{subfigure}
\caption{Visualizations of \textbf{multi-modal} \ourmodel rollouts. 
In each figure, we overlay trajectories from multiple ($k=8$) rollouts.
Different actors of the same color corresponds to the same rollout.}
\label{fig:multimodal}
\end{figure}

\reffig{fig:robustness} provides additional comparisons on models' robustness to roto-translations.
\reffig{fig:multimodal} shows the overlayed multi-modal trajectories of \ourmodel's rollouts.

\clearpage

\section{Additional details}
\subsection{Architecture details}

In this section, we provide algorithmic details of the \ourmodel architecture.
Multivector features are assumed to have shape $[\textrm{batch}\times\textrm{num\_tokens}\times\textrm{mv\_channels}\times8]$ where the last axis corresponds to the 8 components of each multivector.
Scalar features have shape $[\textrm{batch}\times\textrm{num\_tokens}\times\textrm{channels}]$.

\paragraph{Equivariant MLP.} Throughout the model, when multivector features pass through an equivariant layer, scalar features pass through the standard version of that layer.
Our equivariant MLP is analagous to a standard MLP, with the addition of the geometric bilinears.

\begin{algorithmic}[1]
\Function{GeometricBilinear}{w, x, y, z}
  \State gp $\gets$ batch\_geometric\_product(w, x)
  \State join $\gets$ dual(batch\_wedge\_product(dual(y), dual(z))) \\

  \LComment{$[\textrm{batch}\times\textrm{num\_tokens}\times(2*\textrm{mv\_channels})\times8]$}
  \State \Return concatenate\_channels(gp, join)
\EndFunction \\

\Function{EquivariantMLP}{multivector\_features, scalar\_features}
  \LComment{normalization}
  \State mv\_features $\gets$ EquivariantLayerNorm(multivector\_features)
  \State features $\gets$ LayerNorm(scalar\_features) \\

  \LComment{geometric bilinears}
  \State mv\_features $\gets$ EquivariantLinear(mv\_features)
  \State w, x, y, z $\gets$ mv\_features.split(num\_chunks = 4, dim = -2)
  \State mv\_features $\gets$ GeometricBilinear(w, x, y, z) \\

  \LComment{standard MLP}
  \State mv\_features $\gets$ EquivariantLinear(mv\_features)
  \State features $\gets$ Linear(features)
  \State mv\_features $\gets$ GatedRELU(mv\_features)
  \State features $\gets$ RELU(features)
  \State mv\_features $\gets$ EquivariantLinear(mv\_features)
  \State features $\gets$ Linear(features) \\

  \LComment{residual connection}
  \State \Return mv\_features + multivector\_features, features + scalar\_features
\EndFunction
\end{algorithmic}

\paragraph{Attention.} We show how to compute multivector attention, where the logits are
\begin{equation*}
    \frac{\sum_{c=1}^C\langle q_c,k_c\rangle+\sum_{c=1}^{C'}q^s_c k^s_c}{\sqrt{4C+C'}}.
\end{equation*}
Observe that a sum of dot products can be equivalently expressed as a longer dot product, e.g. for $u_1,v_1,u_2,v_2\in\mathbb{R}^n$, $u_1\cdot v_1+u_2\cdot v_2=\mathbf{u}\cdot\mathbf{v}$ where $\mathbf{u}=u_1|u_2,\mathbf{v}=v_1|v_2\in\mathbb{R}^{2n}$ are concatenations.
Therefore, we can use standard dot-product attention after concatenating the appropriate multivector components with the invariant scalars along the embedding dimension.
To incorporate distance awareness, one just needs to further concatenate the components of the distance awareness maps $\phi(\cdot)$ and $\psi(\cdot)$ to the queries and keys, respectively.

\begin{algorithmic}[1]
\Function{EquivariantAttention}{mv\_query, mv\_key, mv\_value, query, key, value, mask}
  \LComment{extract components for invariant inner product $\langle x,y\rangle=x_sy_s+x_1y_1+x_2y_2+x_{12}y_{12}$}
  \State $q_s, \_, q_1, q_2, \_, \_, q_{12}, \_ \gets$ mv\_query.split(dim = -1)
  \State $k_s, \_, k_1, k_2, \_, \_, k_{12}, \_ \gets$ mv\_key.split(dim = -1) \\

  \LComment{$[\textrm{batch}\times\textrm{num\_queries}\times(4*\textrm{mv\_channels}+\textrm{channels})]$}
  \State flattened\_q $\gets$ concatenate\_channels($q_s, q_1, q_2, q_{12}$, query) \\

  \LComment{$[\textrm{batch}\times\textrm{num\_keys}\times(4*\textrm{mv\_channels}+\textrm{channels})]$}
  \State flattened\_k $\gets$ concatenate\_channels($k_s, k_1, k_2, k_{12}$, key) \\

  \LComment{$[\textrm{batch}\times\textrm{num\_keys}\times(8*\textrm{mv\_channels}+\textrm{channels})]$}
  \State flattened\_v $\gets$ concatenate\_mv\_components\_and\_scalars(mv\_value, value) \\

  \LComment{$[\textrm{batch}\times\textrm{num\_queries}\times(8*\textrm{mv\_channels}+\textrm{channels})]$}
  \State flattened\_output $\gets$ scaled\_dot\_product\_attention(flattened\_q, flattened\_k, flattened\_v, mask) \\

  \LComment{$[\textrm{batch}\times\textrm{num\_queries}\times\textrm{mv\_channels}\times8]$, $[\textrm{batch}\times\textrm{num\_queries}\times\textrm{channels}]$}
  \State \Return unconcatenate\_mv\_components\_and\_scalars(flattened\_output)
\EndFunction \\

\Function{Attention}{mv\_q\_features, q\_features, mv\_kv\_features, kv\_features, mask}
  \LComment{normalization}
  \State mv\_q\_features $\gets$ EquivariantLayerNorm(mv\_q\_features)
  \State mv\_kv\_features $\gets$ EquivariantLayerNorm(mv\_kv\_features)
  \State q\_features $\gets$ LayerNorm(q\_features)
  \State kv\_features $\gets$ LayerNorm(kv\_features) \\

  \LComment{compute query, key, value tensors}
  \State mv\_query $\gets$ EquivariantLinear(mv\_q\_features)
  \State mv\_key $\gets$ EquivariantLinear(mv\_kv\_features)
  \State mv\_value $\gets$ EquivariantLinear(mv\_kv\_features)
  \State query $\gets$ Linear(q\_features)
  \State key $\gets$ Linear(kv\_features)
  \State value $\gets$ Linear(kv\_features) \\

  \State mv\_features, features $\gets$ EquivariantAttention(mv\_query, mv\_key, mv\_value, query, key, value, mask) \\

  \LComment{residual connection}
  \State \Return mv\_features + mv\_q\_features, features + q\_features
\EndFunction
\end{algorithmic}

\paragraph{\ourmodel.} Each \ourmodel block consists of three factorized attention layers followed by an equivariant MLP and an invariant adapter.
In the invariant adapter, we compute an operator for each actor which converts global poses into that agent's coordinate frame.
Specifically, if an actor has position $(x,y)$ and heading $\theta$, then the operator for that actor consists of a translation by $(-x,-y)$ followed by a rotation of angle $-\theta$.

\begin{algorithmic}[1]
\Function{\ourmodel}{mv\_features, features, mv\_map\_features, map\_features}
  \For{$i\gets1\ldots N$}
    \LComment{actor-to-map cross-attention}
    \State mv\_features, features $\gets$ Attention(mv\_features, features, mv\_map\_features, map\_features) \\

    \LComment{actor-to-actor self-attention (batched across timesteps)}
    \State mv\_features, features $\gets$ Attention(mv\_features, features, mv\_features, features) \\

    \LComment{actor-to-time causal self-attention (batched across actors)}
    \State mv\_features, features $\gets$ Attention(mv\_features, features, mv\_features, features, CAUSAL\_MASK) \\

    \LComment{MLP}
    \State mv\_features, features $\gets$ EquivariantMLP(mv\_features, features) \\

    \LComment{invariant adapter}
    \State $u\gets$ operator for each actor \textit{// $[\textrm{batch}\times\textrm{num\_actors}\times8]$, duplicated across channels}
    \State features $\gets$ MLP(flatten\_components($u*\textrm{mv\_features}*u^{-1}$)) + features
  \EndFor \\

  \LComment{output invariant features only}
  \State \Return features
\EndFunction

\end{algorithmic}

\subsection{Baselines}
We provide implementation details for the Transformer, Transformer + DRoPE, and Transformer + RPE baselines.
All three baselines are edited from the original \ourmodel architecture to minimalize differences other than the ablated module.
In particular, we first set the multi-vector dimension to $0$, effectively make \ourmodel a standard transformer without positional encodings.
We next add in changes specific to each baseline, detailed below.

\subsection{Transformer}
In addition the invariant features of each agent,
we additonally include agent poses ($x, y, \theta$) in the global coordinate frame to the inputs of the initial MLP that encodes the auxiliary scalars.

\subsection{Transformer + DRoPE}
At the time of submission, there is no publicly released codes nor implementation details of DRoPE~\citet{drope}.
We hence faithfully reproduce it in our setup.
In particular, we use the ``intra-head'' formulation, splitting the positional encodings of $x, y$ and rotational $\theta$ separately by attention heads.
For the 2D translational part of the rotary encodings, we use a learnable, mixed frequency setup~\citet{ropevit}.
Notably, we found it neccessary to uplift the dimention of query and key by a factor of $4$ before attention -- otherwise, each head only has $4$ dimensions to be rotary-encoded which is too small.

\subsection{Transformer + RPE}
At each attention block, we additionally use an MLP to model the pairwise differences.
We use the GoRELA~\citet{gorela} encoder to encode the pairwise pose features, and broad-cast add them to the keys and values before applying attention:

\begin{align}
    k^{ij}_{rpe}, v^{ij}_{rpe} &= \text{GoRELA}(x^i, y^i, \theta^i, x^j, y^j, \theta^j) \\
    k &\leftarrow k + k_{rpe} \\
    v &\leftarrow v + v_{rpe}
\end{align}

\section{Latency}
\begin{table}[h]
\centering
\begin{tabular}{l c c c c}
\toprule
Method & \makecell{Training latency} & \makecell{Training peak memory} & \makecell{Inference latency} & \makecell{\SE2 equivariant?} \\
\cmidrule(lr){1-1}\cmidrule(lr){2-5}
Transformer & 196.86 ms & 3.28 GB & 42.64 ms & $\color{red}\times$ \\
Transformer + RPE & 328.70 ms & 15.19 GB & 82.48 ms & $\color{green}\checkmark$ \\
Transformer + DRoPE & 195.51 ms & 5.08 GB & 46.03 ms & $\color{red}\times$ \\
\ourmodel-3M & 264.34 ms & 6.35 GB & 58.01 ms & $\color{green}\checkmark$ \\
\bottomrule
\end{tabular}
\caption{Peak memory and latency comparisons of the baselines and \ourmodel.
All methods in the same group use the same context length.
The training latency times the combined forward and backward passes.
The inference latency are averaged over the entire 8s rollout.
Both training and inference use batch size = 8.
All entries are evaluated a single A5000 with fixed random seed.
}
\label{table:memory_footprint}
\end{table}

We use FLOPs as the main compute measurement since FLOPs are hardware agnostic, but provide additional memory footprint and latency comparisons in Table \ref{table:memory_footprint}.
Compared to a fully invariant baseline (Transformer + RPE), \ourmodel is faster and more memory efficient.
For a fixed batch size, \ourmodel has lower throughput than the non-\SE2 equivariant baselines.
However, we demonstrate that \ourmodel achieves the superior Pareto front in Figure \ref{fig:scaling} of the main manuscript by fixing the resources, training each model on the same device and maxing out the batch size for each model, which we believe makes a fair and sufficient comparison of Pareto fronts.

\end{document}